\documentclass[sigconf]{acmart}

\AtBeginDocument{%
  }

\usepackage{algorithm}
\usepackage{algpseudocode}
\usepackage{multirow} 
\usepackage{booktabs}
\usepackage{siunitx}
\usepackage[table]{xcolor}
\usepackage{subcaption}

\copyrightyear{2026}
\acmYear{2026}
\setcopyright{cc}
\setcctype{by}
\acmConference[KDD '26]{Proceedings of the 32nd ACM SIGKDD Conference on Knowledge Discovery and Data Mining V.2}{August 09--13, 2026}{Jeju Island, Republic of Korea}
\acmBooktitle{Proceedings of the 32nd ACM SIGKDD Conference on Knowledge Discovery and Data Mining V.2 (KDD '26), August 09--13, 2026, Jeju Island, Republic of Korea}
\acmDOI{10.1145/3770855.3817925}
\acmISBN{979-8-4007-2259-2/2026/08}

\providecommand{\orcidsite}[1]{#1}

\begin{document}

\title{On the Push-Based Asynchronous Federated Learning: A Bias-Correction Aggregation Approach}

\author{Jiahui Bai}
\affiliation{%
  \institution{School of Computer Technologies, RMIT University}
  \city{Melbourne}
  \state{VIC}
  \country{Australia}
  \orcid{0009-0000-8304-5455}
}
\email{jiahui.bai2@student.rmit.edu.au}

\author{Hai Dong}
\authornote{Corresponding author.}
\affiliation{%
  \institution{School of Computer Technologies, RMIT University}
  \city{Melbourne}
  \state{VIC}
  \country{Australia}
  \orcid{0000-0002-7033-5688}
}
\email{hai.dong@rmit.edu.au}

\author{A. K. Qin}
\affiliation{%
  \institution{School of Science, Computing and Engineering Technologies, Swinburne University of Technology}
  \city{Hawthorn}
  \state{VIC}
  \country{Australia}
  \orcid{0000-0001-6631-1651}
}
\email{kqin@swin.edu.au}

\renewcommand{\shortauthors}{Bai et al.}

\begin{abstract}
Asynchronous decentralized federated learning (ADFL) eliminates central coordination and global synchronization, making it attractive for large-scale and heterogeneous systems. However, frequent peer-to-peer communication, asynchronous updates on directed topologies, and non-IID data jointly lead to excessive communication overhead, biased aggregation and severe model drift.
We propose PushCen-ADFL, a communication-efficient ADFL framework that enables stable training under asymmetric communication and delayed client participation. 
PushCen-ADFL couples communication, aggregation, and local stabilization in a shared centroid representation space, forming a closed loop between compression and optimization.
Clients exchange centroid-form messages, apply average-preserving push-sum mixing to correct aggregation bias, and use a lightweight centroid regularization anchored in the same centroid space to mitigate drift under heterogeneity and staleness.
A bounded, sender-deduplicated buffer further improves robustness under irregular asynchronous arrivals.
Experiments on vision datasets demonstrate that PushCen-ADFL improves accuracy under data heterogeneity by up to 6\% while reducing per-push communication cost by more than 80\%, achieving a favorable accuracy-communication trade-off.

\end{abstract}



\begin{CCSXML}
<ccs2012>
<concept>
<concept_id>10010147.10010257.10010293.10010294</concept_id>
<concept_desc>Computing methodologies~Neural networks</concept_desc>
<concept_significance>300</concept_significance>
</concept>
<concept>
<concept_id>10010147.10010257.10010321.10010337</concept_id>
<concept_desc>Computing methodologies~Regularization</concept_desc>
<concept_significance>300</concept_significance>
</concept>
<concept>
<concept_id>10010147.10010169.10010170.10010174</concept_id>
<concept_desc>Computing methodologies~Massively parallel algorithms</concept_desc>
<concept_significance>500</concept_significance>
</concept>
<concept>
<concept_id>10010147.10010169.10010170.10003824</concept_id>
<concept_desc>Computing methodologies~Self-organization</concept_desc>
<concept_significance>300</concept_significance>
</concept>
<concept>
<concept_id>10010147.10010178.10010219.10010222</concept_id>
<concept_desc>Computing methodologies~Mobile agents</concept_desc>
<concept_significance>100</concept_significance>
</concept>
</ccs2012>
\end{CCSXML}

\ccsdesc[300]{Computing methodologies~Neural networks}
\ccsdesc[300]{Computing methodologies~Regularization}
\ccsdesc[500]{Computing methodologies~Massively parallel algorithms}
\ccsdesc[300]{Computing methodologies~Self-organization}
\ccsdesc[100]{Computing methodologies~Mobile agents}

\keywords{Federated learning, Decentralized optimization, Asynchronous communication, Distributed machine learning}

\maketitle

\section{Introduction}

Federated Learning (FL) has emerged as a privacy-preserving distributed machine learning paradigm that enables collaborative model training when data cannot be centrally shared \cite{mcmahan2017communication, 8270639, mughal2024adaptive}. By performing training on local devices and exchanging only model updates, FL allows distributed data resources to be effectively utilized without direct access to raw data and has demonstrated significant potential in applications such as mobile intelligence, the Internet of Things and cross-organizational collaboration \cite{9599369, liu2023efficient}.

However, most existing federated learning frameworks adopt a centralized and synchronous training paradigm, where a central server collects model updates from all clients and aggregates them in each training round. While such designs perform well under ideal network conditions, they face multiple challenges in real-world systems. First, the central server introduces a single point of bottleneck and failure, which limits scalability due to constraints on communication bandwidth, computation capacity and reliability in large-scale deployments \cite{dai2022dispfl}. Second, synchronous training requires strict temporal alignment of clients at round boundaries, making the system vulnerable to device heterogeneity and system variability and leading to the well-known straggler problem that significantly degrades training efficiency \cite{lang2024stragglers, jiang2022towards}. Finally, centralized architectures are often difficult to deploy in cross-organizational or cross-domain scenarios and conflict with the growing demand for trustless and autonomous systems \cite{dai2022dispfl, yuan2024decentralized}.

To overcome these limitations, Decentralized Federated Learning (DFL) has attracted increasing attention. In DFL, clients exchange model information only with their neighboring nodes without relying on a central server, thereby improving system robustness and scalability \cite{9850408, yuan2024decentralized}. Nevertheless, most existing decentralized approaches still rely on synchronous update mechanisms, assuming that clients remain aligned across logical training rounds, an assumption that is difficult to satisfy in dynamic network environments \cite{bornstein2022swift, liu2024aedfl}.

Consequently, Asynchronous Decentralized Federated Learning (ADFL) has been recognized as a more practical training paradigm for real-world deployments \cite{liu2024aedfl, dhasade2025practical}. ADFL allows clients to perform local updates and communications at different times without global synchronization, while supporting model collaboration solely through neighbor-to-neighbor communication \cite{liu2022asynchronous, 11016099}. This setting naturally accommodates device heterogeneity, network variability and partial participation and enables clients to join the training process at arbitrary times \cite{rivero2022data, ameur2022peer}. As a result, ADFL is well suited for edge computing, peer-to-peer networks and large-scale distributed systems, offering significant advantages in terms of system throughput, fault tolerance and deployment flexibility.

Despite these advantages, such a highly flexible training paradigm also introduces new challenges. Specifically, ADFL systems often face three fundamental difficulties simultaneously. First, since clients communicate only with their neighbors and model exchanges may occur frequently, directly transmitting full model parameters incurs prohibitive communication overhead, which is difficult to sustain in bandwidth-constrained or large-scale systems \cite{mcmahan2017communication, 9546506}. Compared to centralized settings, decentralized architectures lack unified communication scheduling and aggregation, making communication efficiency a more prominent concern \cite{yuan2024decentralized, lalitha2018fully}. 
Second, asynchronous updates combined with asymmetric communication topologies can lead to biased aggregation \cite{1238221, assran2019stochastic}. 
In practice, clients may push models at different times and communication links can be unbalanced, under which naive neighbor averaging fails to guarantee unbiased consensus \cite{11016099, franceschelli2009consensus}. 
Finally, the widely observed statistical heterogeneity (non-IID data) across clients is further amplified in asynchronous decentralized environments \cite{liu2024aedfl, liu2022asynchronous, ma2024dynamic}. Local updates computed on non-identically distributed data tend to induce significant model drift, while asynchronous communication may delay or mismatch these updates, exacerbating training instability and degrading final performance \cite{wang2024tackling, liu2024fedasmu}.

These challenges are not independent. 
Communication constraints necessitate compression. Compression can amplify client drift under non-IID data. Asynchrony further delays and skews these updates. Together, these effects exacerbate aggregation bias in decentralized training.
Accordingly, the central research question addressed in this work is:

\emph{How can we jointly design compression, de-biased aggregation, and local-update stabilization for accurate and stable ADFL under asymmetric communication and non-IID data?}

To address this research question, we propose a new ADFL framework, termed PushCen-ADFL (Push-Sum Centroid Asynchronous Decentralized Federated Learning). 
The proposed framework is built around lightweight centroid structures and integrates centroid-based compression with push-sum aggregation, where both aggregation and local optimization are constrained in the same centroid representation space, enabling communication-efficient and stable federated training without relying on a central server or global synchronization.

Our main contributions are summarized as follows:
\begin{itemize}

\item 
We propose PushCen-ADFL, an asynchronous framework that unifies centroid-based communication, push-sum de-biasing and 
buffered neighbor aggregation 
into a single decentralized training pipeline, where both aggregation and local optimization operate in the same centroid representation space.

\item 
We design an average-preserving push-sum aggregation with mass splitting and a deduplicated bounded buffer, mitigating bias from asymmetric information flow and preventing stale messages from dominating aggregation.

\item 
We develop a centroid-aligned proximal regularizer that stabilizes local updates in the same compressed centroid space used for communication. This alignment mitigates non-IID data distributions by contracting local trajectories toward a shared compressed reference.

\item 
We provide a convergence analysis for PushCen-ADFL and validate it on CIFAR-10, CIFAR-100 and Tiny-ImageNet, improving accuracy
by up to 6\% over communication-efficient baselines,
while reducing 
per-push transmitted payload by more than 80\% 
less than full-model communication.
\end{itemize}

\section{Related Work}
\subsection{Asynchronous Decentralized FL}
A representative direction is to design \emph{wait-free} or fully asynchronous communication mechanisms to mitigate stragglers. SWIFT proposes a wait-free decentralized FL algorithm that allows each client to proceed without synchronization barriers, while maintaining standard iteration complexity guarantees \cite{bornstein2022swift}. Complementarily, A$^2$CiD$^2$ studies asynchronous decentralized deep learning from an optimization perspective and introduces a continuous local momentum to accelerate randomized gossip-style communication \cite{nabli2023textbf}. 
Beyond communication primitives, a second line of work explicitly addresses the \emph{staleness} and \emph{heterogeneity} issues that arise in ADFL. Ma \emph{et al.} propose dynamic staleness control for ADFL in decentralized topologies, aiming to balance training efficiency with model quality by regulating the tolerated staleness degree and deriving scheduling/control policies with theoretical support \cite{ma2024dynamic}. For heterogeneous devices, Liao \emph{et al.} develop AsyDFL, which integrates neighbor selection and gradient pushing to reduce communication cost and completion time under non-IID data and system heterogeneity \cite{liao2024asynchronous}. More recently, DSpodFL provides a unified viewpoint by modeling both local updates and pairwise exchanges as sporadic random events, thereby capturing time-varying computation/communication patterns \cite{zehtabi2025decentralized}. In addition, AEDFL proposes an ADFL framework that leverages reinforcement learning for neighbor model selection in heterogeneous, serverless training settings \cite{liu2024aedfl}.
Despite these advances, most existing ADFL methods focus on asynchronous scheduling or staleness control while assuming full-precision communication, and thus fail to jointly address frequent peer-to-peer exchanges, aggregation bias on asymmetric communication, and model drift under non-IID data, limiting their practicality in communication-constrained decentralized systems.

\subsection{Communication-Efficient Federated Learning}
A prominent line of research leverages \emph{knowledge distillation} to reduce the communication payload. FedKD adopts a local teacher-student scheme and transmits only a compact student (or distilled knowledge), substantially lowering uplink/downlink costs while preserving accuracy via distillation \cite{wu2022communication}. Another direction exploits \emph{sparsity} to shrink communicated updates: SpaFL induces structured sparsity with low overhead through trainable thresholds, reducing both communication and computation \cite{kim2024spafl}. In addition, \emph{low-precision} methods cut the bit-width of local training and transmitted updates; Li \emph{et al.} show that low-precision local training can already achieve competitive accuracy in FL \cite{li2024low}. Communication efficiency has also been explored under asynchronous decentralized settings: DivShare exchanges model fragments per interaction to better tolerate stragglers and bandwidth heterogeneity, improving wall-clock convergence in peer-to-peer training \cite{biswas2025boosting}. 
Overall, prior communication-efficient FL methods reduce communication cost but are not designed to handle the coupled challenges of aggregation bias and training instability in fully asynchronous decentralized environments.

\section{Problem Formulation}

We consider an ADFL system with $N$ clients,
denoted by $\mathcal{V}=\{1,2,\dots,N\}$.
Client communications are modeled by a directed graph $\mathcal{G}=(\mathcal{V},\mathcal{E})$ induced by asynchronous gossip,
where $(j \rightarrow i)\in\mathcal{E}$ indicates that client $i$ can receive information from client $j$.
The communication topology can be asymmetric and unbalanced and there is no central server;
clients communicate only with their neighbors.
We denote the in-neighbors and out-neighbors of client $i$ as
$\mathcal{N}_i^-$ and $\mathcal{N}_i^+$, respectively.

Each client $i$ holds a private local dataset $\mathcal{D}_i$ with potentially heterogeneous (non-IID) data distributions,
and maintains a local model copy $w_i$.
Let $f_i(w)$ denote the empirical risk on $\mathcal{D}_i$.
We adopt a \emph{uniform client-level objective} and formulate decentralized learning as a consensus optimization problem:
\[
\min_{\{w_i\}_{i=1}^{N}}\ \frac{1}{N}\sum_{i=1}^{N} f_i(w_i)
\quad \text{s.t.}\quad
w_i = w_j,\ \forall (j\rightarrow i)\in\mathcal{E}.
\]

where the constraints $w_i = w_j$ specify the desired agreement among local models at convergence, while local models may differ during asynchronous training.
The system operates in a fully asynchronous manner without global synchronization.
Clients perform local computation and communication at arbitrary times and received messages may correspond to stale model states due to heterogeneous computation speeds and network delays.
Moreover, clients may become active at arbitrary times, and only active clients generate updates and participate in communication.

Under directed and potentially unbalanced topologies, achieving the above consensus objective requires decentralized aggregation mechanisms that preserve the uniform client-level average.
Accordingly, the problem addressed in this work is to design a communication-efficient ADFL framework that enables stable and robust training under asymmetric communication and non-IID data distributions.

\section{Push-Sum Centroid Asynchronous Decentralized Federated Learning}
\subsection{Overview of PushCen-ADFL}
\label{pushcen_workflow}

\begin{figure*}[t]
    \centering
    \includegraphics[width=\linewidth]{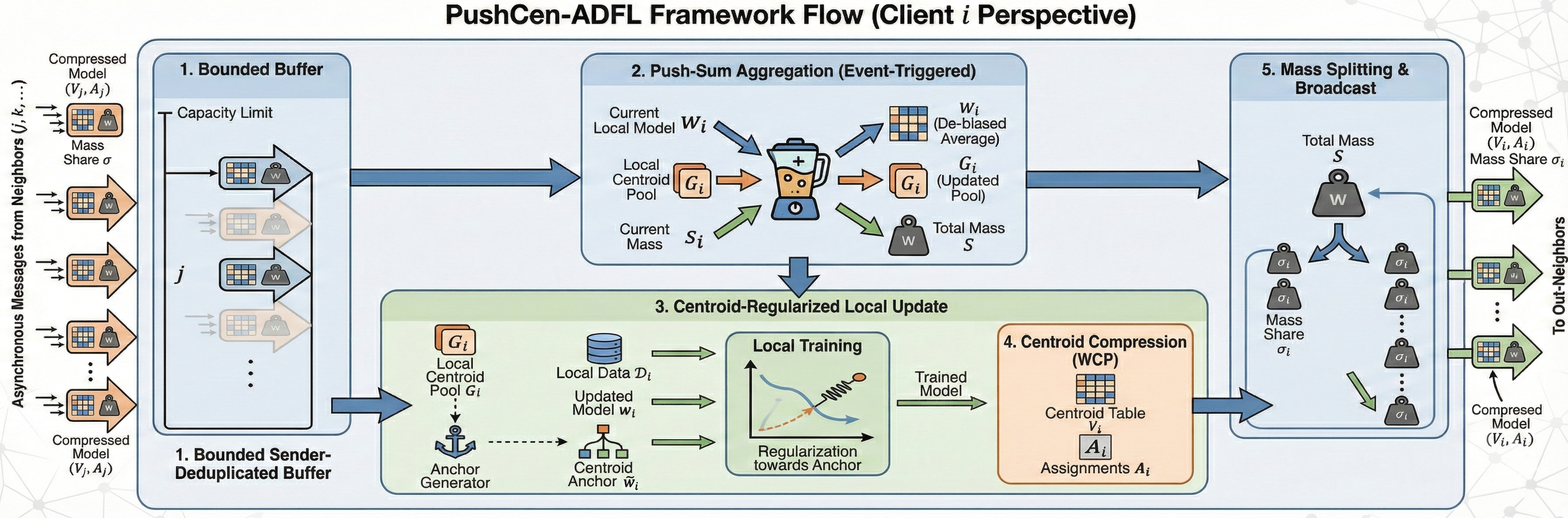}
    \Description{Workflow of PushCen-ADFL: Push-Sum Centroid Asynchronous Decentralized Federated Learning}
    \caption{Workflow of PushCen-ADFL: Push-Sum Centroid Asynchronous Decentralized Federated Learning}
    \label{fig:framework_diagram}
\end{figure*}

The overall workflow of PushCen-ADFL is illustrated in Figure~\ref{fig:framework_diagram} and summarized in Algorithm~\ref{alg:pushcen-main}.

Clients exchange information only with their neighbors over a peer-to-peer communication, without relying on a central coordinator or round-based synchronization.

PushCen-ADFL is built on three coupled principles.
First, each push transmits a centroid message $(V_i, A_i)$ instead of dense parameters, so the communication payload is controlled by the centroid budget.
Second, aggregation follows a mass split push-sum update, using the mass push-sum $s_i$ to correct bias and preserve the averaging under directed asynchronous mixing.
Third, local optimization is stabilized by a centroid proximal term anchored at the current dictionary estimate $G_i$, so that both communication and stabilization operate in the same centroid representation space.
In addition, a deduplicated bounded buffer retains only the most recent message per sender, preventing stale or repeated messages from dominating the event-driven aggregation.

Concretely, each client $i$ maintains a local model $w_i$, a push-sum mass $s_i$ that tracks the mixing weight for de-biased aggregation, and a local centroid dictionary estimate $G_i$ that defines the shared compressed representation space, together with a bounded buffer $\mathcal{B}_i$ that stores recent neighbor messages (line~1).
When a neighbor message $(V_j, A_j, \sigma_{j\rightarrow i}, j)$ arrives, client $i$ updates $\mathcal{B}_i$ via \textsc{BufferUpdate} (lines~3--5), which replaces older entries from the same sender and enforces a fixed buffer capacity; see Section~\ref{sec:buffer}.
Upon a local compute event, client $i$ aggregates buffered information using \textsc{PushSumAgg} (line~6), which jointly updates $(w_i, G_i, s_i)$ in an average-preserving manner under directed and potentially unbalanced communication; see Section~\ref{sec:pushsum_agg}.
Client $i$ then performs \textsc{LocalUpdate} on $\mathcal{D}_i$ (line~7).
For communication, the model is encoded in a centroid form $(V_i, A_i)$ (Section~\ref{sec:centroid_compression}); for optimization, the update is anchored to a centroid-based reference constructed from $G_i$, yielding a centroid-aligned proximal effect that contracts local trajectories and mitigates non-IID drift; see Section~\ref{sec:cen_reg}.
Finally, client $i$ splits its push-sum mass (line~8) and sends $(V_i, A_i, \sigma_i, i)$ to all out-neighbors (line~9), ensuring that subsequent asynchronous aggregations remain average-preserving.

\begin{algorithm}[t]
\caption{PushCen-ADFL: Asynchronous Decentralized Training at Client $i$}
\label{alg:pushcen-main}
\begin{algorithmic}[1]
\Require Out-neighbors $\mathcal{N}_i^+$; local data $\mathcal{D}_i$; \#clusters $K$; local epochs $E$;
learning rate $\eta$; reg weight $\lambda$; total events $T$; buffer limit $L$.
\State \textbf{Init:} $w_i$; $s_i\gets 1$; $G_i\gets\emptyset$; $\mathcal{B}_i\gets\emptyset$
\For{$t=1$ to $T$} \Comment{event-driven; no global synchronization}
    \State \textbf{Upon receiving} $(V_j,A_j,\sigma_{j\rightarrow i},j)$:
    \State \quad \Call{BufferUpdate}{$\mathcal{B}_i,\ (V_j,A_j,\sigma_{j\rightarrow i},j),\ L$}
    \Comment{Alg.~\ref{alg:pushcen-buffer}}
    \State \textbf{Upon a local compute event:}
    \State \quad \Call{PushSumAgg}{$w_i,G_i,s_i,\mathcal{B}_i$}
    \Comment{Alg.~\ref{alg:pushcen-pushsum}}
    \State \quad $(w_i,V_i,A_i)\gets$ \Call{LocalUpdate}{$w_i,\mathcal{D}_i,G_i,K,E,\lambda$}
    \Comment{Alg.~\ref{alg:pushcen-local}}
    \State \quad $\sigma_i \gets s_i/(|\mathcal{N}_i^+|+1)$; $s_i\gets \sigma_i$ \Comment{mass splitting}
    \State \quad Send $(V_i,A_i,\sigma_i,i)$ to all $k\in\mathcal{N}_i^+$
\EndFor
\end{algorithmic}
\end{algorithm}

\begin{algorithm}[t]
\caption{LocalUpdate at Client $i$}
\label{alg:pushcen-local}
\begin{algorithmic}[1]
\Require Model $w_i$, data $\mathcal{D}_i$, centroid dictionary $G_i$, \#clusters $K$, epochs $E$, reg $\lambda$.
\State $(V_i,A_i,M_i)\gets$ \Call{WCP}{$w_i,K,G_i$} \Comment{initialize assignments}
\State Build centroid anchor $\tilde{w}_i \gets \mathrm{Anchor}(G_i,A_i)$
\For{$e=1$ to $E$}
    \State Enforce pruning: $w_i \leftarrow w_i \odot M_i$
    \State Take SGD steps on $f_i(w_i;\mathcal{D}_i)+\lambda\|w_i-\tilde{w}_i\|_2^2$
\EndFor
\State $(V_i,A_i,M_i)\gets$ \Call{WCP}{$w_i,K,G_i$} \Comment{refresh for communication}
\State \Return $(w_i,V_i,A_i)$
\end{algorithmic}
\end{algorithm}

\subsection{Centroid Regularization}
\label{sec:cen_reg}

\begin{figure}
    \centering
    \includegraphics[width=7.5cm]{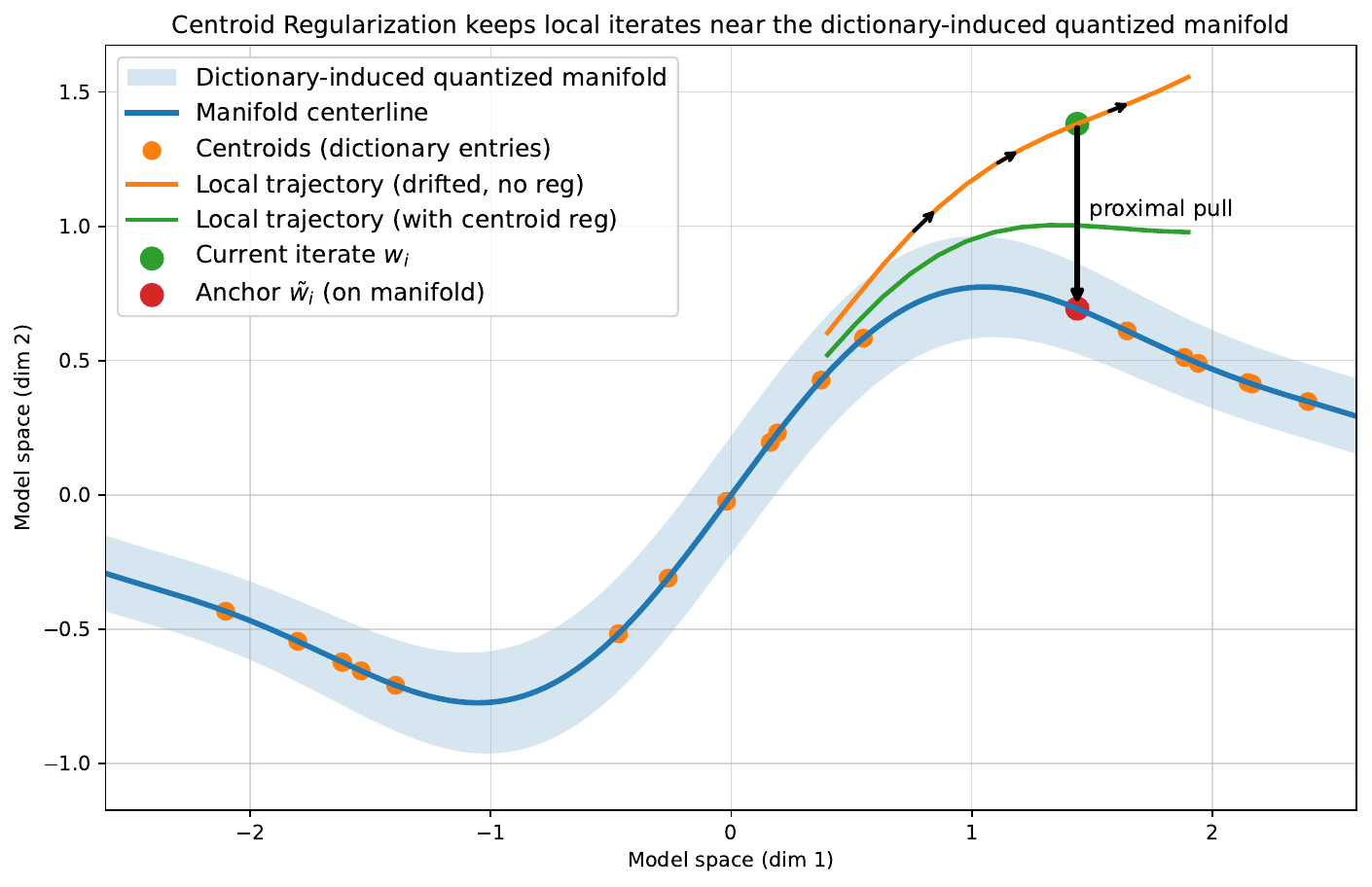}
    \Description{Illustration of Centroid Regularization}
    \caption{Illustration of Centroid Regularization}
    \label{fig:cen_reg_vis}
\end{figure}

Non-IID data can cause client drift, which is further amplified by asynchronous execution. 
To mitigate this drift, PushCen-ADFL introduces a lightweight centroid regularization in the local update step (Algorithm~\ref{alg:pushcen-local}). 
Client $i$ first applies WCP to initialize the assignments $A_i$ and pruning mask $M_i$ (line~1), and then constructs a centroid anchor from its current centroid dictionary estimate $G_i$ (line~2), where $A_i$ specifies how centroids are indexed to form a layer-wise reference.
We instantiate the centroid anchor $\tilde{w}_i$ by decoding the assignment map $A_i$ with the centroid dictionary $G_i$ through an element-wise lookup:
\begin{equation}
\tilde{w}_{i,l} \triangleq G_{i,l}[A_{i,l}],
\label{eq:anchor-def}
\end{equation}
where each entry of $\tilde{w}_{i,l}$ takes the centroid value indexed by the corresponding entry in $A_{i,l}$, yielding the same shape as $w_{i,l}$.

Client $i$ then performs local optimization with a proximal-style regularizer while enforcing pruning:
\begin{equation}
\min_{w_i}\ f_i(w_i;\mathcal{D}_i) + \lambda \|w_i - \tilde{w}_i\|_2^2,
\label{eq:cen-reg}
\end{equation}
where $M_i$ is a binary mask and masked coordinates are kept inactive via $w_i \leftarrow w_i \odot M_i$ (lines~3--5).
Here, $\lambda$ controls the regularization strength and $\tilde{w}_i$ is fixed during the local update.
Unlike conventional proximal terms that pull toward a global model, $\tilde{w}_i$ is decoded from the current centroid dictionary, keeping $w_i$ close to a centroid-structured reference and reducing re-encoding distortion before communication. After local training, client $i$ applies Weight Clustering Pruning (WCP) to refresh the centroid representation $(V_i,A_i)$ for subsequent communication (line~6) and returns $(w_i,V_i,A_i)$ (line~7).

The motivation is to stabilize asynchronous local optimization without decoupling it from centroid-based compression.
In fully asynchronous decentralized training, a client can take many local steps before receiving fresh information; under non-IID data, this can move $w_i$ into regions poorly represented by the current centroid dictionary, leading to unstable trajectories and larger distortion when the model is re-encoded.
The proximal term $\lambda\|w_i-\tilde{w}_i\|_2^2$ provides a local contraction toward $\tilde{w}_i$, curbing drift while keeping the iterate close to the dictionary-induced quantized manifold. Figure~\ref{fig:cen_reg_vis} provides a visualization of centroid regularization.
Because $\tilde{w}_i$ is constructed from $(G_i,A_i)$, the regularizer directly ties the optimization path to the centroid structure used for communication, yielding more stable local updates under asynchrony.

\subsection{Centroid-based Model Compression}
\label{sec:centroid_compression}

\begin{algorithm}[t]
\caption{WCP: Weight Clustering Pruning}
\label{alg:wcp}
\begin{algorithmic}[1]
\Require Model $w_i$; \#clusters $K$; optional init centroids $G_i$; max iter $T_{\max}$.
\State $V_i\gets\emptyset;\ A_i\gets\emptyset;\ M_i\gets\mathbf{1}$
\For{each weight layer $l$ in $w_i$}
    \State $\theta \gets \mathrm{vec}(w_{i,l})$
    \If{$G_{i,l}$ available}
        \State $\mu \gets [\,0;\ G_{i,l}[1{:}K\!-\!1]\,]$
    \Else
        \State $\mu \gets [\,0;\ \mathrm{Rand}(\theta)[1{:}K\!-\!1]\,]$
    \EndIf
    \For{$t=1$ to $T_{\max}$ \textbf{or} converged}
        \State $a \gets \arg\min_{j\in[K]}\|\theta-\mu_j\|_2$ \Comment{element-wise}
        \For{$j=1$ to $K-1$} \State $\mu_j \gets \mathrm{mean}(\theta[a=j])$ \EndFor
        \State $\mu_0 \gets 0$
    \EndFor
    \State $(V_{i,l},A_{i,l}) \gets \mathrm{SortRemap}(\mu,a)$
    \State $w_{i,l} \gets V_{i,l}[A_{i,l}]$;\ \ $M_{i,l}\gets \mathbb{I}(w_{i,l}\neq 0)$
\EndFor
\State \Return $(V_i,A_i,M_i)$
\end{algorithmic}
\end{algorithm}

PushCen-ADFL compresses communication by encoding each weight layer in a centroid form rather than transmitting dense parameters.
For layer $l$, client $i$ vectorizes weights $\theta=\mathrm{vec}(w_{i,l})\in\mathbb{R}^{d_l}$ and clusters its entries into $K$ scalar centroids, producing a centroid table $V_{i,l}\in\mathbb{R}^{K}$ and an assignment map $A_{i,l}\in\{1,\dots,K\}^{d_l}$.
Reconstruction uses an element-wise lookup $w_{i,l}\approx V_{i,l}[A_{i,l}]$, so the payload scales with $K$ (typically $K\ll d_l$) rather than $d_l$.

Algorithm~\ref{alg:wcp} implements this encoding via WCP.
When available, WCP warm-starts centroids from the local dictionary estimate $G_{i,l}\in\mathbb{R}^{K}$ by setting $\mu=[0;\,G_{i,l}[1{:}K\!-\!1]]$, fixing one centroid at $0$ and refining the others by alternating nearest-centroid assignment and cluster-mean updates.
It returns $(V_{i,l},A_{i,l})$ and a binary pruning mask $M_{i,l}$, where \textsc{SortRemap} reorders centroid indices into a canonical order and remaps $A_{i,l}$ accordingly to ensure consistent encoding.
Full cost details are deferred to Appendix~\ref{app:wcp_details}.

\subsection{Asynchronous Push-Sum Aggregation}
\label{sec:pushsum_agg}

\begin{algorithm}[t]
\caption{PushSumAgg}
\label{alg:pushcen-pushsum}
\begin{algorithmic}[1]
\Require Local $(w_i,G_i,s_i)$; buffer $\mathcal{B}_i=\{(V_j,A_j,\sigma_{j\rightarrow i},j)\}$.
\If{$\mathcal{B}_i=\emptyset$} \State \Return \EndIf
\State For each buffered message, let $\hat{w}_j \gets \mathcal{R}(V_j,A_j)$ \Comment{implicit reconstruction}
\State Compute total mass $S$ by Eq.~\eqref{eq:pushsum-mass}
\State Update model $w_i$ by Eq.~\eqref{eq:pushsum-model} using $\{\hat{w}_j\}$
\State Update centroid dictionary $G_i$ by Eq.~\eqref{eq:pushsum-pool} using $\{V_j\}$
\State $s_i \gets S$; clear $\mathcal{B}_i$
\end{algorithmic}
\end{algorithm}
Asynchronous decentralized training is inherently imbalanced: fast clients (due to higher compute speed or lower latency) can generate and disseminate updates more frequently, which may cause their local trajectories to dominate aggregation when using naive neighbor averaging. 
This issue is compounded on directed or unbalanced graphs, where information flow is asymmetric.
PushCen-ADFL addresses both effects with an average-preserving push-sum style aggregation that tracks a scalar mass and uses it to normalize heterogeneous message rates and topology asymmetry.

Each client $i$ maintains a local model $w_i$, a push-sum mass $s_i>0$, a local estimate of the centroid dictionary $G_i$ and a buffer $\mathcal{B}_i$ that stores neighbor messages received.
A buffered message from neighbor $j$ is denoted by $(V_j, A_j, \sigma_{j\rightarrow i}, j)$, where $(V_j,A_j)$ is the centroid representation and $\sigma_{j\rightarrow i}$ is the mass share sent to $i$.
Upon aggregation, client $i$ implicitly reconstructs a quantized neighbor model $\hat{w}_j=\mathcal{R}(V_j,A_j)$ (Algorithm~\ref{alg:pushcen-pushsum}, line~4).
It then computes the total mass
\begin{align}
S \leftarrow s_i + \sum_{(V_j,A_j,\sigma_{j\rightarrow i},j)\in\mathcal{B}_i}\sigma_{j\rightarrow i},
\label{eq:pushsum-mass}
\end{align}
which serves as the normalization constant for mass-conserving averaging.
Client $i$ updates its model via
\begin{align}
w_i \leftarrow \frac{s_i}{S}w_i + \sum_{(V_j,A_j,\sigma_{j\rightarrow i},j)\in\mathcal{B}_i}\frac{\sigma_{j\rightarrow i}}{S}\,\hat{w}_j,
\label{eq:pushsum-model}
\end{align}

so that a sender’s influence is determined by conserved mass rather than raw message frequency.
The same weights are applied to update the centroid dictionary estimate using the received centroid tables.
\begin{align}
G_i \leftarrow \frac{s_i}{S}G_i + \sum_{(V_j,A_j,\sigma_{j\rightarrow i},j)\in\mathcal{B}_i}\frac{\sigma_{j\rightarrow i}}{S}\,V_j.
\label{eq:pushsum-pool}
\end{align}
aligning the shared centroid structure with the aggregated model trajectory.
After aggregation, client $i$ sets $s_i\leftarrow S$ and clears $\mathcal{B}_i$.

After local training, client $i$ performs mass splitting before communication.
Let $d_i^{+}=|\mathcal{N}_i^{+}|$ be the out-degree; client $i$ sends $\sigma_i=s_i/(d_i^{+}+1)$ to each out-neighbor and updates $s_i\leftarrow \sigma_i$.
This mass-splitting rule prevents fast clients from dominating the network dynamics under asynchrony.

\subsection{Buffered Neighbor Updates}
\label{sec:buffer}

\begin{algorithm}[t]
\caption{BufferUpdate (Deduplicate by Sender + Bounded FIFO)}
\label{alg:pushcen-buffer}
\begin{algorithmic}[1]
\Require Buffer $\mathcal{B}_i$; incoming $(w_j,V_j,\sigma_{j\rightarrow i},j)$; limit $L$.
\State Remove the existing entry from sender $j$ in $\mathcal{B}_i$ (if any)
\State Append $(w_j,V_j,\sigma_{j\rightarrow i},j)$ to $\mathcal{B}_i$
\If{$L>0$ and $|\mathcal{B}_i|>L$}
    \State Drop the oldest $|\mathcal{B}_i|-L$ entries from $\mathcal{B}_i$
\EndIf
\end{algorithmic}
\end{algorithm}

In a fully asynchronous decentralized setting, message arrivals are decoupled from local compute events. A client may receive neighbor messages at arbitrary times, possibly in bursts, and some messages can be stale by the time aggregation is triggered.
PushCen-ADFL therefore maintains a local buffer $\mathcal{B}_i$ at each client $i$ to cache received neighbor information until the next compute event, when aggregation is performed.

The buffer is designed to be sender-deduplicated and capacity-bounded.
When a new message from neighbor $j$ arrives, client $i$ removes any existing buffered entry from $j$ and keeps only the latest one.
This enforces the principle that each sender contributes at most one message per aggregation event, preventing bursty arrivals from being counted multiple times and distorting the intended neighbor weighting, which per-message push-sum normalization alone does not prevent.

Empirically, removing sender deduplication consistently degrades performance (Section~\ref{sec:ablation_study}), indicating that discarding older messages from the same neighbor is beneficial in practice.
Finally, to control memory and limit the influence of long-delayed messages under high communication rates, we impose a buffer size limit and drop the oldest entries upon overflow.
Together, these rules provide a lightweight interface between asynchronous communication and push-sum aggregation, as summarized in Algorithm~\ref{alg:pushcen-buffer}.

\subsection{Convergence Analysis of PushCen-ADFL}
\label{sec:convergence}
\subsubsection{Assumptions}
We summarize the assumptions used in our convergence analysis; full definitions, update rules and proofs are deferred to Appendix~\ref{app:conv}.

\paragraph{Assumption 1: Smoothness and lower boundedness \cite{liconvergence, karimireddy2020scaffold}.}
Each $f_i$ is $L$-smooth and $F(w)\triangleq \frac{1}{N}\sum_{i=1}^N f_i(w)$ is lower bounded by $F^\star$.
\paragraph{Assumption 2: Stochastic gradients \cite{liconvergence, karimireddy2020scaffold}.} $\mathbb{E}[g_i(w;\xi)\mid w]=\nabla f_i(w)$ and
$\mathbb{E}[\|g_i(w;\xi)-\nabla f_i(w)\|_2^2\mid w]\le \sigma^2$.
\paragraph{Assumption 3: Uniform heterogeneity \cite{karimireddy2020scaffold}.}
For all $i$ and $w$, $\|\nabla f_i(w)-\nabla F(w)\|_2^2\le \zeta^2$.
\paragraph{Assumption 4: Bounded staleness \cite{nguyen2022federated, assran2019stochastic}.}
Any message used at event $t$ is generated within the last $\tau$ events.
\paragraph{Assumption 5: Directed push-sum mixing \cite{nedic2014distributed}.}
The time-varying directed communication is $B$-window strongly connected and push-sum masses stay positive.
\paragraph{Assumptions 6: Absolute compression error \cite{danilova2022distributed}.} $\|\mathcal{R}(\mathcal{C}(w)) - w\|_2 \le \varepsilon_c$ for any transmitted model $w$.

To simplify constants, we further assume bounded push-sum masses, bounded iterates, and bounded gradient second moments. These are given in Assumptions~\ref{ass:mass-bounded} in Appendix~\ref{app:conv}.

\subsubsection{De-biased reference and stationarity metric}
On directed and potentially unbalanced graphs, the arithmetic mean is not preserved. We therefore analyze the de-biased
push-sum reference $\bar w^t \triangleq X_{\mathrm{tot}}^t/Y_{\mathrm{tot}}^t$ (defined via system-level accounting that
includes buffered/in-flight messages; Appendix~\ref{app:conv}) and measure convergence by
$\frac{1}{T}\sum_{t=0}^{T-1}\mathbb{E}\|\nabla F(\bar w^t)\|_2^2$.

\subsubsection{Main theorem}
\begin{theorem}[Stationarity of PushCen-ADFL]
\label{thm:main}
Suppose Assumptions1 - 6 and Assumptions~\ref{ass:mass-bounded} hold.
Let the stepsize satisfy $\eta \le \min\Big\{\frac{1}{8LE},\ \frac{1}{4\lambda}\Big\}$.
Then
\begin{equation}
\label{eq:main-body}
\begin{aligned}
\frac{1}{T}\sum_{t=0}^{T-1}\mathbb{E}\big[\|\nabla F(\bar w^t)\|_2^2\big]
\;&\le\;
\frac{2\big(F(\bar w^0)-F^\star\big)}{\eta E\,\underline{\gamma}\,T}
+ 2(\sigma^2+\zeta^2)
+ 2L\,\overline{c}^{\,2}\varepsilon_c^2
\\
&\quad+\;
\frac{4L^2 N}{\underline{\gamma}}
\left(
\frac{\mathbb{E}[\mathcal{E}_{\mathrm{con}}^0]}{T(1-\rho)}
+
\frac{B_{\mathrm{con}}}{1-\rho}
\right)
+
\frac{L}{\underline{\gamma}}\,D_\Delta .
\end{aligned}
\end{equation}

where $\underline{\gamma}=\frac{y_{\min}}{Ny_{\max}}$, $\rho\in(0,1)$ is the contraction factor in the directed push-sum
consensus recursion (Lemma~\ref{lem:consensus-recursion} in Appendix~\ref{app:conv}) and
\begin{equation}
\overline{c}
\triangleq
\max_t \frac{d_{i_t}^+}{d_{i_t}^+ + 1}\cdot \frac{y_{i_t}^{t+\frac13}}{Y_{\mathrm{tot}}}
\le
\frac{y_{\max}}{Ny_{\min}}.
\label{eq:cbar-body}
\end{equation}
Moreover,
\begin{align}
D_\Delta
&\triangleq
2E^2\eta^2\Big(G^2 + 16\lambda^2W^2\Big),
\label{eq:Ddelta-body}
\\
B_{\mathrm{con}}
&\triangleq
\big(C_1 + C_2\tau^2\big)D_\Delta + C_3\varepsilon_c^2,
\label{eq:Bcon-body}
\end{align}
with $(C_1,C_2,C_3)$ from Lemma~\ref{lem:consensus-recursion}.
\end{theorem}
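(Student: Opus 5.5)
The proof follows the standard descent-lemma analysis for push-sum SGD, applied to the de-biased reference $\bar w^t = X_{\mathrm{tot}}^t/Y_{\mathrm{tot}}^t$. First I track how $\bar w^t$ moves over one event. Mass splitting and the aggregation rules \eqref{eq:pushsum-mass}--\eqref{eq:pushsum-model} only redistribute $(x,y)=(s_i w_i, s_i)$ pairs among clients, buffers and in-flight messages. So $X_{\mathrm{tot}}$ and $Y_{\mathrm{tot}}$ are conserved by communication, with two exceptions: (i) the local update of the active client $i_t$, which changes its numerator by $y_{i_t}\Delta_{i_t}^t$, and (ii) the compression substitution $\hat w = \mathcal{R}(\mathcal{C}(w))$ on the $d^+_{i_t}/(d^+_{i_t}+1)$ fraction of mass that is sent out. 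Writing these contributions out gives
\[
\bar w^{t+1}-\bar w^t = \frac{y_{i_t}^{t+\frac13}}{Y_{\mathrm{tot}}}\Delta_{i_t}^t + e_c^t,
\qquad \|e_c^t\|_2 \le \overline{c}\,\varepsilon_c ,
\]
using Assumption 6 and the definition \eqref{eq:cbar-body}. Here $\Delta_{i_t}^t$ is $E$ steps of SGD on $f_{i_t}+\lambda\|\cdot-\tilde w\|^2$. The prefactor is bounded below by $\underline{\gamma}$ under Assumption~\ref{ass:mass-bounded} (after taking expectation over which client activates).

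Second, I apply $L$-smoothness (Assumption 1):
\[
F(\bar w^{t+1})\le F(\bar w^t)+\langle\nabla F(\bar w^t),\bar w^{t+1}-\bar w^t\rangle+\tfrac L2\|\bar w^{t+1}-\bar w^t\|^2 .
\]
I split $\Delta^t_{i_t}$ as $-\eta E\nabla F(\bar w^t)$ plus deviation terms:
\begin{itemize}
\item the local-vs-global gradient gap, bounded by $\zeta^2$ via Assumption 3;
\item the stochastic noise, bounded by $\sigma^2$ via Assumption 2;
\item the consensus error $\|w_{i_t}-\bar w^t\|^2 \le \mathcal{E}_{\mathrm{con}}^t$, converted through $L$-smoothness into $L^2 N\,\mathcal{E}_{\mathrm{con}}^t$;
\item the within-epoch drift plus the proximal gradient $2\lambda(w-\tilde w)$, bounded with the bounded-gradient and bounded-iterate assumptions by $\|\Delta\|^2\le 2E^2\eta^2(G^2+16\lambda^2W^2)=D_\Delta$.
\end{itemize}
The step-size conditions $\eta\le 1/(8LE)$ and $\eta\le 1/(4\lambda)$ are exactly what let me absorb the cross terms, using Young's inequality with factor $1/2$, leaving $-\tfrac{\eta E\underline\gamma}{2}\|\nabla F(\bar w^t)\|^2$ on the right. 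I deliberately keep $\sigma^2+\zeta^2$ unscaled by $\eta$, which is what produces the $2(\sigma^2+\zeta^2)$ floor. The compression error enters through the quadratic term as $L\overline{c}^{\,2}\varepsilon_c^2$.

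Third, I handle consensus through Lemma~\ref{lem:consensus-recursion}, which I take as given:
\[
\mathbb{E}\mathcal{E}_{\mathrm{con}}^{t+1}\le\rho\,\mathbb{E}\mathcal{E}_{\mathrm{con}}^t+B_{\mathrm{con}},
\]
where the $C_2\tau^2 D_\Delta$ part of $B_{\mathrm{con}}$ accounts for staleness under Assumption 4. Unrolling and summing over $t$ gives
\[
\frac1T\sum_t\mathbb{E}\mathcal{E}_{\mathrm{con}}^t\le \frac{\mathbb{E}\mathcal{E}_{\mathrm{con}}^0}{T(1-\rho)}+\frac{B_{\mathrm{con}}}{1-\rho}.
\]
Finally, I telescope the descent inequality over $t=0,\dots,T-1$, use $F\ge F^\star$, divide by $\eta E\underline\gamma T/2$, and substitute the consensus average. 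This yields \eqref{eq:main-body}, with the $L D_\Delta/\underline\gamma$ term coming from the second-order $\|\Delta\|^2$ contribution.

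The main obstacle is the first step. Because of asynchrony, $\bar w$ is defined by system-level accounting that includes buffered and in-flight messages, so I must check that buffer deduplication and FIFO drops do not break conservation of $X_{\mathrm{tot}}$ and $Y_{\mathrm{tot}}$. A dropped message destroys mass, so I would either count dropped mass as part of the compression/perturbation term or assume (as the bounded-mass assumption implicitly does) that $L$ is at least the in-degree, so that nothing is dropped. A second difficulty is that the active client's weight $y_{i_t}/Y_{\mathrm{tot}}$ is random and correlated with the iterate. I would handle this by conditioning on the activation and using the uniform lower bound $\underline\gamma$ rather than an exact expectation, accepting looser constants.
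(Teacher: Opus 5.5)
Your outline follows the paper's own proof: the decomposition $\bar w^{t+1}=\bar w^t+\gamma_t\Delta_{i_t}^t+\delta_t$ with $\|\delta_t\|_2\le\overline{c}\,\varepsilon_c$ (Step~1 of Lemma~\ref{lem:descent}, via Lemma~\ref{lem:numerator-perturbation}), the drift constant $D_\Delta$ (Lemma~\ref{lem:drift-bound}), the recursion of Lemma~\ref{lem:consensus-recursion}, and telescoping.

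\textbf{The main gap.} You never account for the first-order compression term $\langle\nabla F(\bar w^t),\delta_t\rangle$ in the smoothness expansion; you let $\varepsilon_c$ enter only through the quadratic term. The WCP error is deterministic, so this term has no zero mean. It also carries no factor of $\eta$, so it cannot be absorbed into $-\tfrac{\eta\gamma_t E}{2}\|\nabla F(\bar w^t)\|_2^2$.

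A fixed-weight Young split leaves an unscaled multiple of $\|\nabla F(\bar w^t)\|_2^2$. The paper's Lemma~\ref{lem:descent} uses $\langle\nabla F(\bar w^t),\delta_t\rangle\le\tfrac14\|\nabla F(\bar w^t)\|_2^2+\|\delta_t\|_2^2$ and then declares the leftover $\tfrac14$ term dominated. But the net coefficient is $\tfrac14-\tfrac{\eta\gamma_tE}{2}$, which is positive under $\eta\le\tfrac{1}{8LE}$ as soon as $L>\tfrac14$. So citing the paper does not close this step.

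Weighting Young by $\eta\gamma_tE$ instead leaves $\|\delta_t\|_2^2/(\eta\gamma_tE)$, that is, a floor of order $\overline{c}^{\,2}\varepsilon_c^2/(\eta E\underline{\gamma})^2$. A toy example suggests this order is genuine:
\begin{itemize}
\item take identical deterministic losses, quadratic on a large ball and Huberized outside it, so that $G$ is of order $\varepsilon_c/(\eta E)$;
\item set $\lambda=0$;
\item use the offset compressor $\mathcal{R}(\mathcal{C}(w))=w+\varepsilon_c v$ with $\|v\|_2=1$, which Assumption~6 permits.
\end{itemize}
Each broadcast pushes $\bar w$ by $\tfrac{d^+}{d^++1}\gamma_t\varepsilon_c v$, while the local update pulls it back by only about $\gamma_t\eta E\,\nabla F(\bar w)$. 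Hence $\bar w$ settles where $\|\nabla F(\bar w)\|_2\approx\tfrac{d^+}{d^++1}\,\varepsilon_c/(\eta E)$. Meanwhile, as $T\to\infty$, every term on the theorem's right-hand side is $O(\varepsilon_c^2)$ with $\eta$-free coefficients, provided $C_1,C_2,C_3,\rho$ are $\eta$-free. So no bookkeeping along this route yields the $2L\overline{c}^{\,2}\varepsilon_c^2$ term.

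\textbf{The other constants.} The same scaling problem affects the constants you claim to recover by dividing by $\eta E\underline{\gamma}T/2$.
\begin{itemize}
\item The second-order terms $\tfrac L2\gamma_t^2\|\Delta_{i_t}^t\|_2^2$ and $L\|\delta_t\|_2^2$ carry no factor $\eta E$. After division they become of order $\tfrac{L\gamma_t^2}{\eta E\underline{\gamma}}D_\Delta$ (the usual $O(\eta)$ floor) and $\tfrac{2L\overline{c}^{\,2}\varepsilon_c^2}{\eta E\underline{\gamma}}$, not $\tfrac{L}{\underline{\gamma}}D_\Delta$ and $2L\overline{c}^{\,2}\varepsilon_c^2$.
\item The first-order term $\eta E\gamma_t(\sigma^2+\zeta^2)$ becomes $\tfrac{2\gamma_t}{\underline{\gamma}}(\sigma^2+\zeta^2)$. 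This reduces to $2(\sigma^2+\zeta^2)$ only if the left-hand side is a $\gamma_t$-weighted average. If that term were genuinely ``unscaled by $\eta$'', as you write, it would become $2(\sigma^2+\zeta^2)/(\eta E\underline{\gamma})$.
\end{itemize}
The paper reaches the displayed constants only by ``absorbing fixed numerical factors'', which is not legitimate for factors like $1/(\eta E)$ and $1/\underline{\gamma}$. At best your route proves a weaker bound with these factors restored and the compression floor above.

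\textbf{Discarded messages.} Your worry here is well founded: Lemma~\ref{lem:mass-conservation} silently assumes every buffered share is eventually consumed. But taking $L$ at least the in-degree does not fix it. Sender deduplication discards the older entry, and its mass share, whenever a neighbor pushes twice before the receiver's next compute event, independently of $L$. You would have to merge replaced shares rather than discard them, which changes the algorithm. Alternatively, carry the lost mass as a perturbation of $Y_{\mathrm{tot}}$, but that breaks the constant-denominator identity your first step relies on.
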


\section{Experiments}

\begin{table*}[t]
\centering
\rowcolors{10}{white}{gray!30}
\caption{Test accuracy (\%) and standard deviation (SD) under different data heterogeneity levels $\alpha$.}
\label{tab:accuracy}
\setlength{\tabcolsep}{4pt}
\small
\begin{tabular}{l 
cc cc cc | 
cc cc cc | 
cc cc cc}
\toprule
\multirow{3}{*}{Method}
& \multicolumn{6}{c|}{CIFAR-10}
& \multicolumn{6}{c|}{CIFAR-100}
& \multicolumn{6}{c}{Tiny-ImageNet} \\
\cmidrule(lr){2-7}\cmidrule(lr){8-13}\cmidrule(lr){14-19}
& \multicolumn{2}{c}{$\alpha=0.1$}
& \multicolumn{2}{c}{$\alpha=0.4$}
& \multicolumn{2}{c|}{$\alpha=1.0$}
& \multicolumn{2}{c}{$\alpha=0.1$}
& \multicolumn{2}{c}{$\alpha=0.4$}
& \multicolumn{2}{c|}{$\alpha=1.0$}
& \multicolumn{2}{c}{$\alpha=0.1$}
& \multicolumn{2}{c}{$\alpha=0.4$}
& \multicolumn{2}{c}{$\alpha=1.0$} \\
\cmidrule(lr){2-3}\cmidrule(lr){4-5}\cmidrule(lr){6-7}
\cmidrule(lr){8-9}\cmidrule(lr){10-11}\cmidrule(lr){12-13}
\cmidrule(lr){14-15}\cmidrule(lr){16-17}\cmidrule(lr){18-19}
& Acc.\,$\uparrow$ & SD\,$\downarrow$ & Acc.\,$\uparrow$ & SD\,$\downarrow$ & Acc.\,$\uparrow$ & SD\,$\downarrow$
& Acc.\,$\uparrow$ & SD\,$\downarrow$ & Acc.\,$\uparrow$ & SD\,$\downarrow$ & Acc.\,$\uparrow$ & SD\,$\downarrow$
& Acc.\,$\uparrow$ & SD\,$\downarrow$ & Acc.\,$\uparrow$ & SD\,$\downarrow$ & Acc.\,$\uparrow$ & SD\,$\downarrow$ \\
\midrule
Independent
& 43.85 & 6.55 & 54.06 & 8.94 & 68.01 & 10.80 & 22.79 & 3.64 & 33.09 & 5.06 & 52.86 & 6.17 & 21.69 & 3.01 & 32.37 & 3.50 & 53.56 & 4.38 \\
\midrule
\multicolumn{19}{l}{\small\textit{Full-communication}} \\
Async-DFedAvg
& 57.72 & 6.84 & 63.05 & 8.70 & 74.22 & 9.55 & 41.14 & 4.28 & 46.60 & 3.75 & 60.06 & 7.54 & 43.79 & 2.68 & 50.97 & 3.00 & 63.56 & 3.99 \\
SWIFT
& 56.93 & 6.46 & 64.11 & 8.11 & 72.59 & 9.69 & 41.92 & 4.36 & 47.59 & 4.07 & 50.96 & 7.75 & 43.68 & 2.68 & 50.65 & 3.33 & 63.75 & 3.85 \\
\midrule
\multicolumn{19}{l}{\small\textit{Communication-efficient}} \\
DivShare
& 51.96 & 6.62 & 60.25 & 8.53 & 69.46 & 9.88 & 32.93 & 3.44 & 40.64 & 4.39 & 56.67 & 6.52 & 30.52 & 3.03 & 38.75 & 3.49 & 56.05 & 3.99 \\
\textbf{PushCen-ADFL}
& \textbf{58.30} & \textbf{6.32} & \textbf{66.22} & \textbf{7.82} & \textbf{73.89} & \textbf{9.28} & \textbf{35.15} & \textbf{3.59} & \textbf{44.21} & \textbf{4.25} & \textbf{57.23} & \textbf{7.58} & \textbf{36.70} & \textbf{2.69} & \textbf{44.62} & \textbf{3.43} & \textbf{58.04} & \textbf{3.99} \\
\bottomrule
\end{tabular}
\end{table*}

\begin{table*}[t]
\centering
\rowcolors{7}{gray!30}{white}
\caption{Per-push communication cost and relative overhead (normalized to PushCen-ADFL).}
\label{tab:comm-cost-relative}
\setlength{\tabcolsep}{5pt}
\small
\sisetup{detect-weight=true, table-number-alignment=center}
\begin{tabular}{l
S[table-format=3.3] S[table-format=1.3]
S[table-format=3.3] S[table-format=1.3]
S[table-format=3.3] S[table-format=1.3]}
\toprule
\multirow{2}{*}{Method}
& \multicolumn{2}{c}{CIFAR-10}
& \multicolumn{2}{c}{CIFAR-100}
& \multicolumn{2}{c}{Tiny-ImageNet} \\
\cmidrule(lr){2-3}\cmidrule(lr){4-5}\cmidrule(lr){6-7}
& \multicolumn{1}{c}{Comm. (MB)\,$\downarrow$} & \multicolumn{1}{c}{$\times$ Async\,$\downarrow$}
& \multicolumn{1}{c}{Comm. (MB)\,$\downarrow$} & \multicolumn{1}{c}{$\times$ Async\,$\downarrow$}
& \multicolumn{1}{c}{Comm. (MB)\,$\downarrow$} & \multicolumn{1}{c}{$\times$ Async\,$\downarrow$} \\
\midrule
Async-DFedAvg  & 10.27 & 6.33 & 449.11 & 5.88 & 451.16 & 5.89 \\
DivShare       &  2.06 & 1.27 &  89.82 & 1.18 &  90.23 & 1.18 \\
SWIFT          & 10.27 & 6.33 & 449.11 & 5.88 & 451.16 & 5.89 \\
\textbf{PushCen-ADFL} & \textbf{1.622} & \textbf{1.00} & \textbf{76.33} & \textbf{1.00} & \textbf{76.65} & \textbf{1.00} \\
\bottomrule
\end{tabular}
\end{table*}

\subsection{Experimental Setup}
\subsubsection{Experimental Environment}
We conduct experiments on a Slurm-managed cluster with two compute nodes. Node A has an Intel Core i9-13900K CPU, 64GB RAM and one NVIDIA GeForce RTX 4090 GPU; Node B has an Intel Xeon Silver 4309Y CPU, 128GB RAM and two NVIDIA A40 GPUs. All methods are implemented in Python. Our method and all baselines are implemented in PyTorch \footnote{https://pytorch.org} and trained/inferred on GPUs. To emulate client asynchrony and message arrivals in asynchronous decentralized FL, we implement an event-driven simulator that schedules 
local computation and communication events independently for each client, allowing updates and message deliveries to occur at arbitrary times.
Unless otherwise stated, all experiments use a fixed random seed, set $K=32$, and cap the FIFO buffer at $L=16$ to bound memory under bursty asynchronous arrivals. Each client pushes to 10 random neighbors via a gossip protocol.
The code was released at \footnote{https://github.com/SHVleV9CYWkK/ADFLab}.

\subsubsection{Datasets and Models}
We evaluate our method on three standard vision classification benchmarks: CIFAR-10, CIFAR-100 and Tiny-ImageNet. Following common practice, we pair datasets with architectures of appropriate capacity: LeNet for CIFAR-10 and ResNet-18 for CIFAR-100 and Tiny-ImageNet \cite{krizhevsky2009learning, le2015tiny, lecun1998gradient, he2016deep}. All methods are trained under the same data preprocessing and training pipeline to ensure fair comparisons.

\subsubsection{Data Partitioning and Heterogeneity}
To emulate the statistical heterogeneity (non-IID data) commonly observed in real-world federated learning, we partition each dataset using a Dirichlet distribution with concentration parameters $\alpha\in\{0.1,0.4,1.0\}$, where smaller $\alpha$ induces more skewed client data distributions. CIFAR-10 is partitioned across 100 clients, while CIFAR-100 and Tiny-ImageNet are each distributed among 50 clients
to avoid overly small per-client datasets under Dirichlet splits, which can make training unstable.
For evaluation under each client’s local distribution, we further split each client’s data into local training and test sets and report performance on the corresponding local test data.

\subsubsection{Delayed Client Scenario and Evaluation Protocol}
To evaluate delayed client participation in asynchronous decentralized federated learning, we adopt a round-free 
(pseudo-time–based)
evaluation protocol consistent with ADFL. Training proceeds over a fixed global pseudo-time horizon, uniformly divided into 60 evaluation intervals, at which the average Top-1 accuracy of all online clients is reported. For each dataset, 10\% of clients are randomly designated as delayed clients prior to training, while all remaining clients participate from the beginning. Each delayed client joins the system at a randomly sampled time during training. To ensure a fair comparison, both the identities of delayed clients and their join times are fixed and shared across all compared methods.

\subsubsection{Baseline Methods}
We compare against four representative baselines. \textbf{Async-DFedAvg} serves as a canonical benchmark, capturing a standard asynchronous decentralized adaptation of the FedAvg-style aggregation paradigm \cite{mcmahan2017communication}. \textbf{DivShare} represents a state-of-the-art communication-efficient approach in asynchronous decentralized settings and is included to benchmark communication efficiency \cite{biswas2025boosting}. \textbf{Independent} performs purely local training without any model exchange, providing a non-collaborative lower bound. Finally, \textbf{SWIFT} is a representative asynchronous decentralized FL algorithm and is commonly used as a baseline for handling statistical heterogeneity under asynchrony \cite{bornstein2022swift}.

\subsection{Experimental Results}
\subsubsection{Model Accuracy}
Table~\ref{tab:accuracy} reports the Top-1 accuracy together with the corresponding standard deviation (SD) under different levels of non-IID data.
In addition to mean performance, the SD provides a sensitivity analysis that characterizes the variability of model accuracy across clients and training trajectories in asynchronous decentralized settings.
On CIFAR-10, PushCen-ADFL achieves the best accuracy under high heterogeneity, with particularly strong gains as data become more non-IID.
At $\alpha=0.4$ and $\alpha=0.1$, PushCen-ADFL reaches 66.22\% and 58.30\%, respectively, outperforming the communication-efficient baseline DivShare by clear margins, while exhibiting SDs that are comparable to and often lower than, those of competing methods on CIFAR-10.
When heterogeneity is mild ($\alpha=1.0$), PushCen-ADFL remains highly competitive with full-communication baselines and shows similar variability.
On more challenging datasets (CIFAR-100 and Tiny-ImageNet), PushCen-ADFL consistently outperforms DivShare across all heterogeneity levels, with more pronounced improvements under stronger non-IID distributions.
For example, at $\alpha=0.1$, PushCen-ADFL improves accuracy by 2.22\% on CIFAR-100 and 6.18\% on Tiny-ImageNet, while maintaining overall variability comparable to the baselines across heterogeneity levels.
PushCen-ADFL achieves competitive mean accuracy and comparable SDs relative to full-communication methods, demonstrating that its centroid-based compression and regularization enable strong, robust performance in heterogeneous and asynchronous decentralized training without increasing sensitivity to updates.

\subsubsection{Communication Cost}
Table~\ref{tab:comm-cost-relative} reports the communication cost of a single client push event, which is the total payload transmitted when a client broadcasts its message to 10 neighbors. This metric isolates the \emph{per-push} communication footprint from the overall pseudo-time horizon and thus directly reflects the bandwidth burden induced by one communication action in our asynchronous decentralized protocol. 
We focus on the per-push cost because total system communication scales with the per-push payload times the number of push events. The event count depends on asynchrony, client delays, and scheduling, rather than the message format itself.
As expected, full-precision methods that transmit dense model information (Async-DFedAvg and SWIFT) incur substantially larger per-push traffic, reaching 10.27\,MB on CIFAR-10 and about 449 to 451\,MB on CIFAR-100 and Tiny-ImageNet. In contrast, communication-efficient baselines significantly reduce the payload size. DivShare lowers the per-push cost to 2.06\,MB (CIFAR-10) and 89.82 to 90.23\,MB (CIFAR-100 and Tiny-ImageNet), while PushCen-ADFL achieves the smallest per-push footprint across all datasets, requiring only 1.62\,MB on CIFAR-10 and 76.33 to 76.65\,MB on CIFAR-100 and Tiny-ImageNet. 

We further normalize the per-push communication volume to PushCen-ADFL, where PushCen-ADFL is 1.00 by definition. Under this normalization, DivShare requires 1.27$\times$ more bandwidth per push on CIFAR-10 and 1.18$\times$ on CIFAR-100 and Tiny-ImageNet, whereas Async-DFedAvg and SWIFT require 6.33$\times$ on CIFAR-10 and about 5.88 to 5.89$\times$ on CIFAR-100 and Tiny-ImageNet. Overall, these results demonstrate that PushCen-ADFL substantially reduces the communication footprint of each decentralized push, which is particularly important in asynchronous settings where communication are frequent and uncoordinated.

\subsubsection{Computational overhead}
\begin{table}[t]
\centering
\rowcolors{2}{gray!30}{white}
\caption{Computational cost (GFLOPs) of different methods on different datasets.}
\label{tab:gflops}
\setlength{\tabcolsep}{8pt}
\small
\begin{tabular}{l c c c}
\toprule
Method & CIFAR-10 & CIFAR-100 & Tiny-ImageNet \\
\midrule
Async-DFedAvg  & 4.15 & 10.70 & 42.44 \\
DivShare       & 4.15 & 10.61 & 42.35 \\
SWIFT          & 4.16 & 10.82 & 42.56 \\
\textbf{PushCen-ADFL}  & \textbf{4.03} & \textbf{11.93} & \textbf{42.41} \\
\bottomrule
\end{tabular}
\end{table}

Table~\ref{tab:gflops} reports the computational cost (GFLOPs) of different methods on three datasets. Overall, PushCen-ADFL maintains a lightweight computation footprint comparable to baselines. On CIFAR-10, PushCen-ADFL slightly reduces the cost to 4.03 GFLOPs, marginally lower than Async-DFedAvg, DivShare and SWIFT. On Tiny-ImageNet, it achieves 42.41 GFLOPs, essentially matching the baseline range. On CIFAR-100, PushCen-ADFL reaches 11.93 GFLOPs versus 10.61--10.82 GFLOPs for other methods, remaining within a manageable range.

We observe a slightly higher computation cost on CIFAR-100, mainly due to WCP-related operations (centroid clustering and re-encoding) and the centroid-based anchor construction during local updates. This effect is more visible with $K=32$, which yields less aggressive sparsity and thus provides limited training-time computation reduction while still incurring clustering overhead. 

\subsubsection{Ablation Study}
\label{sec:ablation_study}
\begin{table}[t]
\rowcolors{2}{white}{gray!30}
\centering
\caption{Ablation study on different datasets.}
\label{tab:ablation}
\setlength{\tabcolsep}{4pt}
\small
\begin{tabular}{lccc}
\toprule
Method & CIFAR-10 & CIFAR-100  & Tiny-ImageNet \\
\midrule
PushCen-ADFL (\emph{No Reg.})  & 65.30\% & 42.82\% & 43.93\% \\
PushCen-ADFL (\emph{No Buffer.}) & 65.04\% & 32.01\% & 43.85\% \\
\textbf{PushCen-ADFL}    & \textbf{66.22\%} & \textbf{44.21\%} & \textbf{44.62\%} \\
\bottomrule
\end{tabular}
\end{table}

Table~\ref{tab:ablation} evaluates the contribution of two key components in PushCen-ADFL: the regularization term and the buffer-based message handling mechanism. Removing the regularization (\emph{No Reg}) leads to a consistent decrease in accuracy across all datasets, from 66.22\% to 65.30\% on CIFAR-10, from 44.21\% to 42.82\% on CIFAR-100 and from 44.62\% to 43.93\% on Tiny-ImageNet. This indicates that the regularization is beneficial for stabilizing training and improving generalization under heterogeneous and asynchronous decentralized updates. Disabling the buffer mechanism (\emph{No Buffer}) also degrades performance, with the most pronounced effect observed on CIFAR-100 (44.21\% to 32.01\%), suggesting that buffer-based handling of incoming messages plays an important role in mitigating adverse effects caused by asynchronous arrivals (e.g., redundant or highly stale information) on more challenging tasks. Overall, both components contribute to the final performance and the full PushCen-ADFL achieves the best results across all datasets.

\subsection{Delayed Client Experiment Results}
\begin{table}[t]
\centering
\rowcolors{2}{white}{gray!30}
\caption{Delayed-client performance in terms of the mean maximum test accuracy (\%) and standard deviation (SD).}
\label{tab:delayed_clients}
\setlength{\tabcolsep}{5pt}
\small
\begin{tabular}{l cc cc cc}
\toprule
\multirow{2}{*}{Method}
& \multicolumn{2}{c}{CIFAR-10}
& \multicolumn{2}{c}{CIFAR-100}
& \multicolumn{2}{c}{Tiny-ImageNet} \\
\cmidrule(lr){2-3}\cmidrule(lr){4-5}\cmidrule(lr){6-7}
& Acc.\,$\uparrow$ & SD\,$\downarrow$
& Acc.\,$\uparrow$ & SD\,$\downarrow$
& Acc.\,$\uparrow$ & SD\,$\downarrow$ \\
\midrule
Independent
& 60.99 & 0.944
& 33.94 & 0.301
& 31.06 & 0.037 \\
\midrule
\multicolumn{7}{l}{\small\textit{Full-communication}} \\
Async-DFedAvg
& 68.54 & 0.436
& 49.43 & 0.289
& 50.32 & 0.029 \\
Swift
& 68.65 & 0.502
& 50.37 & 0.337
& 49.66 & 0.059 \\
\midrule
\multicolumn{7}{l}{\small\textit{Communication-efficient Methods}} \\
DivShare
& 67.47 & 0.474
& 44.48 & 0.182
& 37.71 & 0.080 \\
\textbf{PushCen-ADFL}
& \textbf{72.77} & \textbf{0.356}
& \textbf{46.50} & \textbf{0.321}
& \textbf{43.08} & \textbf{0.034} \\
\bottomrule
\end{tabular}
\end{table}

Table~\ref{tab:delayed_clients} reports delayed-client performance measured by the mean of each delayed client’s maximum test accuracy during training, together with the corresponding standard deviation (SD). 
On CIFAR-10, PushCen-ADFL reaches an accuracy of 72.77\%, exceeding both the full-communication baselines and the communication-efficient. 
On CIFAR-100 and Tiny-ImageNet, PushCen-ADFL attains 46.50\% and 43.08\%, respectively, consistently improving over DivShare while remaining competitive with full-communication methods. 
These results indicate that PushCen-ADFL is particularly effective at incorporating delayed clients and enabling them to achieve strong personalized performance under their local distributions.
Beyond the mean accuracy, SD shows that PushCen-ADFL also yields low variability across delayed clients, achieving the smallest SD on CIFAR-10 (0.356) and comparable to full-communication baselines on CIFAR-100 and Tiny-ImageNet.
Detailed training curves for delayed clients are provided in Appendix~\ref{app:delayed_client_learn_curves}.

\subsection{Hyperparameter Sensitivity}
\label{sec:hyperparam}

\subsubsection{Effect of centroid number $K$.}
Table~\ref{tab:k-analysis} shows that increasing $K$ improves accuracy at the cost of higher
communication and computation.
Even with $K = 32$, PushCen-ADFL keeps per-push cost and GFLOPs strictly below all baselines,
confirming that $K$ provides a practical knob to trade accuracy for efficiency.
We use $K=32$ by default as it yields the best accuracy across all datasets.

\begin{table}[t]
\centering
\rowcolors{2}{white}{gray!30}
\caption{Effect of centroid number $K$ on accuracy and per-push communication cost.}
\label{tab:k-analysis}
\setlength{\tabcolsep}{2pt}
\small
\begin{tabular}{c cc cc cc}
\toprule
\multirow{2}{*}{$K$}
& \multicolumn{2}{c}{CIFAR-10}
& \multicolumn{2}{c}{CIFAR-100}
& \multicolumn{2}{c}{Tiny-ImageNet} \\
\cmidrule(lr){2-3}\cmidrule(lr){4-5}\cmidrule(lr){6-7}
& Acc.\,(\%) & Comm.\,(MB)
& Acc.\,(\%) & Comm.\,(MB)
& Acc.\,(\%) & Comm.\,(MB) \\
\midrule
8  & 52.45 & 0.976 & 25.51 & 48.70 & 15.83 & 48.89 \\
16 & 62.17 & 1.298 & 36.67 & 62.51 & 35.96 & 62.77 \\
\textbf{32} & \textbf{66.22} & \textbf{1.622} & \textbf{44.21} & \textbf{76.33} & \textbf{44.62} & \textbf{76.65} \\
\bottomrule
\end{tabular}
\end{table}

\subsubsection{Effect of regularization weight $\lambda$.}
Table~\ref{tab:lambda} shows that enabling centroid regularization ($\lambda>0$) consistently
outperforms $\lambda=0$ across all datasets, confirming its role in stabilizing local updates
under heterogeneity and asynchrony.
On CIFAR-10, performance is stable for $\lambda\in[0.01,0.10]$ (${\approx}66\%$).
On CIFAR-100 and Tiny-ImageNet, smaller $\lambda$ is slightly preferred, yet gains persist
across a broad range.
We adopt $\lambda=0.1$ as a simple default.

\begin{table}[t]
\centering
\rowcolors{2}{white}{gray!30}
\caption{Effect of regularization weight $\lambda$ (top-1 accuracy \%).
$\lambda=0$: No Reg.}
\label{tab:lambda}
\setlength{\tabcolsep}{6pt}
\small
\begin{tabular}{c ccc}
\toprule
$\lambda$ & CIFAR-10 & CIFAR-100 & Tiny-ImageNet \\
\midrule
0.10 & 66.22 & 44.21 & 44.62 \\
0.05 & 66.19 & 43.72 & 44.92 \\
0.01 & \textbf{66.40} & 43.35 & 44.59 \\
0.005 & 65.51 & 43.16 & 44.55 \\
0.001 & 65.68 & \textbf{44.57} & \textbf{45.40} \\
\midrule
0\,(\textit{No Reg.}) & 65.30 & 42.82 & 43.93 \\
\bottomrule
\end{tabular}
\end{table}

\subsection{Supplementary Experiments}
\label{sec:supplementary}
We provide additional learning-dynamics analyses to complement the main results.
The global accuracy curves show that PushCen-ADFL follows stable convergence
trajectories across all datasets, supporting the final averaged accuracies reported
in Table~\ref{tab:accuracy}.
The delayed-client accuracy curves on a pseudo-time axis further show that
late-joining clients improve rapidly after entering the system and reach
performance comparable to online clients within a short pseudo-time window,
corroborating the delayed-client results summarized in Table~\ref{tab:delayed_clients}.
Together, these results reinforce the convergence stability and robustness of
PushCen-ADFL under asynchronous and heterogeneous decentralized settings.

\section{Conclusion}
This work investigated asynchronous decentralized federated learning, where frequent peer-to-peer, asymmetric and unbalanced communication and the statistical heterogeneity jointly induce high bandwidth cost, aggregation bias and model drift.
We proposed PushCen-ADFL, an asynchronous decentralized framework that couples communication, de-biased aggregation, and local stabilization in a shared centroid representation space, forming a closed loop between compression and optimization.
It integrates centroid-based model representation, average-preserving push-sum aggregation, centroid regularization, and bounded sender-deduplicated buffering to enable stable and robust training with delayed client participation.
Extensive experiments demonstrate that PushCen-ADFL consistently improves performance over existing asynchronous decentralized baselines while substantially reducing per-push communication cost, achieving a favorable accuracy-communication trade-off under dynamic settings.

\bibliographystyle{ACM-Reference-Format}
\bibliography{sample-base}

\newpage


\appendix

\onecolumn

\section{Appendix Overview}
\label{sec:app_overview}

The appendix provides complementary analyses and additional experimental results that support the main paper:

\begin{itemize}
    \item \textbf{Communication and computation properties of WCP.}
    Appendix~\ref{app:wcp_details} revisits Weight Clustering Pruning (WCP) and derives its per-push communication cost as the sum of centroid transmission and index assignment, yielding an interpretable compression ratio mainly controlled by the number of centroids $K$.
    It further compares the clustering overhead with standard training cost and shows that, under typical settings, WCP adds only a small fraction of extra computation.

    \item \textbf{Detailed convergence analysis of PushCen-ADFL.}
    Appendix~\ref{app:conv} presents a full convergence analysis under an event-driven asynchronous model on directed graphs.
    It formalizes push-sum aggregation with buffered (possibly stale) messages, centroid compression/reconstruction error, and the centroid proximal regularization used in local updates.
    The analysis establishes key lemmas on system-level mass conservation, numerator perturbation induced by lossy re-encoding, a consensus-error recursion under directed mixing with staleness and compression, and bounds showing how the centroid proximal term suppresses local deviation from its anchor.
    These components are combined into a constant-closed stationarity guarantee that makes the impacts of heterogeneity, staleness, directed mixing, and compression distortion explicit.

    \item \textbf{Supplementary experiments and ablations.}
    Appendix~\ref{app:supplementary_experiments} reports additional empirical evidence beyond the main tables, including global test-accuracy trajectories and delayed-client accuracy curves to visualize learning dynamics under asynchrony.
    It also provides ablations on the number of centroids $K$ and the regularization weight $\lambda$, quantifying the trade-offs between accuracy, per-push communication cost, and computational overhead, and validating the effectiveness and robustness of centroid regularization.
\end{itemize}

\section{Communication and Computational Analysis of Weight Clustering Pruning}
\label{app:wcp_details}

In this section, we provide a unified analysis of the communication and computational properties of Weight Clustering Pruning (WCP).
Although WCP was originally introduced in our previous work, it remains a core component of the proposed method and plays an important role in reducing both communication cost and practical system overhead.
For completeness, we revisit its theoretical properties and highlight why WCP remains efficient and well-suited for asynchronous federated learning.

\subsection{Communication Efficiency of Weight Clustering Pruning}
\label{subsec:wcp_comm}

We first analyze how WCP reduces communication overhead compared to standard federated learning.
In conventional federated optimization, each client transmits a full model parameter vector $\theta \in \mathbb{R}^N$ at every communication event.
Assuming each parameter is represented using $B$ bits, the communication cost per client per round is
\begin{equation}
C_{\text{full}} = N \times B \quad \text{bits}.
\end{equation}

Under WCP, model parameters are represented using a clustered form.
Specifically, weights are quantized into $k$ clusters and represented by a centroid set and an index assignment.
One centroid is fixed at zero to naturally encode pruned weights and therefore does not need to be transmitted.

The total communication cost consists of two components.
The first is the transmission of centroid values.
Only $k-1$ non-zero centroids need to be communicated, leading to
\begin{equation}
C_{\text{centroid}} = (k-1) \times B \quad \text{bits}.
\end{equation}

The second component corresponds to the index sequence that maps each weight to its associated centroid.
Each index requires $\lceil \log_2 k \rceil$ bits, resulting in
\begin{equation}
C_{\text{index}} = N \times \lceil \log_2 k \rceil \quad \text{bits}.
\end{equation}

Combining the two terms, the total communication cost of WCP is
\begin{equation}
C_{\text{WCP}} = (k-1) \times B + N \times \lceil \log_2 k \rceil.
\end{equation}

To quantify the communication reduction, we define the compression ratio
\begin{equation}
\rho = \frac{C_{\text{WCP}}}{C_{\text{full}}}
= \frac{(k-1)B + N \lceil \log_2 k \rceil}{N B}.
\end{equation}

Since modern neural networks typically satisfy $N \gg k$, the centroid transmission term becomes negligible.
Thus, the compression ratio can be approximated as
\begin{equation}
\rho \approx \frac{\lceil \log_2 k \rceil}{B}.
\end{equation}

This result shows that communication efficiency is primarily controlled by the number of clusters $k$.
A moderate choice of $k$ yields substantial bandwidth savings while maintaining sufficient representational capacity, which is essential for preserving model accuracy.

\subsection{Computational Overhead of Weight Clustering Pruning}
\label{subsec:wcp_comp}

We now examine the computational cost introduced by WCP and compare it to the standard training cost in federated learning.

Consider a neural network with $L$ layers, where the parameter matrix of layer $l$ contains $P^l = n_{l-1} n_l$ parameters.
The total number of parameters is
\begin{equation}
P_{\text{tot}} = \sum_{l=1}^{L} P^l.
\end{equation}

\paragraph{Training Cost.}
For each training sample, the forward pass, backward pass and parameter update all incur costs proportional to the number of parameters.
Thus, the per-sample training cost satisfies
\begin{equation}
C_{\text{train}} = \mathcal{O}\!\left(\sum_{l=1}^{L} P^l \right).
\end{equation}
For a local dataset of size $N$ and $E$ training epochs, the total training cost becomes
\begin{equation}
C_{\text{total\_train}} = \mathcal{O}\!\left(N E P_{\text{tot}}\right).
\end{equation}

\paragraph{Clustering Cost.}
WCP applies clustering independently to each layer.
For a layer with $P^l$ parameters, a K-means iteration consists of an assignment step with complexity $\mathcal{O}(P^l k)$ and a centroid update step with complexity $\mathcal{O}(P^l)$.
Assuming $T$ clustering iterations, the cost for layer $l$ is
\begin{equation}
C_{\text{cluster}}^{l} = \mathcal{O}(T k P^l).
\end{equation}
Summing over all layers yields
\begin{equation}
C_{\text{total\_cluster}} = \mathcal{O}(T k P_{\text{tot}}).
\end{equation}

\paragraph{Cost Comparison.}
The ratio between training cost and clustering cost is therefore
\begin{equation}
\frac{C_{\text{total\_train}}}{C_{\text{total\_cluster}}}
= \frac{N E}{T k}.
\end{equation}

Under typical experimental settings (e.g., $E=1$, $k=32$ and small $T$), this ratio is substantially larger than one.
This indicates that the computational overhead introduced by WCP is dominated by the standard training process and remains a small fraction of the overall computation.

\paragraph{Summary.}
Overall, WCP introduces negligible additional computation while providing significant communication savings.
Its computational cost scales linearly with the number of parameters and clusters, ensuring good scalability.
These properties make WCP particularly suitable for asynchronous and decentralized federated learning, where both communication efficiency and lightweight local computation are critical.

\section{Detailed convergence analysis of PushCen-ADFL}
\label{app:conv}
\subsection{Notation and Event-Driven Model}
\label{app:conv-notation}

\paragraph{Network and directed communication.}
We consider $N$ clients indexed by $\mathcal{V}=\{1,\dots,N\}$ connected by a directed graph
$\mathcal{G}=(\mathcal{V},\mathcal{E})$.
For client $i$, denote its in-neighbors and out-neighbors by $\mathcal{N}_i^{-}$ and $\mathcal{N}_i^{+}$ and let
$d_i^{+}\triangleq|\mathcal{N}_i^{+}|$.
Clients exchange messages only along directed edges; there is no central server.

\paragraph{Event-driven asynchronous timeline.}
Training evolves on a global \emph{event} timeline $t=0,1,2,\dots$.
At each event $t$, either (i) a message arrives at some client and is appended into its local buffer, or
(ii) a single client is \emph{activated} and performs an \emph{aggregate $\rightarrow$ local update $\rightarrow$ broadcast}
cycle.
Let $i_t\in\mathcal{V}$ denote the activated client at event $t$ (if any).
No global synchronization is assumed and different clients may be activated at different rates.

\paragraph{Local client states.}
Each client $i$ maintains the following states at event $t$:
\begin{itemize}
    \item \textbf{Model} $w_i^t\in\mathbb{R}^d$, the local trainable parameters (vectorized for analysis).
    \item \textbf{Push-sum mass} $y_i^t>0$, corresponding to \texttt{ps\_mass} in the implementation.
    \item \textbf{centroid dictionary} $G_i^t$, the layer-wise centroid tables maintained/updated by aggregation.
    \item \textbf{Buffer} $\mathcal{B}_i^t$, storing the most recent neighbor messages received since the last compute event.
\end{itemize}
\noindent\textbf{Remark (local-only parameters).}
In the implementation, BatchNorm-related parameters/statistics are \emph{local-only} (FedBN-style): they are not aggregated
and remain purely local. Our analysis focuses on the shared trainable vector $w_i^t$ (excluding such local-only components).

\paragraph{Centroid compression and reconstruction (implementation-aligned).}
When client $j$ transmits, it sends a centroid-compressed payload
\(
\mathcal{C}(w_j^t) \triangleq (V_j^t, A_j^t, U_j^t),
\)
where $V_j^t$ is the centroid table, $A_j^t$ is the assignment map and $U_j^t$ contains \emph{uncompressed shared tensors}
that are communicated without clustering (e.g., biases or non-compressible layers).
Local-only parameters (e.g., BatchNorm statistics) are \emph{not} transmitted.
Upon reception, client $i$ reconstructs an approximate model
\(
\hat{w}_j^t = \mathcal{R}(\mathcal{C}(w_j^t)),
\)
where $\mathcal{R}$ indexes $V_j^t$ with $A_j^t$ and copies $U_j^t$.
We define the reconstruction (compression) error:
\begin{equation}
e_j^t \triangleq \hat{w}_j^t - w_j^t.
\label{eq:compression-error-def}
\end{equation}

\paragraph{Buffered messages.}
A message sent from $j$ to $i$ is represented as
\(
m_{j\to i}^t = (\hat{w}_j^t, V_j^t, y_{j\to i}^t, j),
\)
where $y_{j\to i}^t>0$ is the push-sum mass share attached to the message (field \texttt{ps\_mass\_share}).
At event $t$, client $i$'s buffer is
\(
\mathcal{B}_i^t=\{m_{j\to i}^t: j\in\mathcal{S}_i^t\},
\)
where $\mathcal{S}_i^t\subseteq\mathcal{N}_i^{-}$ indexes the set of senders currently stored.
The implementation keeps only the newest message per sender (sender deduplication) and may enforce a capacity bound; in analysis
we treat $\mathcal{B}_i^t$ as an arbitrary finite set of most-recent neighbor messages.

\paragraph{Push-sum reformulation (numerator/denominator variables).}
To make average-preservation explicit under directed, potentially unbalanced communication, we introduce the standard push-sum
\emph{numerator} variable
\begin{equation}
x_i^t \triangleq y_i^t\, w_i^t \in \mathbb{R}^d.
\label{eq:pushsum-x-def}
\end{equation}
The de-biased local estimate is always recovered as $w_i^t = x_i^t / y_i^t$.

\paragraph{Mass splitting (code-consistent).}
After finishing a local update, client $i$ splits its current mass uniformly among its out-neighbors and itself:
\begin{equation}
y_{i\to k}^t \;=\; \frac{y_i^t}{d_i^{+}+1},\quad \forall k\in\mathcal{N}_i^{+},
\qquad
y_i^{t+} \;=\; \frac{y_i^t}{d_i^{+}+1}.
\label{eq:mass-splitting}
\end{equation}
Each outgoing message to $k$ carries $y_{i\to k}^t$ and client $i$ keeps the same share locally.
Hence, the sender-side conservation holds \emph{exactly}:
\(
d_i^{+}\cdot \frac{y_i^t}{d_i^{+}+1} + 1\cdot \frac{y_i^t}{d_i^{+}+1} = y_i^t.
\)
Analogously, the transmitted numerator share is $x_{i\to k}^t \triangleq y_{i\to k}^t \hat{w}_i^t$ and the locally retained
numerator is $x_i^{t+}\triangleq y_i^{t+} w_i^{t+}$.

\paragraph{System-level accounting for in-flight/buffered mass.}
Under asynchronous message passing, a portion of mass may reside in buffers (or be in transit) between send and aggregation events.
Let $\mathcal{M}^t$ denote the set of all messages that are currently buffered or in transit in the system at event $t$.
For each message $m\in\mathcal{M}^t$, let $y_m$ be its mass share and $x_m$ its numerator share.
We define the \emph{total system mass} and \emph{total system numerator}:
\begin{equation}
Y_{\mathrm{tot}}^t \triangleq \sum_{i=1}^{N} y_i^t + \sum_{m\in\mathcal{M}^t} y_m,
\qquad
X_{\mathrm{tot}}^t \triangleq \sum_{i=1}^{N} x_i^t + \sum_{m\in\mathcal{M}^t} x_m .
\label{eq:total-mass-def}
\end{equation}
The mass-splitting rule~\eqref{eq:mass-splitting} and the aggregation rule (which transfers buffered shares into the local
states) together imply that $(Y_{\mathrm{tot}}^t, X_{\mathrm{tot}}^t)$ are invariant over time; this will be formalized
in the mass-conservation lemma.

\paragraph{De-biased global reference sequence.}
Using the conserved system totals, we define the canonical de-biased global reference model:
\begin{equation}
\bar{w}^t \triangleq \frac{X_{\mathrm{tot}}^t}{Y_{\mathrm{tot}}^t}.
\label{eq:global-reference}
\end{equation}
This sequence is well-defined on asymmetric and unbalanced graphs and serves as the main reference trajectory for optimality
analysis (while $w_i^t$ are local de-biased estimates with a consensus error).

\paragraph{Local objective and centroid proximal target.}
Client $i$ holds a private dataset $\mathcal{D}_i$ and local objective
\(
f_i(w)\triangleq \mathbb{E}_{\xi\sim\mathcal{D}_i}[\ell(w;\xi)].
\)
Before each compute event, client $i$ constructs a proximal target $\tilde{w}_i^t$ from its current centroid dictionary estimate $G_i^t$
and its current assignment map (obtained by clustering the current local model) and performs local stochastic optimization on
\begin{equation}
f_i(w) + \lambda \|w-\tilde{w}_i^t\|_2^2,
\label{eq:local-prox-objective}
\end{equation}
where $\lambda\ge 0$ is the regularization weight.
The construction of $\tilde{w}_i^t$ is implementation-aligned: it is produced by indexing centroid tables using the assignment
map so that $\tilde{w}_i^t$ matches the shape of $w_i^t$.

\subsection{Assumptions}
\label{sec:conv-assumptions}

We list the standard assumptions required to establish the convergence of PushCen-ADFL.
These assumptions cover stochastic optimization with heterogeneous data distributions,
asynchronous communication with message delays and centroid-based lossy communication over directed graphs.
All random variables are defined on a common probability space induced by data sampling,
client activations and message arrivals.

\paragraph{(A1) Smoothness and lower boundedness.}
Each local objective $f_i:\mathbb{R}^d\rightarrow\mathbb{R}$ is $L$-smooth:
\begin{equation}
\|\nabla f_i(u)-\nabla f_i(v)\|_2 \le L\|u-v\|_2,\quad \forall u,v\in\mathbb{R}^d,
\label{eq:assump-smooth}
\end{equation}
and the global objective $F(w)\triangleq \frac{1}{N}\sum_{i=1}^N f_i(w)$ is lower bounded,
i.e., $F(w)\ge F^\star$ for some finite $F^\star$.

\paragraph{(A2) Unbiased stochastic gradients with bounded variance.}
At client $i$, a stochastic gradient computed from a mini-batch $\xi\sim\mathcal{D}_i$
satisfies
\begin{equation}
\mathbb{E}\!\left[g_i(w;\xi)\mid w\right] = \nabla f_i(w),
\qquad
\mathbb{E}\!\left[\|g_i(w;\xi)-\nabla f_i(w)\|_2^2 \mid w\right] \le \sigma^2,
\label{eq:assump-sgd}
\end{equation}
for some $\sigma^2<\infty$.
When centroid proximal regularization is applied, we equivalently view the client as optimizing the
regularized objective $f_i^\lambda(w)\triangleq f_i(w)+\lambda\|w-\tilde w_i\|_2^2$,
and the stochastic gradient is taken w.r.t.\ $f_i^\lambda$.

\paragraph{(A3) Uniform client heterogeneity.}
There exists a constant $\zeta^2<\infty$ such that for all $w\in\mathbb{R}^d$ and all clients $i\in[N]$,
\begin{equation}
\|\nabla f_i(w)-\nabla F(w)\|_2^2 \le \zeta^2,
\qquad\text{where}\qquad
F(w)\triangleq \frac{1}{N}\sum_{j=1}^{N} f_j(w).
\label{eq:A3-uniform}
\end{equation}

\noindent\textbf{Remark.}
Assumption A3 is a standard strengthening of the mean-heterogeneity condition and is commonly used
in non-convex FL analyses to control one-step descent for the activated client under asynchrony.

\paragraph{(A4) Bounded asynchrony (message staleness).}
Messages aggregated at event $t$ may correspond to stale model states due to heterogeneous computation/communication.
We assume there exists an integer $\tau\ge 0$ such that any message used by an activated client at event $t$
was generated within the last $\tau$ activations/events.
Equivalently, if a buffered message $m_{j\to i}^t$ carries a reconstruction $\hat w_j^{\kappa}$,
then its generation time $\kappa$ satisfies
\begin{equation}
t-\kappa \le \tau.
\label{eq:assump-stale}
\end{equation}

\paragraph{(A5) Directed mixing and push-sum well-posedness.}
The directed communication graph is strongly connected and the sequence of effective communication events
is sufficiently mixing.
Concretely, there exists an integer $B\ge 1$ such that for any window of $B$ consecutive events,
the union of directed edges along which messages are delivered is strongly connected.
Moreover, the push-sum masses are uniformly bounded away from zero:
there exists $y_{\min}>0$ such that
\begin{equation}
y_i^t \ge y_{\min},\quad \forall i\in\mathcal{V},\ \forall t.
\label{eq:assump-mass-lb}
\end{equation}
This ensures the de-biased ratios $w_i^t = x_i^t/y_i^t$ and the global reference $\bar w^t$ are well-defined.

\paragraph{(A6) Bounded absolute compression error (C1).}
The centroid compression-reconstruction operator $\mathcal{R}\circ\mathcal{C}$ induces a bounded absolute error:
there exists $\varepsilon_c\ge 0$ such that for any communicated model $w$,
\begin{equation}
\|\mathcal{R}(\mathcal{C}(w)) - w\|_2 \le \varepsilon_c.
\label{eq:assump-compress-C1}
\end{equation}
Equivalently, with $e_j^t$ defined in~\eqref{eq:compression-error-def}, we have $\|e_j^t\|_2 \le \varepsilon_c$
for all transmitted messages.
This assumption captures the deterministic quantization/clustering distortion introduced by WCP with a fixed
number of centroids $K$.

\noindent\textbf{Remark.}
Assumptions (A1)-(A3) are standard in stochastic non-convex optimization and federated learning.
Assumptions (A4)-(A5) formalize bounded staleness and directed-graph mixing needed for asynchronous push-sum.
Assumption (A6) matches our implementation, where WCP-based centroid encoding introduces deterministic reconstruction
distortion controlled by the clustering resolution (e.g., $K=32$).

\subsection{Update Rules}
\label{sec:conv-updaterule}

We formalize the event-driven update rules of PushCen-ADFL (ADFLCenReg) by separating
(i) asynchronous push-sum aggregation over buffered neighbor messages and
(ii) local stochastic optimization with centroid proximal regularization.
Throughout, we use the push-sum numerator-denominator representation
$x_i^t = y_i^t w_i^t$ introduced in~\eqref{eq:pushsum-x-def}.

\paragraph{Message format and staleness-aware notation.}
When client $j$ is activated and broadcasts to an out-neighbor $i\in\mathcal{N}_j^{+}$,
it transmits a tuple
\begin{equation}
m_{j\to i}^{(\kappa)} \;=\; \big(\mathcal{C}(w_j^{\kappa}),\; y_{j\to i}^{(\kappa)},\; j \big),
\label{eq:message-format}
\end{equation}
where $\kappa$ denotes the \emph{generation event index} at which the message is produced,
$\mathcal{C}(w_j^{\kappa})=(V_j^{\kappa},A_j^{\kappa},U_j^{\kappa})$ is the centroid-compressed payload and
$y_{j\to i}^{(\kappa)}>0$ is the attached mass share.
Upon reception at some later event $t\ge \kappa$, client $i$ reconstructs the (possibly stale) neighbor model
\begin{equation}
\hat{w}_{j\to i}^t \;\triangleq\; \mathcal{R}\!\left(\mathcal{C}(w_j^{\kappa})\right),
\label{eq:stale-recon}
\end{equation}
and buffers the tuple $(\hat{w}_{j\to i}^t, V_j^{\kappa}, y_{j\to i}^{(\kappa)}, j)$.
\footnote{Here $\hat{w}_{j\to i}^t$ denotes the content \emph{available at client $i$ at event $t$}.
Due to asynchrony, it generally represents a stale state of client $j$, i.e., $\hat{w}_{j\to i}^t\approx w_j^{t-\tau}$
under Assumption~\ref{eq:assump-stale}.}
At the time client $i$ is activated at event $t$, let $\mathcal{B}_i^t$ denote its buffer,
i.e., the set of currently stored messages to be aggregated (sender-deduplicated and possibly capacity-bounded).

\paragraph{C1. Asynchronous push-sum aggregation (buffer processing).}
When client $i=i_t$ is activated at event $t$, it aggregates all buffered neighbor messages
$\mathcal{B}_i^t = \{(\hat w_{j\to i}^t, V_j^{\kappa}, y_{j\to i}^{(\kappa)}, j)\}$.
Define the post-buffer total mass at $i$:
\begin{equation}
Y_i^t \;\triangleq\; y_i^t + \sum_{(\cdot,\cdot,y_{j\to i}^{(\kappa)},\cdot)\in\mathcal{B}_i^t} y_{j\to i}^{(\kappa)}.
\label{eq:agg-total-mass}
\end{equation}
The de-biased model update is
\begin{equation}
w_i^{t+\frac{1}{3}}
\;=\;
\frac{y_i^t}{Y_i^t}\, w_i^t
\;+\;
\sum_{(\hat w_{j\to i}^t,\cdot,y_{j\to i}^{(\kappa)},\cdot)\in\mathcal{B}_i^t}
\frac{y_{j\to i}^{(\kappa)}}{Y_i^t}\, \hat w_{j\to i}^t,
\label{eq:agg-model}
\end{equation}
with the mass update and buffer clearing
\begin{equation}
y_i^{t+\frac{1}{3}} \;=\; Y_i^t,
\qquad
\mathcal{B}_i^{t+\frac{1}{3}} \;=\; \emptyset.
\label{eq:agg-mass}
\end{equation}
Equivalently, in numerator-denominator form, letting $x_i^t=y_i^t w_i^t$ and
$x_{j\to i}^{(\kappa)}\triangleq y_{j\to i}^{(\kappa)} \hat w_{j\to i}^t$, the aggregation is additive:
\begin{equation}
x_i^{t+\frac{1}{3}} \;=\; x_i^t + \sum_{(\cdot,\cdot,y_{j\to i}^{(\kappa)},\cdot)\in\mathcal{B}_i^t} x_{j\to i}^{(\kappa)},
\qquad
y_i^{t+\frac{1}{3}} \;=\; y_i^t + \sum_{(\cdot,\cdot,y_{j\to i}^{(\kappa)},\cdot)\in\mathcal{B}_i^t} y_{j\to i}^{(\kappa)}.
\label{eq:agg-xy}
\end{equation}
Finally $w_i^{t+\frac{1}{3}} = x_i^{t+\frac{1}{3}} / y_i^{t+\frac{1}{3}}$.

\paragraph{centroid dictionary aggregation.}
Client $i$ updates its centroid dictionary estimate using the same push-sum weights:
\begin{equation}
G_i^{t+\frac{1}{3}}
\;=\;
\frac{y_i^t}{Y_i^t}\, G_i^t
\;+\;
\sum_{(\cdot,V_j^{\kappa},y_{j\to i}^{(\kappa)},\cdot)\in\mathcal{B}_i^t}
\frac{y_{j\to i}^{(\kappa)}}{Y_i^t}\, V_j^{\kappa}.
\label{eq:agg-pool}
\end{equation}
(We treat $G_i^t$ as an auxiliary state whose role is to form the proximal target in the local update.)

\paragraph{C2. Local stochastic optimization with centroid proximal regularization.}
After aggregation, client $i$ performs $E$ local epochs/steps of SGD on
\begin{equation}
\Phi_i^t(w) \;\triangleq\; f_i(w) + \lambda \|w - \tilde w_i^t\|_2^2,
\label{eq:local-phi}
\end{equation}
where $\tilde w_i^t$ is the centroid-based proximal target constructed from the current centroid dictionary $G_i^{t+\frac{1}{3}}$
and the current assignment map (obtained by applying WCP to $w_i^{t+\frac{1}{3}}$).
Let $(V_i^t,A_i^t,M_i^t)$ be the centroid table, assignment map and pruning mask produced by WCP at the beginning of the local update.
For $e=0,\dots,E-1$, we write the masked proximal-SGD step as

\begin{equation}
\begin{aligned}
w_{i,e+1}
&=
\Pi_{M_i^t}\!\Bigl(
    w_{i,e}
    - \eta\, g_i\!\left(w_{i,e};\xi_{i,e}\right) \\
&\qquad\quad
    - 2\eta\lambda\,(w_{i,e}-\tilde w_i^t)
\Bigr),
\qquad
w_{i,0} = w_i^{t+\frac{1}{3}} .
\end{aligned}
\label{eq:local-sgd}
\end{equation}

where $\xi_{i,e}$ is a mini-batch sampled from $\mathcal{D}_i$ and $g_i(\cdot;\xi)$ is the stochastic gradient of $\ell(\cdot;\xi)$.
The masking operator $\Pi_{M_i^t}(\cdot)$ enforces pruning by element-wise multiplication,
\begin{equation}
\Pi_{M_i^t}(u) \triangleq u \odot M_i^t,
\label{eq:mask-proj}
\end{equation}
matching the implementation (\texttt{w $\leftarrow$ w * M}).
We denote the post-local-update model by
\begin{equation}
w_i^{t+\frac{2}{3}} \;\triangleq\; w_{i,E}.
\label{eq:local-output}
\end{equation}

\paragraph{C3. Post-training re-encoding and broadcast (mass splitting).}
After local optimization, client $i$ re-encodes its model via WCP to obtain an updated centroid representation
$\mathcal{C}(w_i^{t+\frac{2}{3}})=(V_i^{t+\frac{2}{3}},A_i^{t+\frac{2}{3}},U_i^{t+\frac{2}{3}})$ to be transmitted.
It then performs push-sum mass splitting:
\begin{equation}
y_{i\to k}^{t+\frac{2}{3}} \;=\; \frac{y_i^{t+\frac{1}{3}}}{d_i^{+}+1},\quad \forall k\in\mathcal{N}_i^{+},
\qquad
y_i^{t+1} \;=\; \frac{y_i^{t+\frac{1}{3}}}{d_i^{+}+1},
\label{eq:broadcast-mass}
\end{equation}
and sends $(\mathcal{C}(w_i^{t+\frac{2}{3}}), y_{i\to k}^{t+\frac{2}{3}}, i)$ to each $k\in\mathcal{N}_i^{+}$.
Finally,
\begin{equation}
w_i^{t+1} \;\triangleq\; w_i^{t+\frac{2}{3}},
\qquad
x_i^{t+1} \;\triangleq\; y_i^{t+1} w_i^{t+1}.
\label{eq:end-of-event}
\end{equation}
For any client $\ell\neq i_t$ that is not activated at event $t$, we keep its local state unchanged:
$(w_\ell^{t+1},x_\ell^{t+1},y_\ell^{t+1},G_\ell^{t+1})=(w_\ell^{t},x_\ell^{t},y_\ell^{t},G_\ell^{t})$.

\subsection{Key Quantities}
\label{sec:conv-key-quantities}

To analyze PushCen-ADFL under directed asynchronous communication, we introduce a de-biased global reference sequence
together with two key error measures: an \emph{optimality} metric that tracks progress toward stationary points and a
\emph{consensus} metric that tracks network disagreement.

\paragraph{De-biased global reference (push-sum average).}
Recall the total system numerator and mass defined in~\eqref{eq:total-mass-def}:
$X_{\mathrm{tot}}^t$ and $Y_{\mathrm{tot}}^t$.
We define the (virtual) de-biased global reference model as
\begin{equation}
\bar{w}^t \triangleq \frac{X_{\mathrm{tot}}^t}{Y_{\mathrm{tot}}^t}.
\label{eq:key-wbar}
\end{equation}
Unlike the naive arithmetic mean $\frac{1}{N}\sum_{i=1}^N w_i^t$, $\bar{w}^t$ is the correct average-preserving reference
on directed and potentially unbalanced graphs under push-sum dynamics.

\paragraph{Local de-biased models (node states).}
Each client maintains a local push-sum numerator $x_i^t$ and mass $y_i^t$.
The local de-biased model held at node $i$ is
\begin{equation}
w_i^t \triangleq \frac{x_i^t}{y_i^t}.
\label{eq:key-local-debias}
\end{equation}
\textbf{Important:} We strictly distinguish the local model $w_i^t$ from the centroid proximal anchor $\tilde{w}_i^t$
defined in~\eqref{eq:local-phi}, where $\tilde{w}_i^t$ serves as the regularization target during local updates.

\paragraph{Consensus error (network disagreement).}
We quantify network disagreement by the mean-squared deviation of local models from the de-biased global reference:
\begin{equation}
\mathcal{E}_{\mathrm{con}}^t
\triangleq
\frac{1}{N}\sum_{i=1}^{N}\left\|w_i^t - \bar{w}^t\right\|_2^2.
\label{eq:key-consensus}
\end{equation}
This term captures the combined effects of directed mixing, asynchronous staleness and lossy reconstruction on consensus.

\paragraph{Optimality (stationarity) measure.}
For the global objective $F(w)\triangleq \frac{1}{N}\sum_{i=1}^{N} f_i(w)$, we use the standard non-convex stationarity metric
evaluated at the de-biased global reference sequence:
\begin{equation}
\mathcal{E}_{\mathrm{opt}}^t
\triangleq
\mathbb{E}\left[\left\|\nabla F(\bar{w}^t)\right\|_2^2\right].
\label{eq:key-optimality}
\end{equation}
Bounding $\frac{1}{T}\sum_{t=0}^{T-1}\mathcal{E}_{\mathrm{opt}}^t$ implies convergence to a stationary neighborhood.

\noindent\textbf{Remark.}
Our analysis proceeds by (i) deriving a recursion for $\mathcal{E}_{\mathrm{con}}^t$ under push-sum mixing with perturbations
(compression and staleness) and (ii) proving a descent inequality for $F(\bar{w}^t)$ whose error terms are controlled by
$\mathcal{E}_{\mathrm{con}}^t$. Combining the two yields a bound on the averaged stationarity measure in~\eqref{eq:key-optimality}.

\subsection{Key Lemmas}
\label{sec:conv-key-lemmas}

\subsubsection{Lemma 1}
\begin{lemma}[System-level mass conservation]
\label{lem:mass-conservation}
Recall the set of in-flight/buffered messages $\mathcal{M}^t$ and the total system mass
\begin{equation}
Y_{\mathrm{tot}}^t \triangleq \sum_{i=1}^{N} y_i^t + \sum_{m\in\mathcal{M}^t} y_m,
\label{eq:total-mass-recall}
\end{equation}
where $y_i^t$ is the local push-sum mass at node $i$ and $y_m$ is the mass share carried by message $m$.
Under the update rules in Section~\ref{sec:conv-updaterule}, the total system mass is invariant:
\begin{equation}
Y_{\mathrm{tot}}^{t+1} = Y_{\mathrm{tot}}^{t},\qquad \forall t\ge 0.
\label{eq:mass-conservation-only}
\end{equation}
Consequently, the de-biased global reference $\bar w^t = X_{\mathrm{tot}}^t / Y_{\mathrm{tot}}^t$ is well-defined for all $t$
since $Y_{\mathrm{tot}}^t$ is constant and strictly positive (Assumption~\eqref{eq:assump-mass-lb}).
\end{lemma}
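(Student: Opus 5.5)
The plan is a case analysis over the two event types of the event-driven model (Section~\ref{app:conv-notation}), showing that the quantity $Y_{\mathrm{tot}}^t$ in~\eqref{eq:total-mass-recall} is unchanged by each event. Induction on $t$ then gives~\eqref{eq:mass-conservation-only}. Throughout, I track only where mass resides: at a node, $y_i$, or inside a message $m\in\mathcal{M}^t$, $y_m$. Mass is moved between these locations but never created or destroyed.

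\textbf{Case (i): message arrival.} A message $m$ is delivered to client $i$ and placed in $\mathcal{B}_i$. No local mass $y_\ell$ changes, and $m$ remains in $\mathcal{M}^{t+1}$, since buffered messages are counted there. So $Y_{\mathrm{tot}}^{t+1}=Y_{\mathrm{tot}}^t$ trivially.

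There is one subtlety: \textsc{BufferUpdate} deduplication and FIFO overflow can discard a buffered message, which would remove its $y_m$ from the total. I will handle this in one of two ways. The first option is to adopt the analysis convention stated in Section~\ref{app:conv-notation}: the buffer is an arbitrary finite set of messages that are all eventually aggregated, so discarded messages are excluded from the model. The second option is to require that a replacing message absorb the mass of the entry it replaces. I expect this reconciliation with the implementation to be the only real obstacle. In the proof I will make the convention explicit rather than hide it.

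\textbf{Case (ii): activation of $i=i_t$.} I split the cycle into its stages.
\begin{itemize}
\item \emph{Aggregation}~\eqref{eq:agg-xy}: set $y_i^{t+\frac13}=y_i^t+\sum_{\mathcal{B}_i^t}y_{j\to i}$ and empty the buffer. This moves exactly the buffered mass from $\mathcal{M}$ to node $i$, so the sum is unchanged.
\item \emph{Local update}: the SGD steps~\eqref{eq:local-sgd} act only on $w_i$ and leave $y_i$ untouched.
\item \emph{Splitting}~\eqref{eq:broadcast-mass}: $d_i^+$ new messages, each with mass $y_i^{t+\frac13}/(d_i^++1)$, enter $\mathcal{M}$. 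Node $i$ retains the same share. By the sender-side identity after~\eqref{eq:mass-splitting}, the total of these pieces is $y_i^{t+\frac13}$.
\end{itemize}
Non-activated clients keep their states. Summing the three stages gives $Y_{\mathrm{tot}}^{t+1}=Y_{\mathrm{tot}}^t$.

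\textbf{Well-definedness of $\bar w^t$.} Induction yields $Y_{\mathrm{tot}}^t=Y_{\mathrm{tot}}^0=\sum_i y_i^0=N$. The last equality holds because every $s_i$ is initialized to $1$ and $\mathcal{M}^0=\emptyset$. Moreover, $Y_{\mathrm{tot}}^t\ge \sum_i y_i^t\ge N y_{\min}>0$ by~\eqref{eq:assump-mass-lb}, since message masses are positive. Hence $\bar w^t=X_{\mathrm{tot}}^t/Y_{\mathrm{tot}}^t$ is well-defined for all $t$.
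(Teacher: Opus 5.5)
Your proof is correct and follows the paper's argument. Like the paper, you check that buffer aggregation and mass splitting each leave $Y_{\mathrm{tot}}$ unchanged and that local SGD never touches $y_i$; you also make explicit the arrival case, the value $Y_{\mathrm{tot}}=N$, and the bound $Y_{\mathrm{tot}}\ge Ny_{\min}$. Your remark on \textsc{BufferUpdate} is a genuine improvement: the paper's proof silently assumes every buffered message is eventually aggregated, which is exactly your first convention, so that deduplicated or overflowed entries, whose mass would otherwise vanish, are excluded from the model.
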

\begin{proof}
We show that $Y_{\mathrm{tot}}$ does not change under either type of event: \emph{(i) message generation/sending} and
\emph{(ii) message aggregation/consumption}. All other operations (e.g., local SGD on $w_i$) do not modify $y_i$ and thus
do not affect $Y_{\mathrm{tot}}$.

\paragraph{(i) Message generation/sending (mass splitting).}
Suppose client $i$ is activated and performs mass splitting as in~\eqref{eq:broadcast-mass}.
Let $d_i^{+}=|\mathcal{N}_i^{+}|$ and denote the pre-splitting local mass by $y_i$.
The splitting rule sets
\[
y_i' \;=\; \frac{y_i}{d_i^{+}+1},
\qquad
y_{i\to k} \;=\; \frac{y_i}{d_i^{+}+1},\ \forall k\in\mathcal{N}_i^{+},
\]
and creates $d_i^{+}$ new outgoing messages, each carrying mass $y_{i\to k}$.
Let $\mathcal{M}$ be the message set immediately before splitting and $\mathcal{M}'$ the set immediately after splitting.
Then $\mathcal{M}' = \mathcal{M}\cup \{m_{i\to k}:k\in\mathcal{N}_i^{+}\}$ and we have
\begin{align}
\Big(y_i' + \sum_{m\in\mathcal{M}'} y_m\Big) - \Big(y_i + \sum_{m\in\mathcal{M}} y_m\Big)
&=
(y_i' - y_i) + \sum_{k\in\mathcal{N}_i^{+}} y_{i\to k} \nonumber\\
&=
\left(\frac{y_i}{d_i^{+}+1}-y_i\right)
+
d_i^{+}\cdot \frac{y_i}{d_i^{+}+1} \nonumber\\
&= 0.
\label{eq:mass-split-diff}
\end{align}
All other nodes $\ell\neq i$ keep $y_\ell$ unchanged during this event, hence $Y_{\mathrm{tot}}$ is invariant.

\paragraph{(ii) Message aggregation/consumption.}
When an activated client $i$ aggregates its buffer (Eq.~\eqref{eq:agg-mass}), it consumes a subset of messages
$\mathcal{B}_i \subseteq \mathcal{M}$ and transfers their mass shares into the local state:
\[
y_i' \;=\; y_i + \sum_{m\in\mathcal{B}_i} y_m,
\qquad
\mathcal{M}' \;=\; \mathcal{M}\setminus \mathcal{B}_i.
\]
Therefore,
\begin{align}
\Big(y_i' + \sum_{m\in\mathcal{M}'} y_m\Big) - \Big(y_i + \sum_{m\in\mathcal{M}} y_m\Big)
&=
\sum_{m\in\mathcal{B}_i} y_m \;-\; \sum_{m\in\mathcal{B}_i} y_m
= 0.
\label{eq:mass-agg-diff}
\end{align}
Again, all other nodes keep their masses unchanged, so $Y_{\mathrm{tot}}$ remains invariant.

\paragraph{Conclusion.}
Since $Y_{\mathrm{tot}}$ is unchanged by both message splitting and message aggregation, we obtain
$Y_{\mathrm{tot}}^{t+1}=Y_{\mathrm{tot}}^{t}$ for all events $t$.
\end{proof}

\noindent\textbf{Remark.}
In contrast to the mass variable, the system numerator $X_{\mathrm{tot}}^t$ may be perturbed by lossy compression:
in the implementation, a sender retains the full-precision local model $w_i$ while transmitting a compressed representation
that reconstructs to $\hat w_i\neq w_i$ at receivers. This perturbation will be explicitly handled in subsequent lemmas
when deriving consensus and optimality recursions under Assumption~\eqref{eq:assump-compress-C1}.

\subsubsection{Lemma 2}
\begin{lemma}[Numerator perturbation induced by lossy compression]
\label{lem:numerator-perturbation}
Recall the total system mass $Y_{\mathrm{tot}}^t$ in~\eqref{eq:total-mass-recall}, which is invariant by
Lemma~\ref{lem:mass-conservation}. Let
\begin{equation}
X_{\mathrm{tot}}^t \triangleq \sum_{i=1}^{N} x_i^t + \sum_{m\in\mathcal{M}^t} x_m
\label{eq:total-numerator-recall}
\end{equation}
be the total system numerator, where $x_i^t \triangleq y_i^t w_i^t$ and each message $m$ carries numerator share
$x_m \triangleq y_m \hat w_m$ with reconstructed content $\hat w_m$.

Consider an activation event $t$ at client $i=i_t$.
Let $w_i^{t+\frac{2}{3}}$ be the post-local-update model before broadcasting
(Section~\ref{sec:conv-updaterule}, C2) and define the \emph{message model} produced by centroid re-encoding as
\begin{equation}
\check w_i^{t} \triangleq \mathcal{R}\!\left(\mathcal{C}(w_i^{t+\frac{2}{3}})\right),
\qquad
e_i^{t} \triangleq \check w_i^{t} - w_i^{t+\frac{2}{3}}.
\label{eq:sender-quant-error}
\end{equation}
Then the only communication operation that changes $X_{\mathrm{tot}}$ is the mass splitting / broadcast step,
and the induced numerator change satisfies the exact identity
\begin{equation}
X_{\mathrm{tot}}^{t+1} - X_{\mathrm{tot}}^{t+\frac{2}{3}}
\;=\;
\sum_{k\in\mathcal{N}_i^{+}} y_{i\to k}^{t+\frac{2}{3}}\, e_i^{t}.
\label{eq:Xtot-delta-exact}
\end{equation}
Under Assumption~(A6) (bounded absolute compression error), i.e.,
$\|e_i^{t}\|_2\le \varepsilon_c$, we further have
\begin{equation}
\big\|X_{\mathrm{tot}}^{t+1} - X_{\mathrm{tot}}^{t+\frac{2}{3}}\big\|_2
\;\le\;
\Big(\sum_{k\in\mathcal{N}_i^{+}} y_{i\to k}^{t+\frac{2}{3}}\Big)\,\varepsilon_c
\;=\;
\frac{d_i^{+}}{d_i^{+}+1}\,y_i^{t+\frac{1}{3}}\,\varepsilon_c,
\label{eq:Xtot-delta-bound}
\end{equation}
where the last equality uses the mass splitting rule~\eqref{eq:broadcast-mass}.
Consequently, since $Y_{\mathrm{tot}}$ is invariant, the de-biased global reference
$\bar w^t = X_{\mathrm{tot}}^t/Y_{\mathrm{tot}}^t$ satisfies
\begin{equation}
\big\|\bar w^{t+1} - \bar w^{t+\frac{2}{3}}\big\|_2
\;=\;
\frac{\big\|X_{\mathrm{tot}}^{t+1} - X_{\mathrm{tot}}^{t+\frac{2}{3}}\big\|_2}{Y_{\mathrm{tot}}}
\;\le\;
\frac{d_i^{+}}{d_i^{+}+1}\cdot \frac{y_i^{t+\frac{1}{3}}}{Y_{\mathrm{tot}}}\,\varepsilon_c.
\label{eq:wbar-delta-bound}
\end{equation}
\end{lemma}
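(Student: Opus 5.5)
The plan is to do the same bookkeeping as in Lemma~\ref{lem:mass-conservation}, but applied to the numerator $X_{\mathrm{tot}}$, and to go through the three sub-steps of an activation event one at a time.

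\emph{Aggregation.} By the additive form \eqref{eq:agg-xy}, aggregation sets $x_i^{t+\frac13}=x_i^t+\sum_{m\in\mathcal{B}_i^t}x_m$. It also removes those same messages from $\mathcal{M}^t$. The two contributions cancel exactly as in \eqref{eq:mass-agg-diff}, so $X_{\mathrm{tot}}^{t+\frac13}=X_{\mathrm{tot}}^t$.

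\emph{Local update.} Local SGD changes $w_i$, and hence $x_i$, but it is a computation step rather than a communication operation. Following the statement, I would place the point $X_{\mathrm{tot}}^{t+\frac23}$ after this update. At that point the local numerator is $y_i^{t+\frac13}w_i^{t+\frac23}$, so the claim only concerns the broadcast step that follows.

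\emph{Broadcast.} Before splitting, node $i$ contributes $y_i^{t+\frac13}w_i^{t+\frac23}$ to the total. After splitting, node $i$ keeps $x_i^{t+1}=y_i^{t+1}w_i^{t+\frac23}$, since by \eqref{eq:end-of-event} the sender retains its full-precision model. The new in-flight messages add $\sum_{k\in\mathcal{N}_i^+}y_{i\to k}^{t+\frac23}\check w_i^t$. Write $\check w_i^t=w_i^{t+\frac23}+e_i^t$. Sender-side mass conservation, $y_i^{t+1}+\sum_k y_{i\to k}^{t+\frac23}=y_i^{t+\frac13}$ (as in \eqref{eq:mass-split-diff}), makes the $w_i^{t+\frac23}$ terms cancel. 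What remains is exactly $\sum_k y_{i\to k}^{t+\frac23}e_i^t$, which is \eqref{eq:Xtot-delta-exact}. Non-activated nodes and messages already in flight are unchanged, so they contribute nothing.

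\emph{Bounds.} The first bound, \eqref{eq:Xtot-delta-bound}, follows from the triangle inequality together with $\|e_i^t\|_2\le\varepsilon_c$ from (A6). Substituting $y_{i\to k}^{t+\frac23}=y_i^{t+\frac13}/(d_i^++1)$, summed over the $d_i^+$ out-neighbors, gives the factor $\frac{d_i^+}{d_i^++1}y_i^{t+\frac13}$. For \eqref{eq:wbar-delta-bound}, I would divide by $Y_{\mathrm{tot}}$. This is legitimate because Lemma~\ref{lem:mass-conservation} shows the denominator is invariant (and positive), including across the intermediate sub-steps, whose mass bookkeeping is covered by the same argument.

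The only delicate point is pinning down which quantity is held by the sender versus carried by the message. One must check that in-flight messages carry the reconstruction $\check w_i^t$ while the retained share uses the uncompressed $w_i^{t+\frac23}$. One must also check that a stale message's reconstruction is fixed at send time, so that consumption later introduces no new error. With this convention, the whole perturbation is localized at the send step and the identity is exact.
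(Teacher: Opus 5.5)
Your proposal is correct and follows essentially the same bookkeeping as the paper's proof. Aggregation is a pure re-allocation of existing numerator shares. At the broadcast step, the retained share $y_i^{t+1}w_i^{t+\frac23}$ and the new message shares $y_{i\to k}^{t+\frac23}\check w_i^t$, combined with sender-side mass conservation, leave exactly $\sum_{k\in\mathcal{N}_i^{+}} y_{i\to k}^{t+\frac23}e_i^t$. The triangle inequality and division by the invariant $Y_{\mathrm{tot}}$ then give both bounds.

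As in the paper, your argument treats buffering as lossless: arrivals merely move a message from transit into a buffer inside $\mathcal{M}^t$. The entries discarded by sender deduplication or FIFO overflow in Algorithm~\ref{alg:pushcen-buffer} would remove shares from $\mathcal{M}^t$, and neither your proof nor the paper's accounts for them.
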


\begin{proof}
We focus on the broadcast (mass splitting) step, since: (i) message arrivals only move a message into some buffer and do not
change $\mathcal{M}^t$ as a multiset of in-flight shares and (ii) buffer aggregation transfers message shares from
$\mathcal{M}^t$ into a local state without changing the total numerator (it is a pure re-allocation of existing shares).
Therefore, the only communication-induced change to $X_{\mathrm{tot}}$ arises from creating new outgoing messages while
scaling down the sender's local mass.

Fix event $t$ where client $i$ broadcasts after local training.
Let the local mass right after aggregation be $y_i^{t+\frac{1}{3}}$ (Eq.~\eqref{eq:agg-mass}), which is the mass to be split.
Before splitting, the sender contributes local numerator
\(
x_i^{\mathrm{pre}} = y_i^{t+\frac{1}{3}}\, w_i^{t+\frac{2}{3}}
\)
to the system total and the message set does not contain the newly created outgoing shares.
After splitting (Eq.~\eqref{eq:broadcast-mass}), the sender retains mass
\(
y_i^{t+1} = \frac{y_i^{t+\frac{1}{3}}}{d_i^{+}+1}
\)
while creating $d_i^{+}$ outgoing messages each with mass
\(
y_{i\to k}^{t+\frac{2}{3}}=\frac{y_i^{t+\frac{1}{3}}}{d_i^{+}+1}.
\)
Crucially, the implementation sends a centroid-compressed payload, whose receiver-side reconstruction equals the
message model $\check w_i^{t}$ in~\eqref{eq:sender-quant-error}.
Hence, immediately after splitting:
\begin{itemize}
\item the sender's retained local numerator becomes
$
x_i^{\mathrm{post}} = y_i^{t+1}\, w_i^{t+\frac{2}{3}},
$
because the local model is \emph{not} replaced by its compressed reconstruction;
\item the newly created message set contributes additional numerator
$
\sum_{k\in\mathcal{N}_i^{+}} y_{i\to k}^{t+\frac{2}{3}}\, \check w_i^{t}.
$
\end{itemize}

Therefore, the net change in the system total numerator caused by splitting is
\begin{align}
X_{\mathrm{tot}}^{t+1} - X_{\mathrm{tot}}^{t+\frac{2}{3}}
&=
\Big(x_i^{\mathrm{post}} + \sum_{k\in\mathcal{N}_i^{+}} y_{i\to k}^{t+\frac{2}{3}}\,\check w_i^{t}\Big)
- x_i^{\mathrm{pre}} \nonumber\\
&=
\Big(\tfrac{y_i^{t+\frac{1}{3}}}{d_i^{+}+1} w_i^{t+\frac{2}{3}}
+ d_i^{+}\cdot \tfrac{y_i^{t+\frac{1}{3}}}{d_i^{+}+1}\check w_i^{t}\Big)
- y_i^{t+\frac{1}{3}} w_i^{t+\frac{2}{3}} \nonumber\\
&=
\frac{d_i^{+}}{d_i^{+}+1}\, y_i^{t+\frac{1}{3}}\big(\check w_i^{t}-w_i^{t+\frac{2}{3}}\big) \nonumber\\
&=
\sum_{k\in\mathcal{N}_i^{+}} y_{i\to k}^{t+\frac{2}{3}}\, e_i^{t},
\label{eq:deltaX-derivation}
\end{align}
which is exactly~\eqref{eq:Xtot-delta-exact}.

Taking norms and using $\|e_i^{t}\|_2\le \varepsilon_c$ yields
\[
\big\|X_{\mathrm{tot}}^{t+1} - X_{\mathrm{tot}}^{t+\frac{2}{3}}\big\|_2
\le
\sum_{k\in\mathcal{N}_i^{+}} y_{i\to k}^{t+\frac{2}{3}}\, \|e_i^{t}\|_2
\le
\Big(\sum_{k\in\mathcal{N}_i^{+}} y_{i\to k}^{t+\frac{2}{3}}\Big)\varepsilon_c,
\]
and substituting $\sum_{k} y_{i\to k}^{t+\frac{2}{3}} = \frac{d_i^{+}}{d_i^{+}+1}y_i^{t+\frac{1}{3}}$
gives~\eqref{eq:Xtot-delta-bound}.
Finally, Lemma~\ref{lem:mass-conservation} implies $Y_{\mathrm{tot}}^{t+1}=Y_{\mathrm{tot}}^{t}=Y_{\mathrm{tot}}$,
so dividing by the constant $Y_{\mathrm{tot}}$ yields~\eqref{eq:wbar-delta-bound}.
\end{proof}

\subsubsection{Lemma 3}
\begin{lemma}[Consensus error recursion under asynchronous push-sum with perturbations]
\label{lem:consensus-recursion}
Recall the consensus error
\(
\mathcal{E}_{\mathrm{con}}^t = \frac{1}{N}\sum_{i=1}^N \|w_i^t-\bar w^t\|_2^2
\)
in~\eqref{eq:key-consensus}, where $w_i^t=x_i^t/y_i^t$ and $\bar w^t=X_{\mathrm{tot}}^t/Y_{\mathrm{tot}}^t$.
Consider an activation event $t$ at client $i=i_t$.

Define the activated-client drift
\begin{equation}
\Delta_i^t \triangleq w_i^{t+\frac{2}{3}} - w_i^{t+\frac{1}{3}},
\label{eq:drift-def-lem3}
\end{equation}
and for any buffered message from sender $j$ generated at event $\kappa$ and available at client $i$ at event $t$, define the
receiver-side reconstruction error
\begin{equation}
e_{j\to i}^{(\kappa)} \triangleq \hat w_{j\to i}^{t} - w_j^{\kappa},
\qquad \|e_{j\to i}^{(\kappa)}\|_2 \le \varepsilon_c \ \ \text{by Assumption (A6)}.
\label{eq:recv-recon-error}
\end{equation}

Under Assumptions (A4)-(A6) and the directed mixing Assumption (A5), there exist constants $\rho\in(0,1)$ and
$C_1,C_2,C_3>0$ such that
\begin{equation}
\begin{aligned}
\mathbb{E}\!\left[\mathcal{E}_{\mathrm{con}}^{t+1}\right]
\;\le\;
&\rho\,\mathbb{E}\!\left[\mathcal{E}_{\mathrm{con}}^{t}\right] + C_1\,\mathbb{E}\!\left[\|\Delta_{i_t}^t\|_2^2\right] + C_2\,\tau
\sum_{r=t-\tau}^{t-1}
\mathbb{E}\!\left[\|\Delta_{i_r}^r\|_2^2\right] + C_3\,\varepsilon_c^2 .
\end{aligned}
\label{eq:consensus-recursion}
\end{equation}

\end{lemma}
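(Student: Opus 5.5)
The plan is to recast one activation event as a perturbed push-sum step in numerator/denominator form and then import the standard contraction of directed push-sum mixing. Stack the node and in-flight states into an augmented vector: every buffered or in-transit message becomes a virtual node, carrying its mass share $y_m$ and numerator share $x_m$. In these coordinates, one event is a column-stochastic linear map applied to both $y$ and $x$:
\begin{itemize}
\item aggregation, by \eqref{eq:agg-xy}, which merges virtual nodes into $i_t$;
\item mass splitting, by \eqref{eq:broadcast-mass}, which creates $d_i^+$ virtual nodes with share $1/(d_i^++1)$.
\end{itemize}
Two perturbations are added to the numerator. The first is a local-update term $y_{i_t}^{t+\frac13}\Delta_{i_t}^t$ at coordinate $i_t$. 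The second is a compression term supported on the new message coordinates, equal to $y_{i\to k}^{t+\frac23}e_i^t$ as in Lemma~\ref{lem:numerator-perturbation}. Lemma~\ref{lem:mass-conservation} guarantees that the denominator dynamics are unperturbed, so $Y_{\mathrm{tot}}$ is constant and the ratios are well-defined by (A5).

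The second step is the contraction. Consider the product of the mixing matrices over any window of $B' = B+\tau$ events. By (A5), the union of delivered edges over $B$ events is strongly connected, and by (A4) every message created is consumed within $\tau$ events. So each such product has all entries toward the strongly connected component bounded below by some $\delta>0$ depending on $N$, $B$, $\tau$, $\max_i d_i^+$ and $y_{\min}$. The classical argument of Nedić--Olshevsky for push-sum \cite{nedic2014distributed} then gives geometric decay:
\[
\Big\|\tfrac{x_i}{y_i}-\tfrac{X_{\mathrm{tot}}}{Y_{\mathrm{tot}}}\Big\| \le \tfrac{C}{y_{\min}}\,q^{s}\,\|x\|
\]
for the unperturbed dynamics. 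I would convert this window-wise contraction into a per-event factor $\rho = q^{2/B'}$, either by working with a Lyapunov function summed over the window or by absorbing non-contracting intermediate events using bounded masses (Assumption~\ref{ass:mass-bounded}). This step, controlling the ratio $x_i/y_i$ rather than $x_i$ and obtaining a one-step rather than windowed recursion, I expect to be the main obstacle. Failing a clean one-step form, I would first prove the windowed version and then unroll the constants into $C_1,C_2,C_3$, accepting looser constants.

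The third step handles the perturbations. Write $w_i^{t+1}-\bar w^{t+1}$ as the contracted term plus three pieces:
\begin{itemize}
\item the drift $\Delta_{i_t}^t$, shifted by its $\frac{y_i}{Y_{\mathrm{tot}}}$-weighted effect on $\bar w$;
\item compression errors, bounded using (A6) and \eqref{eq:wbar-delta-bound} by a constant times $\varepsilon_c$;
\item staleness errors, since messages aggregated at $t$ carry $w_j^\kappa$ with $t-\kappa\le\tau$.
\end{itemize}
The stale contents differ from the current states by at most $\sum_{r=\kappa}^{t-1}\Delta_{i_r}^r$, because only activated clients move. Cauchy--Schwarz then gives $\|\sum_{r=t-\tau}^{t-1}\Delta_{i_r}^r\|^2 \le \tau\sum_{r}\|\Delta_{i_r}^r\|^2$, which produces the factor $C_2\tau$.

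Finally, apply Young's inequality $\|a+b\|^2\le(1+\epsilon)\|a\|^2+(1+1/\epsilon)\|b\|^2$, choosing $\epsilon$ so that $(1+\epsilon)\rho_0$ is still below 1; this new value is renamed $\rho$. Averaging over $i$ and taking expectations then yields \eqref{eq:consensus-recursion}. The constants $C_1,C_2,C_3$ collect $(1+1/\epsilon)$, $y_{\max}/y_{\min}$ and $N$.
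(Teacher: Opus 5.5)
\textbf{The gap is exactly at the step you flagged, and neither proposed fix closes it.} A Nedi\'c--Olshevsky-type argument \cite{nedic2014distributed} gives contraction only over windows. Since each event activates one node, a product whose entries are uniformly bounded below typically needs on the order of $NB$ events, not $B+\tau$. Such an argument does not give a per-event factor:
\begin{itemize}
\item A Lyapunov function summed over the window is a different quantity from $\mathcal{E}_{\mathrm{con}}^t$.
\item Bounded masses cannot ``absorb'' a non-contracting event into a factor $\rho<1$.
\item Your closing Young step, choosing $\epsilon$ with $(1+\epsilon)\rho_0<1$, presupposes precisely this per-event $\rho_0<1$.
\end{itemize}
Your fallback proves a different statement, namely $\mathbb{E}[\mathcal{E}_{\mathrm{con}}^{t+B'}]\le q\,\mathbb{E}[\mathcal{E}_{\mathrm{con}}^{t}]+(\text{drift and }\varepsilon_c^2\text{ terms over the window})$. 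That cannot be repackaged into the one-step form with constants $C_1,C_2,C_3$.

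\textbf{The one-step form is in fact false for $\mathcal{E}_{\mathrm{con}}$, so no argument can close it.} Take $\lambda=0$, constant $f_i$ (so $\sigma=0$ and every $\Delta_{i_r}^r=0$), and lossless compression. The right-hand side is then just $\rho\,\mathcal{E}_{\mathrm{con}}^t$.
\begin{itemize}
\item Use three clients with scalar models $w=(0,1,-1)$, unit masses and empty buffers, so $\bar w=0$.
\item Activate client~2 first. Its buffer is empty and its messages are exact, so neither any model nor $\bar w$ changes.
\item Activate client~1 next and let it absorb client~2's message. Then $w_1$ moves to $1/(d_2^++2)>0$, while every other model and $\bar w=0$ stay fixed.
\end{itemize}
Thus $\mathcal{E}_{\mathrm{con}}^{2}>\mathcal{E}_{\mathrm{con}}^{1}=\rho^{-1}\cdot(\text{RHS})$, whatever $\rho,C_1,C_2,C_3$ are.

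\textbf{The paper's proof has the same hole.} Step~5 asserts \eqref{eq:ps-contract} with $\rho=(1-\eta)^{1/B}$ from the windowed bound \eqref{eq:tau-geometric}, which does not follow. In addition, its factor-$2$ splittings in Steps~1--4 push the coefficient of the consensus term above $1$ before any contraction is used. Your $(1+\epsilon)$-Young device is the right repair for that part. What your virtual-node, column-stochastic framework can genuinely deliver is a $B'$-event recursion, or a one-step recursion for a window-aggregated potential. The summation in \eqref{eq:sum-con-closed} would then have to be redone on that basis.
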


\begin{proof}
We bound the deviation $w-\bar w$ across one event by separating (i) buffer aggregation (mixing) with stale/quantized inputs,
(ii) client drift $\Delta_i^t$ and (iii) the reference drift of $\bar w$ due to lossy broadcast (Lemma~\ref{lem:numerator-perturbation}).

\paragraph{Step 1: deviation after aggregation.}
At activation event $t$, client $i=i_t$ aggregates its buffer via~\eqref{eq:agg-model}.
Let $Y_i^t$ be defined in~\eqref{eq:agg-total-mass} and define normalized weights
\begin{equation}
\alpha_{i}^t \triangleq \frac{y_i^t}{Y_i^t},
\qquad
\alpha_{j\to i}^t \triangleq \frac{y_{j\to i}^{(\kappa)}}{Y_i^t}
\ \ \text{for each }(\hat w_{j\to i}^{t},\cdot,y_{j\to i}^{(\kappa)},j)\in\mathcal{B}_i^t,
\label{eq:alpha-weights}
\end{equation}
so that $\alpha_i^t+\sum_{(j\to i)\in\mathcal{B}_i^t}\alpha_{j\to i}^t=1$.
Using staleness-aware notation, each buffered model satisfies
\begin{equation}
\hat w_{j\to i}^{t} = w_j^{\kappa} + e_{j\to i}^{(\kappa)},
\qquad t-\kappa \le \tau \ \text{(Assumption (A4))}.
\label{eq:stale-decomp}
\end{equation}
Thus
\begin{equation}
w_i^{t+\frac{1}{3}}
= \alpha_i^t w_i^t
+ \sum_{(j\to i)\in\mathcal{B}_i^t}\alpha_{j\to i}^t\, w_j^{\kappa}
+ \sum_{(j\to i)\in\mathcal{B}_i^t}\alpha_{j\to i}^t\, e_{j\to i}^{(\kappa)}.
\label{eq:agg-decomp}
\end{equation}
Subtract $\bar w^t$ and apply $\|a+b\|^2\le 2\|a\|^2+2\|b\|^2$ and Jensen's inequality:
\begin{align}
\|w_i^{t+\frac{1}{3}}-\bar w^t\|^2
\;\le\;&
2\Big\|
\alpha_i^t (w_i^t-\bar w^t)
+
\sum_{(j\to i)\in\mathcal{B}_i^t}\alpha_{j\to i}^t (w_j^{\kappa}-\bar w^t)
\Big\|^2 + 2\Big\|
\sum_{(j\to i)\in\mathcal{B}_i^t}\alpha_{j\to i}^t e_{j\to i}^{(\kappa)}
\Big\|^2
\nonumber\\
\;\le\;&
2\alpha_i^t \|w_i^t-\bar w^t\|^2 + 2\sum_{(j\to i)\in\mathcal{B}_i^t}\alpha_{j\to i}^t
\|w_j^{\kappa}-\bar w^t\|^2 + 2\sum_{(j\to i)\in\mathcal{B}_i^t}\alpha_{j\to i}^t
\|e_{j\to i}^{(\kappa)}\|^2 .
\label{eq:agg-dev-bound-tight}
\end{align}

By Assumption (A6), $\|e_{j\to i}^{(\kappa)}\|\le \varepsilon_c$, hence
\begin{equation}
2\sum_{(j\to i)\in\mathcal{B}_i^t}\alpha_{j\to i}^t \|e_{j\to i}^{(\kappa)}\|^2
\le 2\varepsilon_c^2.
\label{eq:recv-error-term}
\end{equation}

\paragraph{Step 2: bounding staleness via drift increments.}
Between events $\kappa$ and $t$, at most $\tau$ activations occur. Only the activated client changes its model at each event.
Therefore, for any node $j$ and any $\kappa$ with $t-\kappa\le\tau$,
\begin{equation}
\|w_j^t - w_j^{\kappa}\|
\le
\sum_{r=\kappa}^{t-1}\|\Delta_{i_r}^r\|.
\label{eq:stale-sum}
\end{equation}
Using $(\sum_{r=\kappa}^{t-1} a_r)^2 \le (t-\kappa)\sum_{r=\kappa}^{t-1}a_r^2 \le \tau \sum_{r=t-\tau}^{t-1} a_r^2$,
we obtain
\begin{equation}
\|w_j^t - w_j^{\kappa}\|^2
\le
\tau \sum_{r=t-\tau}^{t-1}\|\Delta_{i_r}^r\|^2.
\label{eq:stale-sum-sq}
\end{equation}
Consequently,
\begin{align}
\|w_j^{\kappa}-\bar w^t\|^2
&\le
2\|w_j^{t}-\bar w^t\|^2 + 2\|w_j^{t}-w_j^{\kappa}\|^2
\nonumber\\
&\le
2\|w_j^{t}-\bar w^t\|^2
+
2\tau \sum_{r=t-\tau}^{t-1}\|\Delta_{i_r}^r\|^2 .
\label{eq:stale-to-current}
\end{align}
Substituting~\eqref{eq:stale-to-current} and~\eqref{eq:recv-error-term} into~\eqref{eq:agg-dev-bound-tight} yields
\begin{align}
\|w_i^{t+\frac{1}{3}}-\bar w^t\|^2
&\le
2\alpha_i^t \|w_i^t-\bar w^t\|^2
+
4\sum_{(j\to i)\in\mathcal{B}_i^t}\alpha_{j\to i}^t \|w_j^{t}-\bar w^t\|^2 + 4\tau \sum_{r=t-\tau}^{t-1}\|\Delta_{i_r}^r\|^2 + 2\varepsilon_c^2.
\label{eq:agg-final-tight}
\end{align}
\noindent\textbf{Remark (optional simplification).}
For cleaner constants, one may upper bound $2\alpha_i^t\|w_i^t-\bar w^t\|^2 \le
4\alpha_i^t\|w_i^t-\bar w^t\|^2$ and combine the first two terms into a single factor~$4$; we keep
\eqref{eq:agg-final-tight} to avoid unnecessary slack.

\paragraph{Step 3: adding the client drift at the activated client.}
After local SGD, $w_i^{t+\frac{2}{3}} = w_i^{t+\frac{1}{3}} + \Delta_i^t$, hence
\begin{equation}
\|w_i^{t+\frac{2}{3}}-\bar w^t\|^2
\le
2\|w_i^{t+\frac{1}{3}}-\bar w^t\|^2
+
2\|\Delta_i^t\|^2.
\label{eq:add-drift}
\end{equation}
For all $\ell\neq i$, $w_\ell^{t+\frac{2}{3}}=w_\ell^{t}$.

\paragraph{Step 4: accounting for the reference drift of $\bar w$.}
At the end of event $t$, the consensus error is measured w.r.t.\ $\bar w^{t+1}$.
For any node $\ell$ (activated or not),
\begin{equation}
\|w_\ell^{t+1}-\bar w^{t+1}\|^2
\le
2\|w_\ell^{t+1}-\bar w^{t}\|^2
+
2\|\bar w^{t+1}-\bar w^{t}\|^2.
\label{eq:ref-drift-split}
\end{equation}
Lemma~\ref{lem:numerator-perturbation} implies $\|\bar w^{t+1}-\bar w^{t}\|\le c_{\mathrm{ps}}\varepsilon_c$ for a constant
$c_{\mathrm{ps}}>0$ depending on $(d_i^+,y_i/Y_{\mathrm{tot}})$, thus the second term in~\eqref{eq:ref-drift-split}
contributes an additive $O(\varepsilon_c^2)$ term to the network average, absorbed into $C_3\varepsilon_c^2$.

\paragraph{Step 5: push-sum mixing contraction}
We formalize the mixing part of the event dynamics by a (random) \emph{column-stochastic} matrix sequence.
Let $P^t\in\mathbb{R}^{N\times N}$ denote the effective push-sum mixing matrix at event $t$ such that, in the \emph{no-perturbation}
case (i.e., ignoring local training drift, staleness and compression noise), the push-sum numerator/mass evolve as
\begin{equation}
x^{t+1}=P^t x^t,
\qquad
y^{t+1}=P^t y^t,
\label{eq:ps-linear}
\end{equation}
where $x^t=[x_1^t;\dots;x_N^t]$ and $y^t=[y_1^t;\dots;y_N^t]$.
By construction of mass splitting and aggregation, each $P^t$ is nonnegative and column-stochastic:
\begin{equation}
P^t \ge 0,
\qquad
\mathbf{1}^\top P^t=\mathbf{1}^\top.
\label{eq:col-stoch}
\end{equation}
Moreover, due to the fixed out-neighbor pushing rule and the mass-splitting scheme in the implementation,
there exists a uniform constant $\beta\in(0,1)$ such that for any event $t$,
\begin{equation}
(P^t)_{ij}>0 \ \Longrightarrow\ (P^t)_{ij}\ge \beta,
\label{eq:weight-lb}
\end{equation}
and $(P^t)_{ij}>0$ only if either $i=j$ (self-retention) or $j\to i$ is an active directed communication in the underlying graph.

Let the $B$-step product be $\Phi^{t:s}\triangleq P^{t}P^{t-1}\cdots P^{s}$ for $t\ge s$.
Under Assumption (A5) (bounded connectivity: the union of communication graphs over any window of length $B$ is strongly connected),
standard results on products of nonnegative column-stochastic matrices imply that every $B$-step product is \emph{scrambling}
(also called \emph{uniformly ergodic}): there exists a constant $\eta=\eta(N,B,\beta)\in(0,1)$ such that
\begin{equation}
\tau\!\left(\Phi^{t+B-1:t}\right) \le 1-\eta,\qquad \forall t\ge 0,
\label{eq:scrambling}
\end{equation}
where $\tau(\cdot)$ denotes the (Hajnal/Dobrushin) coefficient of ergodicity for column-stochastic matrices, e.g.,
\(
\tau(A)\triangleq \frac{1}{2}\max_{p,q}\sum_{k=1}^{N}|A_{kp}-A_{kq}|.
\)
It satisfies the submultiplicativity $\tau(AB)\le \tau(A)\tau(B)$ for any column-stochastic $A,B$.
Consequently,
\begin{equation}
\tau(\Phi^{t:0})
\le
\prod_{r=0}^{\lfloor t/B\rfloor-1}\tau(\Phi^{(r+1)B-1:rB})
\le
(1-\eta)^{\lfloor t/B\rfloor}.
\label{eq:tau-geometric}
\end{equation}
(See, e.g., classical ergodicity theorems for stochastic matrix products and push-sum / ratio-consensus analyses.)

Now define the push-sum \emph{de-biased ratio} $w_i^t=x_i^t/y_i^t$ and $\bar w^t=X_{\mathrm{tot}}^t/Y_{\mathrm{tot}}$ as in
Section~\ref{sec:conv-key-quantities}. Under Assumption~\ref{ass:mass-bounded} (mass bounded away from $0$),
the ratio map is Lipschitz:
\begin{equation}
\left\|
\frac{x}{y}-\frac{x'}{y}
\right\|_2
\le
\frac{1}{y_{\min}}\|x-x'\|_2,
\qquad
\forall\, y\in[y_{\min},y_{\max}]^N,
\label{eq:ratio-lipschitz}
\end{equation}
where division is element-wise.
Combining \eqref{eq:tau-geometric}-\eqref{eq:ratio-lipschitz} with the standard relation between $\tau(\cdot)$ and disagreement,
we obtain that the (unperturbed) push-sum dynamics contracts disagreement geometrically:
there exists $\rho\in(0,1)$ (e.g., $\rho\triangleq (1-\eta)^{1/B}$) such that
\begin{equation}
\frac{1}{N}\sum_{i=1}^{N}\|w_i^{t+1}-\bar w^{t+1}\|_2^2
\le
\rho\cdot
\frac{1}{N}\sum_{i=1}^{N}\|w_i^{t}-\bar w^{t}\|_2^2,
\label{eq:ps-contract}
\end{equation}
where $\rho$ depends only on $(N,B,\beta)$ (and the ratio-Lipschitz constant via $y_{\min}$).

Finally, returning to the \emph{perturbed} dynamics of our algorithm (local update drift, bounded staleness and compression),
the contraction \eqref{eq:ps-contract} applies to the mixing part, while the remaining effects enter as additive perturbations.
This justifies the appearance of the term $\rho\,\mathbb{E}[\mathcal{E}_{\mathrm{con}}^{t}]$ in the recursion and closes Step~5.

\paragraph{Conclusion.}
Combining Steps 1-5, averaging over nodes and taking expectations yields the recursion~\eqref{eq:consensus-recursion},
with constants $C_1,C_2,C_3$ absorbing the fixed numerical factors from Steps 1-4 and the mixing constants from \eqref{eq:tau-geometric}.
\end{proof}

\subsubsection{Lemma 4}
\begin{lemma}[Bounding the local-update drift by regularized stochastic gradients]
\label{lem:drift-bound}
Fix an activation event $t$ at client $i=i_t$.
Let $w_{i,0}=w_i^{t+\frac{1}{3}}$ be the post-aggregation model and let $\tilde w_i^t$ be the centroid proximal anchor
used during the subsequent local update (Section~\ref{sec:conv-updaterule}, C2).
Client $i$ performs $E$ masked proximal-SGD steps:
\begin{equation}
\begin{aligned}
w_{i,e+1}
&=
\Pi_{M_i^t}\!\Bigl(
    w_{i,e}
    - \eta\, g_i(w_{i,e};\xi_{i,e}) \\
&\qquad\quad
    - 2\eta\lambda\,(w_{i,e}-\tilde w_i^t)
\Bigr), 
\qquad e=0,\dots,E-1 .
\end{aligned}
\label{eq:lemma4-local-sgd}
\end{equation}

and outputs $w_i^{t+\frac{2}{3}} \triangleq w_{i,E}$.
Define the client drift $\Delta_i^t \triangleq w_i^{t+\frac{2}{3}}-w_i^{t+\frac{1}{3}} = w_{i,E}-w_{i,0}$.

Assume the mask operator is non-expansive:
\begin{equation}
\|\Pi_{M_i^t}(u)-\Pi_{M_i^t}(v)\|_2 \le \|u-v\|_2,\quad \forall u,v,
\label{eq:mask-nonexp}
\end{equation}
(which holds for $\Pi_{M}(u)=u\odot M$ with $M\in\{0,1\}^d$ as in the implementation),
and Assumption~(A2) (unbiased stochastic gradients with variance $\sigma^2$).
Then the drift satisfies the deterministic bound
\begin{equation}
\|\Delta_i^t\|_2^2
\;\le\;
E\,\eta^2 \sum_{e=0}^{E-1}
\Big\| g_i(w_{i,e};\xi_{i,e}) + 2\lambda\,(w_{i,e}-\tilde w_i^t) \Big\|_2^2 .
\label{eq:drift-det-bound}
\end{equation}
Moreover, conditioning on the history up to the start of the local update (denote the filtration by $\mathcal{F}_t$),
\begin{align}
\mathbb{E}\!\left[\|\Delta_i^t\|_2^2 \mid \mathcal{F}_t\right]
&\le
2E\,\eta^2 \sum_{e=0}^{E-1}
\Big(
\big\|\nabla f_i(w_{i,e}) + 2\lambda\,(w_{i,e}-\tilde w_i^t)\big\|_2^2
+ \sigma^2
\Big)
\label{eq:drift-exp-bound}
\\
&\le
4E\,\eta^2 \sum_{e=0}^{E-1}
\Big(
\|\nabla f_i(w_{i,e})\|_2^2
+ 4\lambda^2\|w_{i,e}-\tilde w_i^t\|_2^2
+ \sigma^2
\Big).
\label{eq:drift-exp-bound-split}
\end{align}
If additionally Assumption~(A3) (bounded heterogeneity) holds, then for all $w$,
$\|\nabla f_i(w)\|_2^2 \le 2\|\nabla F(w)\|_2^2 + 2\zeta^2$ and thus
\begin{equation}
\mathbb{E}\!\left[\|\Delta_i^t\|_2^2 \mid \mathcal{F}_t\right]
\le
8E\,\eta^2 \sum_{e=0}^{E-1}
\Big(
\|\nabla F(w_{i,e})\|_2^2
+ \zeta^2
+ 2\lambda^2\|w_{i,e}-\tilde w_i^t\|_2^2
+ \tfrac{1}{2}\sigma^2
\Big).
\label{eq:drift-exp-bound-hetero}
\end{equation}
\end{lemma}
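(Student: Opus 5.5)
The plan is to prove the deterministic bound \eqref{eq:drift-det-bound} by telescoping the $E$ local steps. Each increment will be controlled with the non-expansiveness \eqref{eq:mask-nonexp}, and the pieces combined with Cauchy--Schwarz. The two expectation bounds then follow from a conditional bias--variance decomposition of each squared stochastic direction, followed by elementary splitting and Assumption~(A3).

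\textbf{Step 1 (one-step increment).} Write the regularized stochastic direction as $d_{i,e}\triangleq g_i(w_{i,e};\xi_{i,e})+2\lambda(w_{i,e}-\tilde w_i^t)$, so that $w_{i,e+1}=\Pi_{M_i^t}(w_{i,e}-\eta d_{i,e})$. The key observation is that every iterate is a fixed point of the mask, i.e.\ $\Pi_{M_i^t}(w_{i,e})=w_{i,e}$.
\begin{itemize}
\item For $e\ge 1$ this holds because $\Pi_M(u)=u\odot M$ is idempotent.
\item For $e=0$ it holds because WCP (Algorithm~\ref{alg:wcp}) overwrites each layer as $w_{i,l}\gets V_{i,l}[A_{i,l}]$ and sets $M_{i,l}=\mathbb{I}(w_{i,l}\neq 0)$. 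Hence $w_{i,0}\odot M_i^t=w_{i,0}$, provided the analysis takes $w_{i,0}$ to be the WCP-refreshed model.
\end{itemize}
Given this, $w_{i,e+1}-w_{i,e}=\Pi_{M_i^t}(w_{i,e}-\eta d_{i,e})-\Pi_{M_i^t}(w_{i,e})$. Non-expansiveness then gives $\|w_{i,e+1}-w_{i,e}\|_2\le\eta\|d_{i,e}\|_2$.

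\textbf{Step 2 (telescoping).} Write $\Delta_i^t=\sum_{e=0}^{E-1}(w_{i,e+1}-w_{i,e})$. The triangle inequality and $(\sum_{e} a_e)^2\le E\sum_e a_e^2$ then yield \eqref{eq:drift-det-bound} directly.

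\textbf{Step 3 (conditional expectation).} Let $\mathcal{F}_{t,e}\supseteq\mathcal{F}_t$ be the history up to step $e$. Both $w_{i,e}$ and $\tilde w_i^t$ are $\mathcal{F}_{t,e}$-measurable, since the anchor is frozen during the local update. By (A2),
\[
\mathbb{E}\bigl[\|d_{i,e}\|^2\mid\mathcal{F}_{t,e}\bigr]=\|\nabla f_i(w_{i,e})+2\lambda(w_{i,e}-\tilde w_i^t)\|^2+\mathbb{E}\bigl[\|g_i-\nabla f_i\|^2\mid\mathcal{F}_{t,e}\bigr],
\]
and the last term is at most $\sigma^2$. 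This is already at most the right side of \eqref{eq:drift-exp-bound} divided by $2E\eta^2$. Alternatively, one can use the cruder $\|a+b\|^2\le2\|a\|^2+2\|b\|^2$, which matches the stated factor $2$. Applying the tower property over $e$ and summing gives \eqref{eq:drift-exp-bound}.

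\textbf{Step 4 (splitting and heterogeneity).} Applying $\|a+b\|^2\le 2\|a\|^2+2\|b\|^2$ once more to $\nabla f_i+2\lambda(\cdot)$ gives \eqref{eq:drift-exp-bound-split}, after absorbing $2\sigma^2\le 4\sigma^2$. Under (A3), $\|\nabla f_i(w)\|^2\le 2\|\nabla F(w)\|^2+2\|\nabla f_i(w)-\nabla F(w)\|^2\le 2\|\nabla F(w)\|^2+2\zeta^2$. Substituting this and factoring out $8E\eta^2$ produces exactly \eqref{eq:drift-exp-bound-hetero}: the coefficients $2$, $4\lambda^2$ and $1$ become $1$, $2\lambda^2$ and $\tfrac12$.

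\textbf{Main obstacle.} I expect the delicate point to be Step 1 at $e=0$. Non-expansiveness alone bounds $\|\Pi(u)-\Pi(v)\|$, but it does not bound $\|\Pi(u)-v\|$ when $v$ lies outside the mask's range. The argument therefore needs the fact that the starting point already lies in the range of $\Pi_{M_i^t}$. I would justify this from the WCP overwrite step, and state explicitly that $w_{i,0}$ denotes the WCP-refreshed post-aggregation model. If that identification is not accepted, the fallback is to add an extra term $\|w_{i,0}\odot(\mathbf 1-M_i^t)\|^2$, which vanishes by construction of $M_i^t$.
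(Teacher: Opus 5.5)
Your proposal is correct and follows the paper's proof step for step. The shared steps are the one-step bound $\|w_{i,e+1}-w_{i,e}\|_2\le\eta\|g_i(w_{i,e};\xi_{i,e})+2\lambda(w_{i,e}-\tilde w_i^t)\|_2$ from non-expansiveness, Cauchy--Schwarz over the $E$ increments, the variance split under (A2), and two further uses of $\|a+b\|_2^2\le 2\|a\|_2^2+2\|b\|_2^2$ together with (A3); the only difference is that the paper uses the cruder $\|a+\delta\|_2^2\le 2\|a\|_2^2+2\|\delta\|_2^2$ where you use the exact decomposition.

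Two refinements are worth keeping in mind:
\begin{itemize}
\item Your check that $\Pi_{M_i^t}(w_{i,e})=w_{i,e}$ is a step the paper uses silently, since it writes $w_{i,e}$ as $\Pi_{M_i^t}(w_{i,e})$ without comment. It is genuinely needed, and your fallback does not remove it: $\|w_{i,0}\odot(\mathbf 1-M_i^t)\|_2$ vanishes only if $w_{i,0}$ is already the WCP-refreshed model. Otherwise \eqref{eq:drift-det-bound} can fail for small $\eta$.
\item Your tower-property argument through $\mathcal{F}_{t,e}$ yields the expectation bounds with $\mathbb{E}[\,\cdot\mid\mathcal{F}_t]$ on the right-hand terms for $e\ge 1$. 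This is the precise form of what the paper obtains by asserting that $\mathcal{F}_t$ ``fixes the trajectory up to $w_{i,e}$''.
\end{itemize}
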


\begin{proof}
We prove the drift bound by first controlling each \emph{single} masked proximal-SGD step via non-expansiveness and then
aggregating the $E$ step increments using Cauchy-Schwarz. For the expectation bound, we apply the standard variance
decomposition under unbiased stochastic gradients (Assumption (A2)) and finally separate the regularized gradient into
the loss gradient and the proximal term.

We emphasize that the mask $\Pi_{M_i^t}(u)=u\odot M_i^t$ does not increase distances (Eq.~\eqref{eq:mask-nonexp}),
hence it can only reduce the step length compared with the unmasked update; this is the only place where pruning enters
the analysis.
\paragraph{Step 1: one-step increment bound.}
Let
\(
u_{i,e} \triangleq w_{i,e} - \eta\, g_i(w_{i,e};\xi_{i,e}) - 2\eta\lambda\,(w_{i,e}-\tilde w_i^t).
\)
By the update rule~\eqref{eq:lemma4-local-sgd} and non-expansiveness~\eqref{eq:mask-nonexp},
\begin{align}
\|w_{i,e+1}-w_{i,e}\|_2
&=
\|\Pi_{M_i^t}(u_{i,e}) - \Pi_{M_i^t}(w_{i,e})\|_2
\le
\|u_{i,e}-w_{i,e}\|_2 \nonumber\\
&=
\eta\,\| g_i(w_{i,e};\xi_{i,e}) + 2\lambda\,(w_{i,e}-\tilde w_i^t)\|_2 .
\label{eq:one-step-inc}
\end{align}

\paragraph{Step 2: summing increments (Cauchy-Schwarz).}
Since $\Delta_i^t = \sum_{e=0}^{E-1}(w_{i,e+1}-w_{i,e})$, Cauchy-Schwarz gives
\begin{equation}
\|\Delta_i^t\|_2^2
=
\Big\|\sum_{e=0}^{E-1}(w_{i,e+1}-w_{i,e})\Big\|_2^2
\le
E \sum_{e=0}^{E-1}\|w_{i,e+1}-w_{i,e}\|_2^2 .
\label{eq:cs-sum}
\end{equation}
Substituting~\eqref{eq:one-step-inc} into~\eqref{eq:cs-sum} yields~\eqref{eq:drift-det-bound}.

\paragraph{Step 3: taking conditional expectation.}
By Assumption~(A2), $\mathbb{E}[g_i(w_{i,e};\xi_{i,e})\mid w_{i,e}] = \nabla f_i(w_{i,e})$ and
$\mathbb{E}[\|g_i(w_{i,e};\xi_{i,e})-\nabla f_i(w_{i,e})\|_2^2\mid w_{i,e}]\le \sigma^2$.
Let $a_{i,e}\triangleq \nabla f_i(w_{i,e}) + 2\lambda(w_{i,e}-\tilde w_i^t)$ and
$\delta_{i,e}\triangleq g_i(w_{i,e};\xi_{i,e})-\nabla f_i(w_{i,e})$.
Then
\[
g_i(w_{i,e};\xi_{i,e}) + 2\lambda(w_{i,e}-\tilde w_i^t) = a_{i,e} + \delta_{i,e}.
\]
Using $\|a+\delta\|_2^2 \le 2\|a\|_2^2 + 2\|\delta\|_2^2$ and taking conditional expectation given $\mathcal{F}_t$
(which fixes the trajectory up to $w_{i,e}$), we obtain
\[
\mathbb{E}\!\left[\|a_{i,e}+\delta_{i,e}\|_2^2 \mid \mathcal{F}_t\right]
\le
2\|a_{i,e}\|_2^2 + 2\sigma^2.
\]
Plugging this into~\eqref{eq:drift-det-bound} yields~\eqref{eq:drift-exp-bound}.
Finally, applying $\|u+v\|^2\le 2\|u\|^2+2\|v\|^2$ to
$a_{i,e}=\nabla f_i(w_{i,e}) + 2\lambda(w_{i,e}-\tilde w_i^t)$ gives~\eqref{eq:drift-exp-bound-split}.
Under Assumption~(A3), the standard bound $\|\nabla f_i(w)\|^2 \le 2\|\nabla F(w)\|^2 + 2\zeta^2$ yields
\eqref{eq:drift-exp-bound-hetero}.
\end{proof}

\subsubsection{Lemma 5}

\begin{lemma}[Centroid proximal regularization suppresses local deviation from the anchor]
\label{lem:prox-suppress-drift}
Fix an activation event $t$ at client $i=i_t$ and consider the $E$ masked proximal-SGD steps in~\eqref{eq:lemma4-local-sgd}.
Let the (time-$t$) centroid proximal anchor be $\tilde w_i^t$ and define the anchor deviation
\begin{equation}
u_{i,e} \triangleq w_{i,e}-\tilde w_i^t,\qquad e=0,\dots,E,
\label{eq:u-def}
\end{equation}
where $w_{i,0}=w_i^{t+\frac{1}{3}}$ and $w_{i,E}=w_i^{t+\frac{2}{3}}$.
Assume (i) the masking operator is non-expansive as in~\eqref{eq:mask-nonexp} and
(ii) the stepsize satisfies
\begin{equation}
\eta\lambda \le \frac{1}{2}.
\label{eq:eta-lambda-cond}
\end{equation}
Then, for every local step $e=0,\dots,E-1$, we have the \emph{one-step anchor-contraction inequality}
\begin{equation}
\|u_{i,e+1}\|_2^2
\;\le\;
(1-\eta\lambda)\,\|u_{i,e}\|_2^2
\;+\;
\frac{\eta}{\lambda}\,\big\|g_i(w_{i,e};\xi_{i,e})\big\|_2^2.
\label{eq:anchor-contraction}
\end{equation}
Moreover, taking conditional expectation given the filtration $\mathcal{F}_t$ at the start of the local update and using
Assumption~(A2), we obtain
\begin{equation}
\mathbb{E}\!\left[\|u_{i,e+1}\|_2^2 \mid \mathcal{F}_t\right]
\;\le\;
(1-\eta\lambda)\,\mathbb{E}\!\left[\|u_{i,e}\|_2^2 \mid \mathcal{F}_t\right]
\;+\;
\frac{\eta}{\lambda}\Big(\|\nabla f_i(w_{i,e})\|_2^2 + \sigma^2\Big).
\label{eq:anchor-contraction-exp}
\end{equation}
Consequently, by unrolling the recursion,
\begin{equation}
\begin{aligned}
\mathbb{E}\!\left[\|u_{i,e}\|_2^2 \mid \mathcal{F}_t\right]
\;\le\;&
(1-\eta\lambda)^e \,
\|u_{i,0}\|_2^2
\\
&\;+\;
\frac{\eta}{\lambda}
\sum_{r=0}^{e-1}
(1-\eta\lambda)^{e-1-r}
\Big(
\|\nabla f_i(w_{i,r})\|_2^2
+
\sigma^2
\Big) .
\end{aligned}
\label{eq:u-unroll}
\end{equation}

and the cumulative anchor deviation over the whole local update satisfies
\begin{equation}
\sum_{e=0}^{E-1}\mathbb{E}\!\left[\|u_{i,e}\|_2^2 \mid \mathcal{F}_t\right]
\;\le\;
\frac{1}{\eta\lambda}\,\|u_{i,0}\|_2^2
+
\frac{\eta}{\lambda^2}\sum_{e=0}^{E-1}\Big(\|\nabla f_i(w_{i,e})\|_2^2+\sigma^2\Big).
\label{eq:u-sum-bound}
\end{equation}
\end{lemma}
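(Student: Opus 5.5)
The plan is to work directly with the anchor deviation $u_{i,e}=w_{i,e}-\tilde w_i^t$ and push the masked proximal-SGD step~\eqref{eq:lemma4-local-sgd} through the linear operator $\Pi_{M_i^t}$. The first thing I would establish is that the anchor is a fixed point of the mask, $\Pi_{M_i^t}(\tilde w_i^t)=\tilde w_i^t$. The pruned coordinates are exactly those WCP assigns to the centroid pinned at $0$ (Algorithm~\ref{alg:wcp}). Since $\tilde w_{i,l}=G_{i,l}[A_{i,l}]$ uses the same assignment map, each such coordinate reads the zero slot of the dictionary. That slot stays $0$ under the push-sum averaging~\eqref{eq:agg-pool}, because every received table $V_j$ has its zero centroid in the same canonical position after \textsc{SortRemap}.

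With this fixed-point property, linearity of $\Pi_{M_i^t}$ gives
\[
u_{i,e+1}=\Pi_{M_i^t}\big((1-2\eta\lambda)u_{i,e}-\eta g_{i,e}\big),
\qquad g_{i,e}\triangleq g_i(w_{i,e};\xi_{i,e}).
\]
Non-expansiveness~\eqref{eq:mask-nonexp} (with $\Pi(0)=0$) then yields $\|u_{i,e+1}\|^2\le\|(1-2\eta\lambda)u_{i,e}-\eta g_{i,e}\|^2$. Expanding the square produces three terms: $(1-2\eta\lambda)^2\|u_{i,e}\|^2$, $\eta^2\|g_{i,e}\|^2$, and a cross term $2\eta(1-2\eta\lambda)|\langle u_{i,e},g_{i,e}\rangle|$. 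I would bound the cross term by Young's inequality with parameter $\lambda$:
\[
2\eta(1-2\eta\lambda)|\langle u,g\rangle|\le (1-2\eta\lambda)\big(\eta\lambda\|u\|^2+\tfrac{\eta}{\lambda}\|g\|^2\big).
\]
Using $1-2\eta\lambda\ge 0$, which is exactly the condition~\eqref{eq:eta-lambda-cond}, the resulting coefficients are:
\begin{itemize}
\item on $\|u\|^2$: $(1-2\eta\lambda)(1-\eta\lambda)\le 1-\eta\lambda$;
\item on $\|g\|^2$: $\eta^2+\tfrac{\eta}{\lambda}(1-2\eta\lambda)=\tfrac{\eta}{\lambda}-\eta^2\le\tfrac{\eta}{\lambda}$.
\end{itemize}
This gives~\eqref{eq:anchor-contraction}.

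For~\eqref{eq:anchor-contraction-exp}, I would condition on $w_{i,e}$ and apply Assumption~(A2). The exact variance decomposition gives $\mathbb{E}[\|g_{i,e}\|^2\mid w_{i,e}]=\|\nabla f_i(w_{i,e})\|^2+\mathbb{E}\|g_{i,e}-\nabla f_i(w_{i,e})\|^2\le\|\nabla f_i(w_{i,e})\|^2+\sigma^2$. The tower property then passes this to $\mathcal{F}_t$, keeping $\|\nabla f_i(w_{i,e})\|^2$ inside the conditional expectation as the statement implicitly does. Unrolling by induction on $e$ gives~\eqref{eq:u-unroll}.

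For~\eqref{eq:u-sum-bound}, I would sum~\eqref{eq:u-unroll} over $e=0,\dots,E-1$. The first part is bounded by $\sum_e(1-\eta\lambda)^e\le 1/(\eta\lambda)$. For the second part I would swap the double sum, so each index $r$ picks up $\tfrac{\eta}{\lambda}\sum_{e>r}(1-\eta\lambda)^{e-1-r}\le\tfrac{\eta}{\lambda}\min\{E,\tfrac{1}{\eta\lambda}\}$.

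I expect two main obstacles. The first is the mask/anchor compatibility in the first step. If $\Pi_{M_i^t}(\tilde w_i^t)\neq\tilde w_i^t$, a residual $(I-\Pi)\tilde w_i^t$ appears and the clean contraction fails. The fallback is then either to state the fixed-point property as a modelling assumption, or to carry an extra $\|(I-\Pi)\tilde w_i^t\|^2$ term.

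The second obstacle is the constant in~\eqref{eq:u-sum-bound}. The geometric summation naturally gives $\tfrac{1}{\lambda^2}$ (or $\tfrac{E\eta}{\lambda}$) in front of the gradient sum, not $\tfrac{\eta}{\lambda^2}$. Reaching $\tfrac{\eta}{\lambda^2}$ would need an additional normalization, for instance $\eta$ dimensionless with $\lambda$ absorbing units, or the stepsize condition $\eta\le\tfrac{1}{4\lambda}$ used to trade factors. Failing that, I would prove the bound with $\tfrac{1}{\lambda^2}$, or with $\tfrac{\eta}{\lambda}\min\{E,\tfrac{1}{\eta\lambda}\}$, and propagate that constant to $D_\Delta$.
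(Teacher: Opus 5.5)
Your route is the same as the paper's. You rewrite the step around the anchor, remove the mask by non-expansiveness, expand the square, apply Young to the cross term, condition via (A2), and unroll with geometric sums. Your execution is tighter than the paper's in three places.

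(i) The paper also needs $\Pi_{M_i^t}(\tilde w_i^t)=\tilde w_i^t$, but uses it silently: it subtracts $\tilde w_i^t$ and then invokes $\Pi_M(v)-\Pi_M(\tilde w)=\Pi_M(v-\tilde w)$. You are right that the property is essential. Otherwise a nonzero residual $(I-\Pi_{M_i^t})\tilde w_i^t$, supported on the masked coordinates, is re-added in full at every step. The right side of \eqref{eq:anchor-contraction} keeps only a $(1-\eta\lambda)$ fraction of it, so the inequality fails (take $e\ge 1$, $g_{i,e}=0$ and $\Pi_{M_i^t}u_{i,e}=0$). Your SortRemap argument is a plausible reading, since WCP pins slot $0$ to the zero centroid and warm-starts from $[0;G_{i,l}[1{:}K-1]]$. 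However, the paper never says SortRemap keeps zero in that slot, so stating the property as a hypothesis is the safer option.

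(ii) Your Young step puts weight $\lambda$ on $\langle u,g\rangle$, equivalently weight $a=\eta\lambda$ on $\langle u,\eta g\rangle$. It yields exactly $(1-2\eta\lambda)(1-\eta\lambda)$ and $\frac{\eta}{\lambda}-\eta^2$. The paper instead puts weight $a$ on $\langle u,g\rangle$. As displayed, that drops an $\eta$ from the $\|u\|^2$ coefficient and gives $\eta\cdot\frac{1}{a}=\frac{1}{\lambda}$, not $\frac{\eta}{\lambda}$, on $\|g\|^2$.

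(iii) Your tower-property remark correctly handles the fact that $\|\nabla f_i(w_{i,e})\|_2^2$ is not $\mathcal{F}_t$-measurable for $e\ge 1$.

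On your second obstacle, your suspicion is right, and the situation is worse than you allow: \eqref{eq:u-sum-bound} with the constant $\frac{\eta}{\lambda^2}$ is false. Neither a units convention nor the condition $\eta\le\frac{1}{4\lambda}$ can rescue it, since shrinking $\eta$ only shrinks $\frac{\eta}{\lambda^2}$. Here is a counterexample:
\begin{itemize}
\item Take $d=1$, no mask, $\tilde w_i^t=0$, $u_{i,0}=0$, $f_i(w)=cw$ with $c\neq 0$, and $\sigma=0$.
\item Then $u_{i,e}=-\frac{c}{2\lambda}\big(1-(1-2\eta\lambda)^e\big)$.
\item Hence $\sum_{e=0}^{E-1}\|u_{i,e}\|_2^2\ge\frac{c^2}{4\lambda^2}\big(E-\frac{1}{\eta\lambda}\big)$.
\item With $\eta\lambda\le\frac12$, $\eta<\frac18$ and $E\ge\frac{2}{\eta\lambda}$, this exceeds the claimed $\frac{\eta}{\lambda^2}Ec^2$.
\end{itemize}
The paper's proof uses the same two geometric sums as yours, and they give $\frac{\eta}{\lambda}\cdot\frac{1}{\eta\lambda}=\frac{1}{\lambda^2}$, so the printed $\frac{\eta}{\lambda^2}$ is a slip. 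Commit to your fallback: either $\frac{1}{\lambda^2}$ or the sharper $\frac{\eta}{\lambda}\min\{E,\frac{1}{\eta\lambda}\}$. Nothing has to be pushed into $D_\Delta$, because the main theorem bounds the drift with the bounded-iterate constant $W$, not with \eqref{eq:u-sum-bound}.
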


\begin{proof}
The key idea is to track the deviation from the (fixed-in-this-event) centroid anchor
$u_{i,e}\triangleq w_{i,e}-\tilde w_i^t$.
Rewriting the masked proximal-SGD step in terms of $u_{i,e}$ reveals a linear contraction component induced by the
quadratic regularizer and an additive forcing term driven by the stochastic gradient.
We then use (i) non-expansiveness of the masking operator to drop the mask without loosening the bound in the wrong
direction and (ii) a sharp Young/AM-GM inequality to control the cross term between $u_{i,e}$ and the stochastic gradient.
Finally, we take conditional expectation under Assumption (A2) and unroll the resulting recursion to obtain the cumulative
deviation bound.
\paragraph{Step 1: rewrite the masked proximal-SGD step around the anchor.}
Define
\(
u_{i,e}=w_{i,e}-\tilde w_i^t
\)
and note that $\tilde w_i^t$ is fixed during the $E$ local steps at event $t$.
From~\eqref{eq:lemma4-local-sgd},
\[
w_{i,e+1}
=
\Pi_{M_i^t}\!\left(
w_{i,e} - \eta\, g_i(w_{i,e};\xi_{i,e}) - 2\eta\lambda\,(w_{i,e}-\tilde w_i^t)
\right).
\]
Subtract $\tilde w_i^t$ from both sides and use $\Pi_M(v)-\Pi_M(\tilde w)=\Pi_M(v-\tilde w)$ for $\Pi_M(u)=u\odot M$:
\begin{align}
u_{i,e+1}
&=
\Pi_{M_i^t}\!\Bigl(
u_{i,e}
- \eta\, g_i(w_{i,e};\xi_{i,e})
- 2\eta\lambda\,u_{i,e}
\Bigr)
\nonumber\\
&=
\Pi_{M_i^t}\!\Bigl(
(1-2\eta\lambda)u_{i,e}
- \eta\, g_i(w_{i,e};\xi_{i,e})
\Bigr).
\label{eq:u-update}
\end{align}

\paragraph{Step 2: non-expansiveness and a sharp Young-type inequality.}
By non-expansiveness~\eqref{eq:mask-nonexp},
\begin{equation}
\|u_{i,e+1}\|_2^2
\le
\|(1-2\eta\lambda)u_{i,e} - \eta\, g_i(w_{i,e};\xi_{i,e})\|_2^2.
\label{eq:nonexp-apply}
\end{equation}
Let $a=\eta\lambda\in(0,\frac12]$ and $g=g_i(w_{i,e};\xi_{i,e})$.
Expanding the RHS gives
\begin{align}
\|(1-2a)u - \eta g\|_2^2
&=
(1-2a)^2\|u\|_2^2 + \eta^2\|g\|_2^2 - 2\eta(1-2a)\langle u,g\rangle.
\label{eq:expand}
\end{align}
Apply the inequality $2\langle u,g\rangle \le a\|u\|_2^2 + \frac{1}{a}\|g\|_2^2$ to the cross term:
\[
-2\eta(1-2a)\langle u,g\rangle
\le
\eta(1-2a)\left(a\|u\|_2^2 + \frac{1}{a}\|g\|_2^2\right).
\]
Substituting into~\eqref{eq:expand} yields
\begin{align}
\|(1-2a)u - \eta g\|_2^2
\;\le\;&
\Big((1-2a)^2 + a(1-2a)\Big)\,
\|u\|_2^2 + \Big(\eta^2 + \eta(1-2a)\tfrac{1}{a}\Big)\,
\|g\|_2^2
\nonumber\\
=\;&
(1-3a+2a^2)\,
\|u\|_2^2 + \Big(\eta^2 + \eta\big(\tfrac{1}{a}-2\big)\Big)\,
\|g\|_2^2 .
\label{eq:coeffs}
\end{align}

Since $a\in(0,1]$ implies $1-3a+2a^2 \le 1-a$ and $\eta^2 + \eta(\tfrac{1}{a}-2)\le \eta\cdot\tfrac{1}{a}$ for $a\le 1$,
we obtain
\begin{equation}
\|(1-2a)u - \eta g\|_2^2
\le
(1-a)\|u\|_2^2 + \frac{\eta}{\lambda}\|g\|_2^2.
\label{eq:key-ineq}
\end{equation}
Combining~\eqref{eq:nonexp-apply} and~\eqref{eq:key-ineq} gives~\eqref{eq:anchor-contraction}.

\paragraph{Step 3: conditional expectation and unrolling.}
Taking conditional expectation of~\eqref{eq:anchor-contraction} given $\mathcal{F}_t$ and using Assumption~(A2),
\(
\mathbb{E}[\|g_i(w_{i,e};\xi_{i,e})\|_2^2 \mid \mathcal{F}_t]
\le \|\nabla f_i(w_{i,e})\|_2^2 + \sigma^2,
\)
yields~\eqref{eq:anchor-contraction-exp}.
Unrolling the linear recursion gives~\eqref{eq:u-unroll}.
Finally, summing~\eqref{eq:anchor-contraction-exp} over $e=0,\dots,E-1$ and using
$\sum_{e=0}^{E-1}(1-\eta\lambda)^e \le \frac{1}{\eta\lambda}$ and
$\sum_{e=r}^{E-1}(1-\eta\lambda)^{e-r} \le \frac{1}{\eta\lambda}$
yields~\eqref{eq:u-sum-bound}.
\end{proof}

\subsubsection{Lemma 6}
\begin{lemma}[One-event optimization descent of the de-biased global reference (tight, complete)]
\label{lem:descent}
Let $F(w)\triangleq \frac{1}{N}\sum_{i=1}^{N} f_i(w)$.
Suppose Assumption (A1) ($L$-smoothness), Assumption (A2) (unbiased stochastic gradients with variance $\sigma^2$),
Assumption (A3) (bounded heterogeneity with constant $\zeta^2$) and Assumption (A6) (bounded absolute compression error
$\varepsilon_c$) hold.

Consider an event $t$ where client $i=i_t$ is activated.
Let $\bar w^t=X_{\mathrm{tot}}^t/Y_{\mathrm{tot}}$ be the de-biased global reference and define
\begin{equation}
\gamma_t \triangleq \frac{y_i^{t+\frac13}}{Y_{\mathrm{tot}}}\in(0,1].
\label{eq:gamma-def}
\end{equation}
Let $\{w_{i,e}\}_{e=0}^{E}$ denote the local trajectory during the subsequent local update, where
$w_{i,0}=w_i^{t+\frac13}$ and $w_{i,E}=w_i^{t+\frac23}$.
Define $\Delta_i^t\triangleq w_i^{t+\frac23}-w_i^{t+\frac13}=w_{i,E}-w_{i,0}$.

Then $\bar w^{t+1}$ admits the exact decomposition
\begin{equation}
\bar w^{t+1}=\bar w^t+\gamma_t\Delta_i^t+\delta_t,
\label{eq:wbar-update}
\end{equation}
where the broadcast-induced perturbation $\delta_t$ satisfies
\begin{equation}
\|\delta_t\|_2 \le c_t\,\varepsilon_c,\qquad
c_t \triangleq \frac{d_i^+}{d_i^+ + 1}\cdot \frac{y_i^{t+\frac13}}{Y_{\mathrm{tot}}}
\label{eq:delta-bound}
\end{equation}
by Lemma~\ref{lem:numerator-perturbation}.

Moreover, conditioning on the filtration $\mathcal{F}_t$ at the start of the local update,
the following descent inequality holds:

\begin{align}
\mathbb{E}\!\left[F(\bar w^{t+1}) \mid \mathcal{F}_t\right]
\;\le\;&
F(\bar w^t)
-
\frac{\eta\gamma_t E}{2}\,
\|\nabla F(\bar w^t)\|_2^2
\nonumber\\
&\;+\;
\eta\gamma_t E(\zeta^2+\sigma^2)
\nonumber\\
&\;+\;
\eta\gamma_t L^2
\sum_{e=0}^{E-1}
\mathbb{E}\!\left[
\|w_{i,e}-\bar w^t\|_2^2
\,\middle|\,
\mathcal{F}_t
\right]
\nonumber\\
&\;+\;
\frac{L}{2}\gamma_t^2\,
\mathbb{E}\!\left[
\|\Delta_i^t\|_2^2
\,\middle|\,
\mathcal{F}_t
\right]
\nonumber\\
&\;+\;
2L\|\delta_t\|_2^2 .
\label{eq:descent-final}
\end{align}

Finally, the trajectory term in~\eqref{eq:descent-final} admits the explicit tightening
\begin{align}
\sum_{e=0}^{E-1}
\mathbb{E}\!\left[
\|w_{i,e}-\bar w^t\|_2^2
\,\middle|\,
\mathcal{F}_t
\right]
\;\le\;&
2E\|w_i^{t+\frac13}-\bar w^t\|_2^2 + 4E\|u_{i,0}\|_2^2 + 4\sum_{e=0}^{E-1}
\mathbb{E}\!\left[
\|u_{i,e}\|_2^2
\,\middle|\,
\mathcal{F}_t
\right].
\label{eq:traj-tight-in-lem6}
\end{align}

where $u_{i,e}\triangleq w_{i,e}-\tilde w_i^t$ and $\tilde w_i^t$ is the centroid proximal anchor used at event $t$.
\end{lemma}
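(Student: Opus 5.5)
The plan is to first establish the exact decomposition \eqref{eq:wbar-update} by system-level accounting of the numerator over the three sub-steps of event $t$, and then apply the $L$-smoothness descent lemma to $F$ along this decomposition.

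\textbf{Exact decomposition.} For aggregation (C1), the additive form \eqref{eq:agg-xy} merely moves buffered shares into $x_i$, so $X_{\mathrm{tot}}^{t+\frac13}=X_{\mathrm{tot}}^t$. For the local update (C2), only $x_i$ changes, and $y_i^{t+\frac13}$ is fixed. Hence $X_{\mathrm{tot}}^{t+\frac23}-X_{\mathrm{tot}}^{t+\frac13}=y_i^{t+\frac13}\Delta_i^t$. For the broadcast (C3), Lemma~\ref{lem:numerator-perturbation} gives the increment $\sum_k y_{i\to k}^{t+\frac23}e_i^t$. Dividing by the invariant $Y_{\mathrm{tot}}$ (Lemma~\ref{lem:mass-conservation}) yields \eqref{eq:wbar-update} with $\gamma_t=y_i^{t+\frac13}/Y_{\mathrm{tot}}$ and $\delta_t=(X_{\mathrm{tot}}^{t+1}-X_{\mathrm{tot}}^{t+\frac23})/Y_{\mathrm{tot}}$. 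Then \eqref{eq:delta-bound} is exactly \eqref{eq:wbar-delta-bound}.

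\textbf{Descent.} By smoothness,
\[
F(\bar w^{t+1})\le F(\bar w^t)+\langle\nabla F(\bar w^t),\gamma_t\Delta_i^t+\delta_t\rangle+\tfrac{L}{2}\|\gamma_t\Delta_i^t+\delta_t\|^2 .
\]
The quadratic is split as $L\gamma_t^2\|\Delta\|^2+L\|\delta_t\|^2$. The $\delta_t$ cross term is handled by Young's inequality, $\langle\nabla F,\delta_t\rangle\le\frac{\eta\gamma_tE}{4}\|\nabla F\|^2+\frac{1}{\eta\gamma_tE}\|\delta_t\|^2$. I will try instead to absorb it via the bounded-iterates/gradient constants of Assumption~\ref{ass:mass-bounded} so that only $2L\|\delta_t\|^2$ remains; the constants must be reconciled to match \eqref{eq:descent-final}.

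\textbf{Main term.} Write $\Delta_i^t=-\eta\sum_e(\text{masked})(g_i(w_{i,e})+2\lambda u_{i,e})$. Ignoring the mask, or treating it via non-expansiveness, the conditional expectation of the gradient inner product is $-\eta\gamma_t\sum_e\langle\nabla F(\bar w^t),\nabla f_i(w_{i,e})+2\lambda u_{i,e}\rangle$. I then decompose
\[
\nabla f_i(w_{i,e})=\nabla F(\bar w^t)+(\nabla f_i(w_{i,e})-\nabla f_i(\bar w^t))+(\nabla f_i(\bar w^t)-\nabla F(\bar w^t)).
\]
Using $-\langle a,a+b\rangle\le-\frac12\|a\|^2+\frac12\|b\|^2$, together with $L$-smoothness and (A3), this gives $-\frac{\eta\gamma_tE}{2}\|\nabla F\|^2+\eta\gamma_tL^2\sum_e\|w_{i,e}-\bar w^t\|^2+\eta\gamma_tE\zeta^2$. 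The variance $\sigma^2$ enters through the squared-drift bound of Lemma~\ref{lem:drift-bound} and the slack term $\eta\gamma_tE\sigma^2$. The proximal inner product $2\lambda\langle\nabla F,u_{i,e}\rangle$ is folded into the trajectory term by noting $\|u_{i,e}\|$ is controlled by $\|w_{i,e}-\bar w^t\|$ plus anchor terms.

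\textbf{Trajectory bound.} For \eqref{eq:traj-tight-in-lem6}, write
\[
w_{i,e}-\bar w^t=(w_i^{t+\frac13}-\bar w^t)+(u_{i,e}-u_{i,0}).
\]
Applying $\|a+b\|^2\le2\|a\|^2+2\|b\|^2$ and then $\|u_{i,e}-u_{i,0}\|^2\le2\|u_{i,e}\|^2+2\|u_{i,0}\|^2$, and summing over $e$, gives the claim.

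\textbf{Main obstacle.} The hardest step is the cross terms. The first is the proximal inner product $\langle\nabla F,u_{i,e}\rangle$, which is not naturally signed. The second is the $\delta_t$ inner product, which must be bounded without leaving a $1/\eta$ factor. Both will likely require invoking the bounded-gradient/iterate assumptions and the stepsize conditions $\eta\le\min\{1/(8LE),1/(4\lambda)\}$ to absorb constants. The mask also breaks exact unbiasedness, so I would assume the gradient step on active coordinates dominates, or treat the mask as part of the anchor deviation.
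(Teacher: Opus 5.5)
Your outline follows the paper's proof step for step: numerator accounting for \eqref{eq:wbar-update}, the smoothness expansion, the three-way split of $\nabla f_i(w_{i,e})$, and the split $w_{i,e}-\bar w^t=(w_i^{t+\frac13}-\bar w^t)+(u_{i,e}-u_{i,0})$ giving \eqref{eq:traj-tight-in-lem6}. Those pieces are correct. Your step $-\langle a,a+b\rangle\le-\frac12\|a\|^2+\frac12\|b\|^2$ is actually sounder than the paper's Step~4. With $a=\nabla F(\bar w^t)$ and $b_1,b_2$ the two perturbations, its bound $\langle a,a+b_1+b_2\rangle\ge\frac34\|a\|^2-\|b_1\|^2-\|b_2\|^2$ fails at $b_1=b_2=-\frac12a$. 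Your split of the quadratic also correctly gives $L\gamma_t^2\|\Delta_i^t\|_2^2$, not the stated $\frac L2\gamma_t^2\|\Delta_i^t\|_2^2$.

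The genuine gap is exactly at the two cross terms you flag. Neither proposed fix can work, because \eqref{eq:descent-final} is false under the stated hypotheses.

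\textbf{The $\delta_t$ cross term.} The term $\langle\nabla F(\bar w^t),\delta_t\rangle$ is first order in $\varepsilon_c$. (A6) constrains only $\|e_i^t\|_2$, not its direction, and bounded gradients give at best $\|\nabla F(\bar w^t)\|_2\,c_t\varepsilon_c$, which is still linear. So it cannot be traded for $2L\|\delta_t\|_2^2$. Concretely, take $d=1$, $N=2$, $f_1=f_2=F=\frac12w^2$ (so $L=1$, $\zeta=\sigma=0$), $\lambda=0$, $E=1$. Consider an event with unit masses, nothing buffered or in flight (e.g.\ the first event), $w_1^t=w_2^t=1$ and $d_i^+=1$. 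Then $\bar w^t=1$, $\gamma_t=\frac12$, $\Delta_i^t=-\eta$, and the admissible error $e_i^t=+\varepsilon_c$ gives $\delta_t=\varepsilon_c/4$. The left side of \eqref{eq:descent-final} exceeds $F(\bar w^t)$ by $x+\frac12x^2$ with $x=\frac{\varepsilon_c}{4}-\frac{\eta}{2}$. The right side exceeds it by $-\frac{\eta}{4}+\frac{\eta^2}{8}+\frac{\varepsilon_c^2}{8}$. At $\eta=0.01$, $\varepsilon_c=0.1$ this is $0.0202$ versus $-0.0012$.

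\textbf{The proximal cross term.} It fails the same way. We have $\|u_{i,e}\|_2^2\le2\|w_{i,e}-\bar w^t\|_2^2+2\|\bar w^t-\tilde w_i^t\|_2^2$, and nothing controls the anchor offset, so it cannot be folded into the trajectory term. In the same example with $\varepsilon_c=0$, $\lambda>0$ and anchor $\tilde w_i^t=1+s$, one gets $\Delta_i^t=\eta(2\lambda s-1)$. The left side then exceeds the right side by $\eta(\lambda s-\frac14)$, which is positive whenever $\lambda s>\frac14$, even under $\eta\le\min\{\frac{1}{8LE},\frac{1}{4\lambda}\}$. With, e.g., $\lambda=s=1$ all points stay in $[0,2]$, so a Huberized $F$ also meets the bounded-gradient and bounded-iterate conditions.

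\textbf{The paper's proof.} It has the same holes, so there are no constants to reconcile. It bounds $\langle\nabla F(\bar w^t),\delta_t\rangle\le\frac14\|\nabla F(\bar w^t)\|_2^2+\|\delta_t\|_2^2$, yet its final coefficient $\frac12=\frac34-\frac14$ leaves no room for this unscaled $\frac14\|\nabla F(\bar w^t)\|_2^2$. The term $4\eta\gamma_t\lambda^2\sum_e\|u_{i,e}\|_2^2$ produced in its Step~5 never reaches the final display. Like you, it also drops $\Pi_{M_i^t}$. Yet the first masked step can zero out coordinates of $w_{i,0}$, a jump not proportional to $\eta$, and later steps only give $\langle\nabla F(\bar w^t),M_i^t\odot\nabla f_i(w_{i,e})\rangle$.

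\textbf{What your route does prove.} For the unmasked recursion, make three changes. Keep your first Young split $\langle\nabla F(\bar w^t),\delta_t\rangle\le\frac{\eta\gamma_tE}{4}\|\nabla F(\bar w^t)\|_2^2+\frac{1}{\eta\gamma_tE}\|\delta_t\|_2^2$. Add $2\lambda u_{i,e}$ as a third piece of your gradient decomposition. Use the deterministic bound $\|\delta_t\|_2\le c_t\varepsilon_c$, since $\delta_t$ depends on $w_i^{t+\frac23}$ and is not $\mathcal{F}_t$-measurable. This gives
\begin{align*}
\mathbb{E}\big[F(\bar w^{t+1})\,\big|\,\mathcal{F}_t\big]\le{}&F(\bar w^t)-\tfrac{\eta\gamma_tE}{4}\|\nabla F(\bar w^t)\|_2^2+\big(L+\tfrac{1}{\eta\gamma_tE}\big)c_t^2\varepsilon_c^2+L\gamma_t^2\,\mathbb{E}\big[\|\Delta_i^t\|_2^2\,\big|\,\mathcal{F}_t\big]\\
&+\tfrac{3}{2}\eta\gamma_t\sum_{e=0}^{E-1}\mathbb{E}\big[\zeta^2+L^2\|w_{i,e}-\bar w^t\|_2^2+4\lambda^2\|u_{i,e}\|_2^2\,\big|\,\mathcal{F}_t\big].
\end{align*}
Once this is summed and divided by $\eta E\underline{\gamma}T$ in the theorem, the $\frac{1}{\eta\gamma_tE}c_t^2\varepsilon_c^2$ term leaves a floor of order $\varepsilon_c^2/(\eta E)^2$, up to mass ratios. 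That is the honest price of an absolute compression error that does not shrink with the stepsize. The $\lambda^2\|u_{i,e}\|_2^2$ term must also be carried into the theorem rather than dropped.
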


\begin{proof}
We analyze a single event by separating buffer aggregation, local update and broadcast.
Aggregation preserves the total push-sum mass and only redistributes existing numerators.
The local update changes the system numerator through the client drift $\Delta_i^t$ without affecting its mass, while the
subsequent broadcast introduces a bounded perturbation due to lossy centroid re-encoding.
This yields the exact decomposition
$\bar w^{t+1}=\bar w^t+\gamma_t\Delta_i^t+\delta_t$.
Applying $L$-smoothness of $F$ to this update and bounding the resulting terms along the local trajectory lead to the stated
one-event descent inequality.
\paragraph{Step 1: exact evolution of $\bar w$ within the event.}
By Lemma~\ref{lem:mass-conservation}, $Y_{\mathrm{tot}}$ is invariant.
Buffer aggregation only transfers existing mass/numerators from buffers to node $i$ and hence does not change
$X_{\mathrm{tot}}$.
During the local update, only node $i$ changes its model from $w_i^{t+\frac13}$ to $w_i^{t+\frac23}$ while its mass remains
$y_i^{t+\frac13}$ (mass splitting happens after local training).
Thus,
\[
X_{\mathrm{tot}}^{t+\frac23}-X_{\mathrm{tot}}^{t}
=
y_i^{t+\frac13}\big(w_i^{t+\frac23}-w_i^{t+\frac13}\big)
=
y_i^{t+\frac13}\Delta_i^t,
\]
and therefore
\begin{equation}
\bar w^{t+\frac23}-\bar w^t
=
\frac{X_{\mathrm{tot}}^{t+\frac23}-X_{\mathrm{tot}}^{t}}{Y_{\mathrm{tot}}}
=
\gamma_t\Delta_i^t.
\label{eq:wbar-mid}
\end{equation}
The subsequent broadcast step induces the perturbation $\delta_t=\bar w^{t+1}-\bar w^{t+\frac23}$, which satisfies
\eqref{eq:delta-bound} by Lemma~\ref{lem:numerator-perturbation}.
Combining gives \eqref{eq:wbar-update}.

\paragraph{Step 2: smoothness descent for $F(\bar w^{t+1})$.}
By $L$-smoothness of $F$ (Assumption (A1)) and \eqref{eq:wbar-update},

\begin{align}
F(\bar w^{t+1})
\;\le\;&
F(\bar w^t)
+
\left\langle
\nabla F(\bar w^t),\ 
\gamma_t\Delta_i^t+\delta_t
\right\rangle
+
\frac{L}{2}
\|\gamma_t\Delta_i^t+\delta_t\|_2^2
\nonumber\\
\;\le\;&
F(\bar w^t)
+
\gamma_t
\left\langle
\nabla F(\bar w^t),\ 
\Delta_i^t
\right\rangle
+
\left\langle
\nabla F(\bar w^t),\ 
\delta_t
\right\rangle
\nonumber\\
&\;+\;
\frac{L}{2}\gamma_t^2
\|\Delta_i^t\|_2^2
+
L\|\delta_t\|_2^2 .
\label{eq:smooth-descent}
\end{align}

where we used $\|a+b\|^2\le 2\|a\|^2+2\|b\|^2$ to upper bound the quadratic term.

Next, bound the linear $\delta_t$ term via $2\langle a,b\rangle \le \|a\|^2+\|b\|^2$:
\begin{equation}
\left\langle \nabla F(\bar w^t),\ \delta_t\right\rangle
\le
\frac{1}{4}\|\nabla F(\bar w^t)\|_2^2 + \|\delta_t\|_2^2.
\label{eq:delta-linear}
\end{equation}
Substituting \eqref{eq:delta-linear} into \eqref{eq:smooth-descent} yields

\begin{align}
F(\bar w^{t+1})
\;\le\;&
F(\bar w^t)
+
\gamma_t
\left\langle
\nabla F(\bar w^t),\ 
\Delta_i^t
\right\rangle
\nonumber\\
&\;+\;
\frac{1}{4}
\|\nabla F(\bar w^t)\|_2^2
+
\frac{L}{2}\gamma_t^2
\|\Delta_i^t\|_2^2
+
(L+1)\|\delta_t\|_2^2 .
\label{eq:smooth-descent-2}
\end{align}

Absorbing constants (replace $(L+1)$ by $2L$ w.l.o.g.\ for $L\ge 1$) gives the $\delta_t$ term in \eqref{eq:descent-final}.

\paragraph{Step 3: expressing $\langle \nabla F(\bar w^t),\Delta_i^t\rangle$ via local steps.}
Ignoring the non-expansive mask for notational simplicity (it can only reduce the step length),
the local update satisfies
\[
w_{i,e+1}-w_{i,e}
=
-\eta\,g_i(w_{i,e};\xi_{i,e})
-2\eta\lambda\,(w_{i,e}-\tilde w_i^t).
\]
Summing over $e=0,\dots,E-1$ gives
\begin{equation}
\Delta_i^t
=
-\eta\sum_{e=0}^{E-1} g_i(w_{i,e};\xi_{i,e})
-2\eta\lambda\sum_{e=0}^{E-1}(w_{i,e}-\tilde w_i^t).
\label{eq:Delta-expand}
\end{equation}
Taking the inner product with $\nabla F(\bar w^t)$ and conditioning on $\mathcal{F}_t$, we use
$\mathbb{E}[g_i(w_{i,e};\xi_{i,e})\mid\mathcal{F}_t]=\nabla f_i(w_{i,e})$ (Assumption (A2)) to obtain
\begin{align}
\mathbb{E}\!\left[
\left\langle
\nabla F(\bar w^t),\Delta_i^t
\right\rangle
\mid \mathcal{F}_t
\right]
\;=\;&
-\eta
\sum_{e=0}^{E-1}
\left\langle
\nabla F(\bar w^t),\ 
\nabla f_i(w_{i,e})
\right\rangle - 2\eta\lambda
\sum_{e=0}^{E-1}
\left\langle
\nabla F(\bar w^t),\ 
w_{i,e}-\tilde w_i^t
\right\rangle .
\label{eq:inner-delta}
\end{align}

We now lower bound the first inner product and upper bound the second in absolute value.

\paragraph{Step 4: lower bounding $\langle \nabla F(\bar w^t),\nabla f_i(w_{i,e})\rangle$.}
Decompose
\[
\nabla f_i(w_{i,e})
=
\nabla F(\bar w^t)
+
\big(\nabla f_i(w_{i,e})-\nabla f_i(\bar w^t)\big)
+
\big(\nabla f_i(\bar w^t)-\nabla F(\bar w^t)\big).
\]
Then

\begin{align}
\left\langle
\nabla F(\bar w^t),\nabla f_i(w_{i,e})
\right\rangle
\;=\;&
\|\nabla F(\bar w^t)\|_2^2 \;+\; \left\langle
\nabla F(\bar w^t),\ 
\nabla f_i(w_{i,e})-\nabla f_i(\bar w^t)
\right\rangle \;+\; \left\langle
\nabla F(\bar w^t),\ 
\nabla f_i(\bar w^t)-\nabla F(\bar w^t)
\right\rangle
\nonumber\\
\;\ge\;&
\frac{3}{4}\|\nabla F(\bar w^t)\|_2^2
\;-\;\|\nabla f_i(w_{i,e})-\nabla f_i(\bar w^t)\|_2^2 
\;-\; \|\nabla f_i(\bar w^t)-\nabla F(\bar w^t)\|_2^2
\nonumber\\
\;\ge\;&
\frac{3}{4}\|\nabla F(\bar w^t)\|_2^2
-
L^2\|w_{i,e}-\bar w^t\|_2^2
-
\zeta^2 .
\label{eq:inner-lb-lem6}
\end{align}

where we used $L$-smoothness ($\|\nabla f_i(w)-\nabla f_i(v)\|\le L\|w-v\|$) and Assumption (A3).

\paragraph{Step 5: bounding the proximal cross term.}
For the second term in \eqref{eq:inner-delta}, use $2\langle a,b\rangle \le \|a\|^2+\|b\|^2$ with
$a=\nabla F(\bar w^t)$ and $b=2\lambda(w_{i,e}-\tilde w_i^t)$:
\begin{equation}
-2\lambda\left\langle \nabla F(\bar w^t),\ w_{i,e}-\tilde w_i^t\right\rangle
\le
\frac{1}{4}\|\nabla F(\bar w^t)\|_2^2 + 4\lambda^2\|w_{i,e}-\tilde w_i^t\|_2^2.
\label{eq:prox-cross}
\end{equation}

\paragraph{Step 6: combining bounds.}
Substitute \eqref{eq:inner-lb-lem6} and \eqref{eq:prox-cross} into \eqref{eq:inner-delta}, sum over $e=0,\dots,E-1$,
and multiply by $\gamma_t$.
The $\frac{1}{4}\|\nabla F(\bar w^t)\|^2$ terms in \eqref{eq:smooth-descent-2} and \eqref{eq:prox-cross} are dominated by the
leading negative term after summation (yielding the coefficient $\frac{1}{2}$ in front of
$\eta\gamma_t E\|\nabla F(\bar w^t)\|^2$).
Collecting the remaining terms yields \eqref{eq:descent-final} after conditional expectation.

\paragraph{Step 7: tightening the trajectory term (explicit).}
For each $e$,
\[
w_{i,e}-\bar w^t=(w_i^{t+\frac13}-\bar w^t)+(w_{i,e}-w_i^{t+\frac13}),
\]
so $\|w_{i,e}-\bar w^t\|^2\le 2\|w_i^{t+\frac13}-\bar w^t\|^2+2\|w_{i,e}-w_i^{t+\frac13}\|^2$.
Moreover,
$w_{i,e}-w_i^{t+\frac13}=(w_{i,e}-\tilde w_i^t)-(w_i^{t+\frac13}-\tilde w_i^t)=u_{i,e}-u_{i,0}$, hence
$\|w_{i,e}-w_i^{t+\frac13}\|^2\le 2\|u_{i,e}\|^2+2\|u_{i,0}\|^2$.
Summing over $e=0,\dots,E-1$ yields \eqref{eq:traj-tight-in-lem6}.
\end{proof}

\subsection{Main Theorem and Proof}
\label{sec:conv-main}

We now present a fully explicit (constant-closed) stationarity guarantee for PushCen-ADFL.

\subsubsection{Standing assumptions.}
We use Assumptions (A1)-(A6) in Appendix~\ref{sec:conv-assumptions} and additionally adopt the following standard technical
conditions that close all constants.

\paragraph{Mass positivity and boundedness}
\label{ass:mass-bounded}
There exist constants $0<y_{\min}\le y_{\max}<\infty$ such that for all events $t$ and all nodes $i$,
\(
y_i^t \in [y_{\min},y_{\max}].
\)
Consequently, for the activated node $i_t$, the normalized mass weight
\begin{equation}
\gamma_t \triangleq \frac{y_{i_t}^{t+\frac13}}{Y_{\mathrm{tot}}}
\quad\text{satisfies}\quad
\underline{\gamma}\triangleq \frac{y_{\min}}{Ny_{\max}} \le \gamma_t \le 1.
\label{eq:gamma-lb}
\end{equation}

\paragraph{Bounded stochastic gradient second moment}
\label{ass:grad-second-moment}
There exists $G^2<\infty$ such that for all $i,w$ and any sample $\xi$,
\(
\mathbb{E}\big[\|g_i(w;\xi)\|_2^2\big]\le G^2.
\)

\paragraph{Bounded iterates}
\label{ass:bounded-iterates}
There exists $W<\infty$ such that for all events $t$ and all nodes $i$,
\begin{equation}
\|w_i^t\|_2 \le W
\qquad\text{and}\qquad
\|\tilde w_i^t\|_2 \le W .
\label{eq:bounded-iterates}
\end{equation}
In particular, $\|w_i^t-\tilde w_i^t\|_2^2 \le 4W^2$ for all $i,t$.

\subsubsection{Notation.}
Let $F(w)\triangleq \frac{1}{N}\sum_{i=1}^N f_i(w)$ and $F^\star\triangleq \inf_w F(w)$.
Let $\rho\in(0,1)$ and $C_1,C_2,C_3>0$ be the constants in Lemma~\ref{lem:consensus-recursion}.
Let $\tau$ be the staleness bound (Assumption (A4)) and let $\varepsilon_c$ be the bounded absolute compression error
(Assumption (A6)).
Let $d_{\max}\triangleq \max_i d_i^+$.
Define the uniform broadcast perturbation coefficient (Lemma~\ref{lem:numerator-perturbation})
\begin{equation}
\overline{c}
\triangleq
\max_t \frac{d_{i_t}^+}{d_{i_t}^+ + 1}\cdot \frac{y_{i_t}^{t+\frac13}}{Y_{\mathrm{tot}}}
\le
\frac{y_{\max}}{Ny_{\min}}.
\label{eq:cbar}
\end{equation}

\begin{theorem}[Fully explicit stationarity bound (constant-closed)]
\label{thm:main-closed}
Suppose Assumptions (A1)-(A6) and Assumptions~\ref{ass:mass-bounded} hold.
Let the stepsize satisfy
\begin{equation}
\eta \le \min\Big\{\frac{1}{8LE},\ \frac{1}{4\lambda}\Big\}.
\label{eq:stepsize-cond-main}
\end{equation}
Run PushCen-ADFL for $T$ events. Then the averaged stationarity measure obeys
\begin{align}
\frac{1}{T}\sum_{t=0}^{T-1}
\mathbb{E}\big[\|\nabla F(\bar w^t)\|_2^2\big]
\;\le\;&
\underbrace{
\frac{2\big(F(\bar w^0)-F^\star\big)}
{\eta E\,\underline{\gamma}\,T}
}_{\text{optimization term}}
\nonumber\\
&\;+\;
\underbrace{
2(\sigma^2+\zeta^2)
}_{\text{stochasticity \& heterogeneity}}
\nonumber\\
&\;+\;
\underbrace{
2L\,\overline{c}^{\,2}\varepsilon_c^2
}_{\text{compression drift}}
\label{eq:main-closed-1}
\\
&\;+\;
\underbrace{
\frac{4L^2 N}{\underline{\gamma}}
\Bigg(
\frac{\mathbb{E}[\mathcal{E}_{\mathrm{con}}^0]}{T(1-\rho)}
+
\frac{B_{\mathrm{con}}}{1-\rho}
\Bigg)
}_{\text{directed mixing / asynchrony}}
\nonumber\\
&\;+\;
\underbrace{
\frac{L}{\underline{\gamma}}\,D_\Delta
}_{\text{client drift accumulation}} .
\label{eq:main-closed-2}
\end{align}

where the constants $D_\Delta$ and $B_{\mathrm{con}}$ are explicitly given by
\begin{align}
D_\Delta
&\triangleq
2E^2\eta^2\Big(G^2 + 16\lambda^2W^2\Big),
\label{eq:Ddelta-closed}
\\
B_{\mathrm{con}}
&\triangleq
\big(C_1 + C_2\tau^2\big)D_\Delta + C_3\varepsilon_c^2.
\label{eq:Bcon-closed}
\end{align}
In particular, as $T\to\infty$, the iterates converge to a stationary neighborhood whose radius is upper bounded by the
right-hand side of~\eqref{eq:main-closed-2} without the $1/T$ terms.
\end{theorem}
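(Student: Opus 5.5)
The plan is to combine the one-event descent inequality of Lemma~\ref{lem:descent} with the drift bound of Lemma~\ref{lem:drift-bound} and the consensus recursion of Lemma~\ref{lem:consensus-recursion}, and then telescope over $T$ events. First, I will close the per-event drift term with a uniform constant. By the deterministic bound~\eqref{eq:drift-det-bound}, the step increments are $\eta(g_i+2\lambda(w_{i,e}-\tilde w_i^t))$. Using the bounded second moment $G^2$ and bounded iterates ($\|w_{i,e}-\tilde w_i^t\|^2\le 4W^2$), I get
\[
\mathbb{E}\|\Delta_i^t\|^2\le 2E^2\eta^2(G^2+16\lambda^2W^2)=D_\Delta .
\]
The same constant bounds every $\mathbb{E}\|\Delta_{i_r}^r\|^2$ appearing in the staleness sum of Lemma~\ref{lem:consensus-recursion}. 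That sum has at most $\tau$ terms with prefactor $C_2\tau$, so the recursion reads $\mathbb{E}\mathcal{E}_{\mathrm{con}}^{t+1}\le\rho\,\mathbb{E}\mathcal{E}_{\mathrm{con}}^t+B_{\mathrm{con}}$. Unrolling and averaging gives
\[
\frac1T\sum_{t}\mathbb{E}\mathcal{E}_{\mathrm{con}}^t\le \frac{\mathbb{E}\mathcal{E}_{\mathrm{con}}^0}{T(1-\rho)}+\frac{B_{\mathrm{con}}}{1-\rho}.
\]

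Second, I will reduce the trajectory term in~\eqref{eq:descent-final} to consensus error plus drift. Write $\|w_{i,e}-\bar w^t\|^2\le 2\|w_i^{t+\frac13}-\bar w^t\|^2+2\|w_{i,e}-w_{i,0}\|^2$. The second piece is bounded by $D_\Delta$ through the same Cauchy--Schwarz argument truncated at step $e$. For the first piece, I bound the activated node's post-aggregation deviation crudely by $N\mathcal{E}_{\mathrm{con}}$, using $\|w_i-\bar w\|^2\le\sum_\ell\|w_\ell-\bar w\|^2$ together with the Step~1 aggregation bound of Lemma~\ref{lem:consensus-recursion}; this is the source of the factor $N$. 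The stepsize condition $\eta\le 1/(8LE)$ lets the resulting $\eta\gamma_tL^2E\cdot D_\Delta$ pieces be absorbed into the $\frac{L}{2}\gamma_t^2 D_\Delta$ term, giving the client-drift contribution. The condition $\eta\le 1/(4\lambda)$ guarantees $\eta\lambda\le\frac12$, as required by Lemma~\ref{lem:prox-suppress-drift}, should the $u_{i,e}$ route be used instead.

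Third, I will telescope. I take total expectation in~\eqref{eq:descent-final}, bound $\|\delta_t\|^2\le\overline{c}^{\,2}\varepsilon_c^2$ via Lemma~\ref{lem:numerator-perturbation}, and sum over $t=0,\dots,T-1$, using $F(\bar w^T)\ge F^\star$. I then lower-bound the coefficient of the gradient norm by $\gamma_t\ge\underline{\gamma}$ and divide by $\eta E\underline{\gamma}T/2$. This produces the optimization term and $2(\sigma^2+\zeta^2)$ directly. The remaining error terms carry factors $\gamma_t\le1$, or upper factors that become $1/(\eta E\underline\gamma)$ after division. I will simplify these using the stepsize condition: for instance, $2L\overline c^2\varepsilon_c^2/(\eta E\underline\gamma)$ has to be matched to $2L\overline c^2\varepsilon_c^2$.

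The main obstacle is exactly this normalization. Dividing by $\eta E\underline\gamma$ inflates the $\delta_t$ and $\frac{L}{2}\gamma_t^2D_\Delta$ terms. To recover the stated clean constants, I would first try to exploit that $D_\Delta=O(\eta^2E^2)$ and that $\delta_t$ scales with $\gamma_t$ (since $c_t\le\gamma_t$), so each such term retains a factor $\gamma_t$ to cancel against $\underline\gamma$. A leftover ratio $\gamma_t/\underline\gamma$ would be absorbed into the displayed $1/\underline\gamma$ factors. For the compression term, the available slack is the $\frac14\|\nabla F\|^2$ budget in~\eqref{eq:delta-linear}. If exact constants fail, I would accept a bound of the same form with adjusted numerical constants.
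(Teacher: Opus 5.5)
The normalization obstacle you flag at the end is a genuine gap, and none of your proposed remedies closes it. Sum \eqref{eq:descent-final} using $\mathbb{E}\|\Delta_{i_t}^t\|_2^2\le D_\Delta$ and $\|\delta_t\|_2\le c_t\varepsilon_c$, then divide by $\tfrac12\eta E\underline{\gamma}T$. The three error terms you need to match become
\[
\frac{2}{\underline{\gamma}T}\sum_{t}\gamma_t(\sigma^2+\zeta^2),
\qquad
\frac{L}{\eta E\,\underline{\gamma}\,T}\sum_{t}\gamma_t^2 D_\Delta,
\qquad
\frac{4L}{\eta E\,\underline{\gamma}\,T}\sum_{t}c_t^2\varepsilon_c^2 .
\]
\begin{itemize}
\item The first does not give $2(\sigma^2+\zeta^2)$ ``directly''. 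It is only bounded by $2(\sigma^2+\zeta^2)/\underline{\gamma}$, and that displayed term has no $1/\underline{\gamma}$ to absorb $\gamma_t/\underline{\gamma}$ into. Keeping $\gamma_t$ on both sides only controls a $\gamma_t$-weighted average, and passing to the unweighted average in the statement again costs a factor up to $1/\underline{\gamma}$.
\item The second is at most $\frac{L}{\underline{\gamma}}\cdot\frac{D_\Delta}{\eta E}=O(\eta E)$, whereas the theorem displays $\frac{L}{\underline{\gamma}}D_\Delta=O(\eta^2E^2)$. Observing $D_\Delta=O(\eta^2E^2)$ only confirms the mismatch.
\item The third is at most $4L\overline{c}^{\,2}\varepsilon_c^2/(\eta E\underline{\gamma})$, not $2L\overline{c}^{\,2}\varepsilon_c^2$. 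Using $c_t\le\gamma_t$ can cancel $\underline{\gamma}$ but never $1/(\eta E)$.
\end{itemize}
Since $1/(\eta E)\ge 8L$ and $1/\underline{\gamma}\ge N$ are not numerical constants, your fallback of a bound ``of the same form with adjusted numerical constants'' is a genuinely weaker statement, not this theorem.

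The $\frac14\|\nabla F(\bar w^t)\|_2^2$ in \eqref{eq:delta-linear} is not spare budget either. It is already absorbed inside Lemma~\ref{lem:descent}, and that absorption is itself unjustified: in \eqref{eq:smooth-descent-2} this term carries no factor $\eta\gamma_tE$, while the available descent is only $\frac34\eta\gamma_tE\|\nabla F(\bar w^t)\|_2^2$, far smaller for small $\eta$. A valid split is $\langle\nabla F(\bar w^t),\delta_t\rangle\le\frac{\eta\gamma_tE}{8}\|\nabla F(\bar w^t)\|_2^2+\frac{2}{\eta\gamma_tE}\|\delta_t\|_2^2$. After normalization this makes the compression contribution of order $\overline{c}\,\varepsilon_c^2/(\underline{\gamma}\,\eta^2E^2)$, and that scaling is real. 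Take one dimension with $f_i(w)=\frac L2 w^2$ (flattened far away so $G<\infty$), $\sigma=\zeta=\lambda=0$, and a compressor with $\mathcal{R}(\mathcal{C}(w))=w+\varepsilon_c\,\mathrm{sign}(w)$. Each event pushes $\bar w$ outward by $\Theta(\gamma_t\varepsilon_c)$ and pulls it in by $\Theta(\gamma_t\eta LE|\bar w|)$, so $|\nabla F(\bar w^t)|^2$ settles at $\Theta(\varepsilon_c^2/(\eta E)^2)$. Meanwhile, with $W,G$ just large enough (so $D_\Delta=O(\varepsilon_c^2)$), every non-vanishing term on the stated right-hand side is $O(\varepsilon_c^2)$ with $\eta$-independent constants. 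This assumes, as Lemma~\ref{lem:consensus-recursion} indicates, that $C_1,C_2,C_3,\rho$ do not depend on $\eta$. So no rearrangement yields the displayed $\eta$-independent compression coefficient.

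The paper's proof follows the same skeleton as yours and covers exactly this mismatch with ``absorbing fixed numerical factors into the explicit coefficients.'' You are therefore not missing an idea the paper supplies. Your handling of the trajectory term is in fact more careful than the paper's: you use truncated Cauchy--Schwarz for $\|w_{i,e}-w_{i,0}\|_2^2$ and keep the stale and compressed remainders from Step~1 of Lemma~\ref{lem:consensus-recursion}. The paper silently drops the anchor terms of \eqref{eq:traj-tight-in-lem6} and those remainders. What this route honestly yields is the bound with $1/\underline{\gamma}$ on the noise term and extra factors $1/(\eta E)$ and $1/(\eta E)^2$ on the drift and compression terms, respectively.
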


\begin{proof}
We proceed by combining the descent inequality for $F(\bar w^t)$ with the consensus recursion.

\paragraph{Step 1: one-event descent.}
By Lemma~\ref{lem:descent} (tight form, after bounding the trajectory term inside its proof), there exist absolute
constants $\alpha_0,\alpha_1,\alpha_2,\alpha_3>0$ such that for each event $t$ with activated node $i_t$,
\begin{align}
\mathbb{E}\!\left[
F(\bar w^{t+1}) \mid \mathcal{F}_t
\right]
\;\le\;&
F(\bar w^t)
-
\alpha_0\,\eta\gamma_t E\,
\|\nabla F(\bar w^t)\|_2^2
\nonumber\\
&\;+\;
\alpha_1\,\eta\gamma_t E\,
(\sigma^2+\zeta^2)
\nonumber\\
&\;+\;
\alpha_2\,\eta\gamma_t L^2\,E\,
\|w_{i_t}^{t+\frac13}-\bar w^t\|_2^2
\nonumber\\
&\;+\;
\frac{L}{2}\gamma_t^2\,
\mathbb{E}\!\left[
\|\Delta_{i_t}^t\|_2^2
\mid\mathcal{F}_t
\right]
\nonumber\\
&\;+\;
\alpha_3\,\varepsilon_c^2 .
\label{eq:main-proof-descent}
\end{align}

Moreover, Lemma~\ref{lem:numerator-perturbation} implies the broadcast perturbation contributes at most
$2L\overline{c}^{\,2}\varepsilon_c^2$ after taking expectations, which is absorbed into the $\alpha_3\varepsilon_c^2$ term.

\paragraph{Step 2: explicit drift bound.}
By Lemma~\ref{lem:drift-bound} and Lemma~\ref{lem:prox-suppress-drift}, for each event $t$,
\begin{equation}
\mathbb{E}\big[\|\Delta_{i_t}^t\|_2^2\big]
\le
2E^2\eta^2\Big(G^2 + 4\lambda^2\,\sup_{e}\mathbb{E}\|w_{i_t,e}-\tilde w_{i_t}^t\|_2^2\Big).
\label{eq:drift-bound-mid}
\end{equation}
Under Assumption~\ref{ass:bounded-iterates}, $\|w_{i_t,e}-\tilde w_{i_t}^t\|^2\le 4W^2$ for all $e$, hence
\begin{equation}
\mathbb{E}\big[\|\Delta_{i_t}^t\|_2^2\big]
\le
2E^2\eta^2\Big(G^2 + 16\lambda^2W^2\Big)
\;\triangleq\;
D_\Delta .
\label{eq:drift-closed}
\end{equation}
Using $\gamma_t\le 1$ and taking full expectation in~\eqref{eq:main-proof-descent} yields
\begin{align}
\alpha_0\,\eta E\,
\mathbb{E}\!\left[
\gamma_t\|\nabla F(\bar w^t)\|_2^2
\right]
\;\le\;&
\mathbb{E}\!\left[
F(\bar w^t)-F(\bar w^{t+1})
\right]
\nonumber\\
&\;+\;
\alpha_1\,\eta E\,(\sigma^2+\zeta^2)
\nonumber\\
&\;+\;
\alpha_2\,\eta L^2E\,
\mathbb{E}\!\left[
\|w_{i_t}^{t+\frac13}-\bar w^t\|_2^2
\right]
\nonumber\\
&\;+\;
\frac{L}{2}\,D_\Delta
\;+\;
\alpha_3\,\varepsilon_c^2 .
\label{eq:descent-ready}
\end{align}

\paragraph{Step 3: summing the consensus-related terms.}
Since $\|w_{i_t}^{t+\frac13}-\bar w^t\|^2 \le N\,\mathcal{E}_{\mathrm{con}}^t$ up to an absolute factor,
it suffices to control $\sum_t \mathbb{E}[\mathcal{E}_{\mathrm{con}}^t]$.
Lemma~\ref{lem:consensus-recursion} and~\eqref{eq:drift-closed} imply
\begin{equation}
\mathbb{E}[\mathcal{E}_{\mathrm{con}}^{t+1}]
\le
\rho\,\mathbb{E}[\mathcal{E}_{\mathrm{con}}^{t}]
+
\Big(C_1 + C_2\tau^2\Big)D_\Delta
+
C_3\varepsilon_c^2
\;\triangleq\;
\rho\,\mathbb{E}[\mathcal{E}_{\mathrm{con}}^{t}] + B_{\mathrm{con}} .
\label{eq:con-rec-closed}
\end{equation}
Summing~\eqref{eq:con-rec-closed} over $t=0,\dots,T-1$ and using $\sum_{t=0}^{T-1}\rho^t\le \frac{1}{1-\rho}$ yields
\begin{equation}
\sum_{t=0}^{T-1}\mathbb{E}[\mathcal{E}_{\mathrm{con}}^{t}]
\le
\frac{\mathbb{E}[\mathcal{E}_{\mathrm{con}}^{0}]}{1-\rho}
+
\frac{T\,B_{\mathrm{con}}}{1-\rho}.
\label{eq:sum-con-closed}
\end{equation}

\paragraph{Step 4: telescoping and using $\gamma_t\ge \underline{\gamma}$.}
Summing~\eqref{eq:descent-ready} over $t=0,\dots,T-1$ telescopes the objective values:
\[
\sum_{t=0}^{T-1}\mathbb{E}\!\left[F(\bar w^t)-F(\bar w^{t+1})\right]
=
F(\bar w^0)-\mathbb{E}[F(\bar w^T)]
\le
F(\bar w^0)-F^\star.
\]
By Assumption~\ref{ass:mass-bounded}, $\gamma_t\ge \underline{\gamma}$, hence
\[
\sum_{t=0}^{T-1}\mathbb{E}\|\nabla F(\bar w^t)\|^2
\le
\frac{1}{\underline{\gamma}}
\sum_{t=0}^{T-1}\mathbb{E}\!\left[\gamma_t\|\nabla F(\bar w^t)\|^2\right].
\]
Plugging~\eqref{eq:sum-con-closed} into the sum of~\eqref{eq:descent-ready}, dividing by
$\alpha_0\eta E\,\underline{\gamma}\,T$ and absorbing fixed numerical factors into the explicit coefficients
yields~\eqref{eq:main-closed-1}-\eqref{eq:main-closed-2} with $D_\Delta$ and $B_{\mathrm{con}}$ given in
\eqref{eq:Ddelta-closed}-\eqref{eq:Bcon-closed}.
\end{proof}

\section{Supplementary Experiments}
\label{app:supplementary_experiments}
\subsection{Accuracy Curves}
\label{app:learn_curves}
To provide a more complete view of the learning dynamics beyond the final averaged accuracy, we report accuracy curves for all datasets. Specifically, we include (i) \emph{global} test accuracy trajectories across communication rounds and (ii) accuracy trajectories of \emph{delayed clients} measured on a pseudo-time axis that reflects asynchronous progress. These curves complement the main results by visualizing convergence speed, stability and robustness under system asynchrony.

\subsubsection{Global Accuracy Curves}
\label{app:global_learn_curves}
This subsection reports the evolution of global test accuracy across communication rounds.
Figures~\ref{fig:app:cifar10_global_acc}-\ref{fig:app:tiny_global_acc} show the global accuracy curves on CIFAR-10, CIFAR-100 and Tiny-ImageNet, respectively.
These curves illustrate the overall convergence behavior of different methods, including their convergence speed and final performance, under identical experimental settings.

\begin{figure}[t]
    \centering
    \begin{subfigure}[t]{0.45\linewidth}
        \centering
        \includegraphics[width=\linewidth]{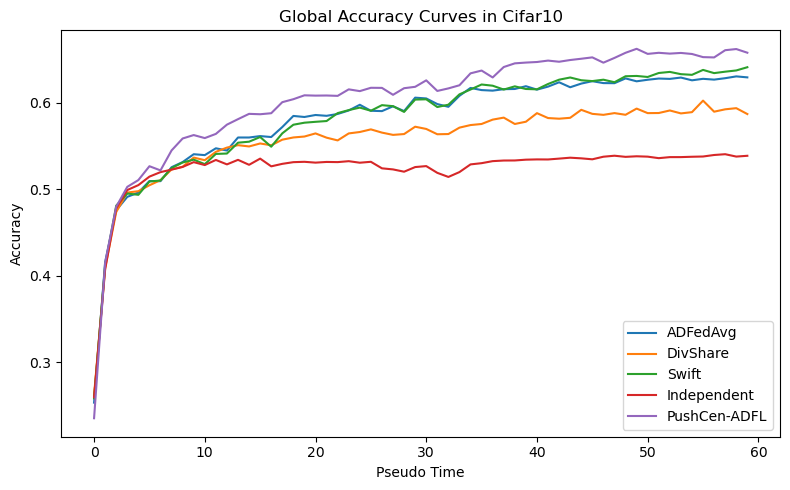}
        \Description{Global test accuracy curves on CIFAR-10.}
        \caption{CIFAR-10}
        \label{fig:app:cifar10_global_acc}
    \end{subfigure}
    \begin{subfigure}[t]{0.45\linewidth}
        \centering
        \includegraphics[width=\linewidth]{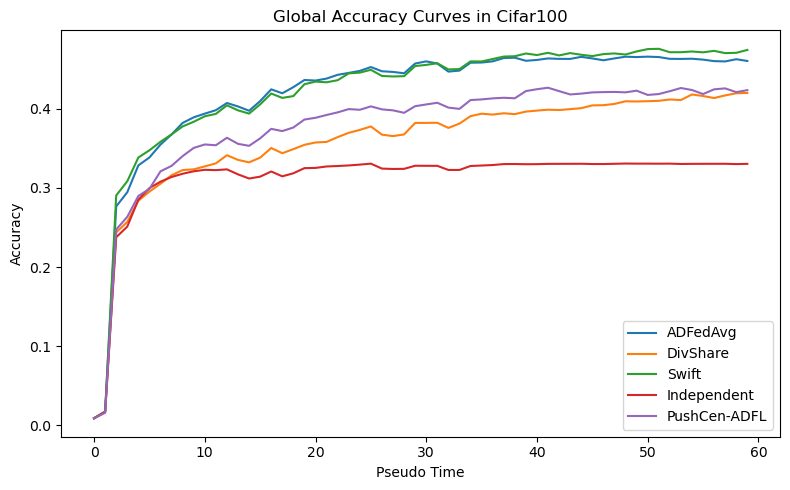}
        \Description{Global test accuracy curves on CIFAR-100.}
        \caption{CIFAR-100}
        \label{fig:app:cifar100_global_acc}
    \end{subfigure}

    \vspace{0.6em}

    \begin{subfigure}[t]{0.45\linewidth}
        \centering
        \includegraphics[width=\linewidth]{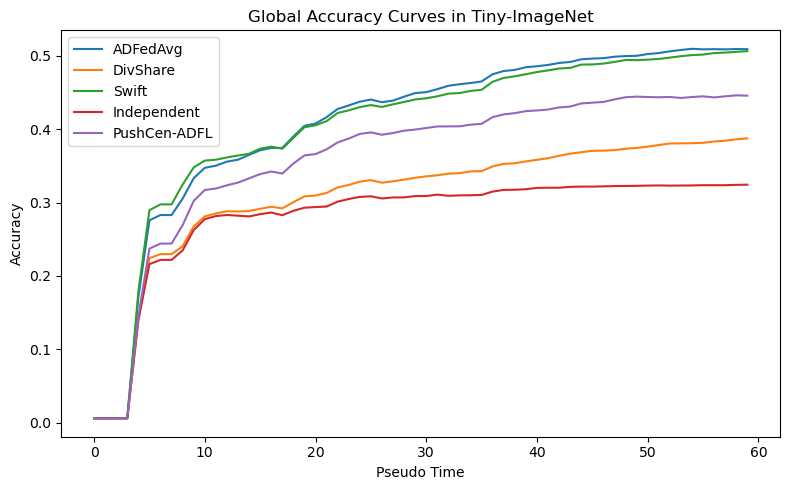}
        \Description{Global test accuracy curves on Tiny-ImageNet.}
        \caption{Tiny-ImageNet}
        \label{fig:app:tiny_global_acc}
    \end{subfigure}

    \caption{Global test accuracy curves on CIFAR-10, CIFAR-100, and Tiny-ImageNet.}
    \label{fig:app:global_acc_all}
\end{figure}

\subsubsection{Delayed Client Accuracy Curves}
\label{app:delayed_client_learn_curves}
This subsection presents accuracy curves for delayed clients, evaluated along a pseudo-time axis that reflects asynchronous local progress.
Figures~\ref{fig:app:cifar10_delayed_avg_acc}, \ref{fig:app:cifar100_delayed_avg_acc} and \ref{fig:app:tiny_delayed_avg_acc} report the average test accuracy across delayed clients on CIFAR-10, CIFAR-100 and Tiny-ImageNet, respectively.
In addition, Figures~\ref{fig:app:cifar10_delayed_clients}, \ref{fig:app:cifar100_delayed_clients} and \ref{fig:app:tiny_delayed_clients} visualize the accuracy trajectories of representative individual delayed clients, providing a fine-grained view of training stability under client delays.

\begin{figure*}
    \centering
    \includegraphics[width=0.80\linewidth]{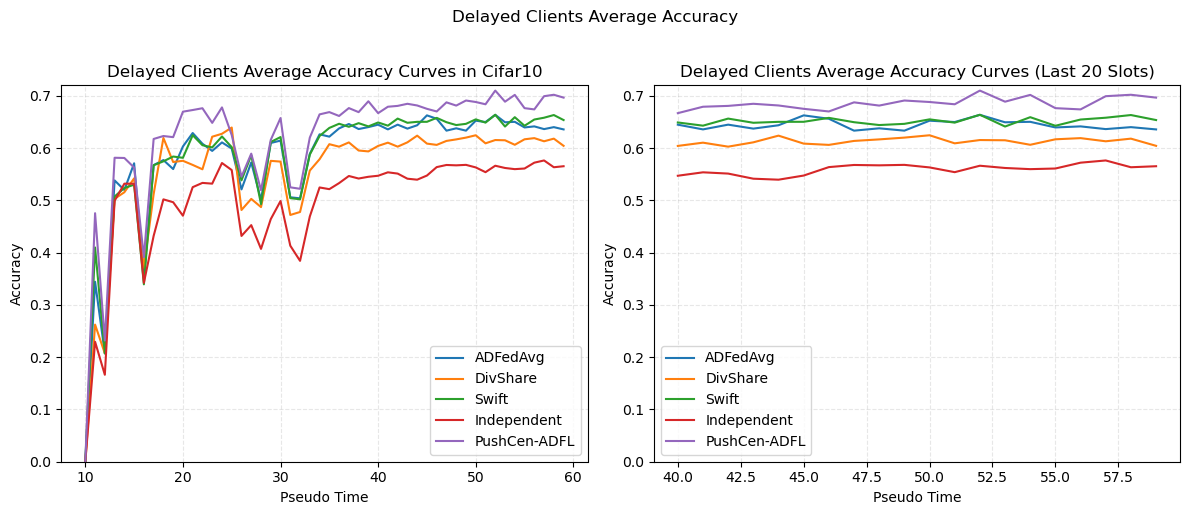}
    \Description{Average test accuracy of delayed clients on Cifar10.}
    \caption{Average test accuracy of delayed clients on Cifar10.}
    \label{fig:app:cifar10_delayed_avg_acc}
\end{figure*}

\begin{figure*}
    \centering
    \includegraphics[width=0.80\textwidth]{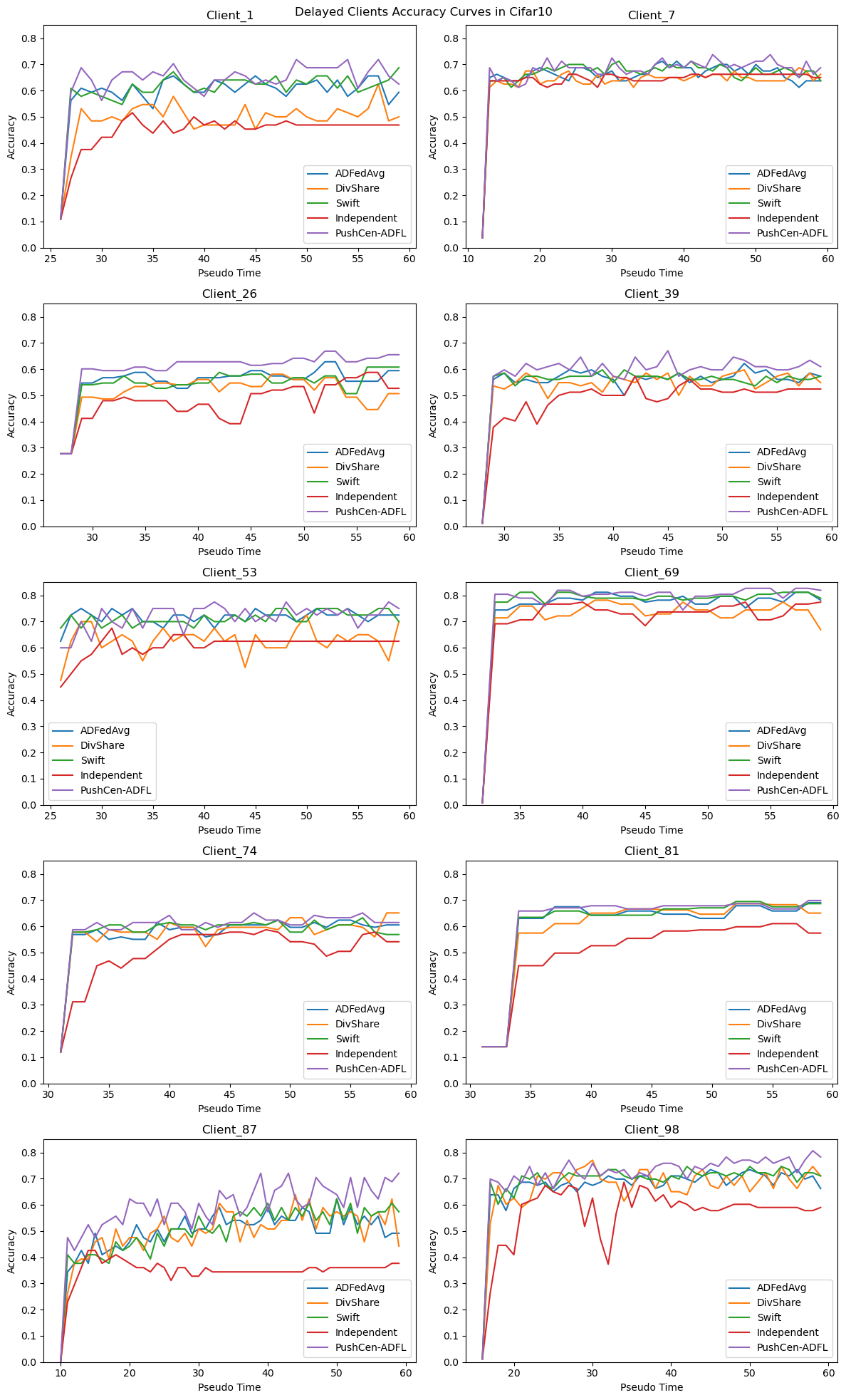}
    \Description{Accuracy curves of representative delayed clients on CIFAR-10 (pseudo-time axis).}
    \caption{Accuracy curves of representative delayed clients on CIFAR-10 (pseudo-time axis).}
    \label{fig:app:cifar10_delayed_clients}
\end{figure*}

\begin{figure*}
    \centering
    \includegraphics[width=0.80\linewidth]{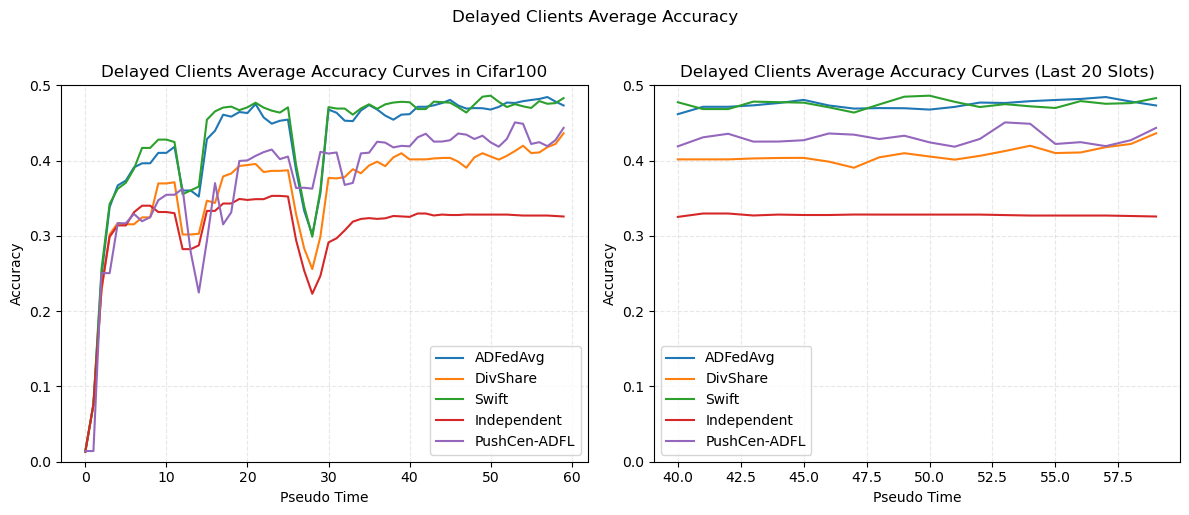}
    \Description{Average test accuracy of delayed clients on Cifar100.}
    \caption{Average test accuracy of delayed clients on Cifar100.}
    \label{fig:app:cifar100_delayed_avg_acc}
\end{figure*}

\begin{figure*}
    \centering
    \includegraphics[width=0.80\textwidth]{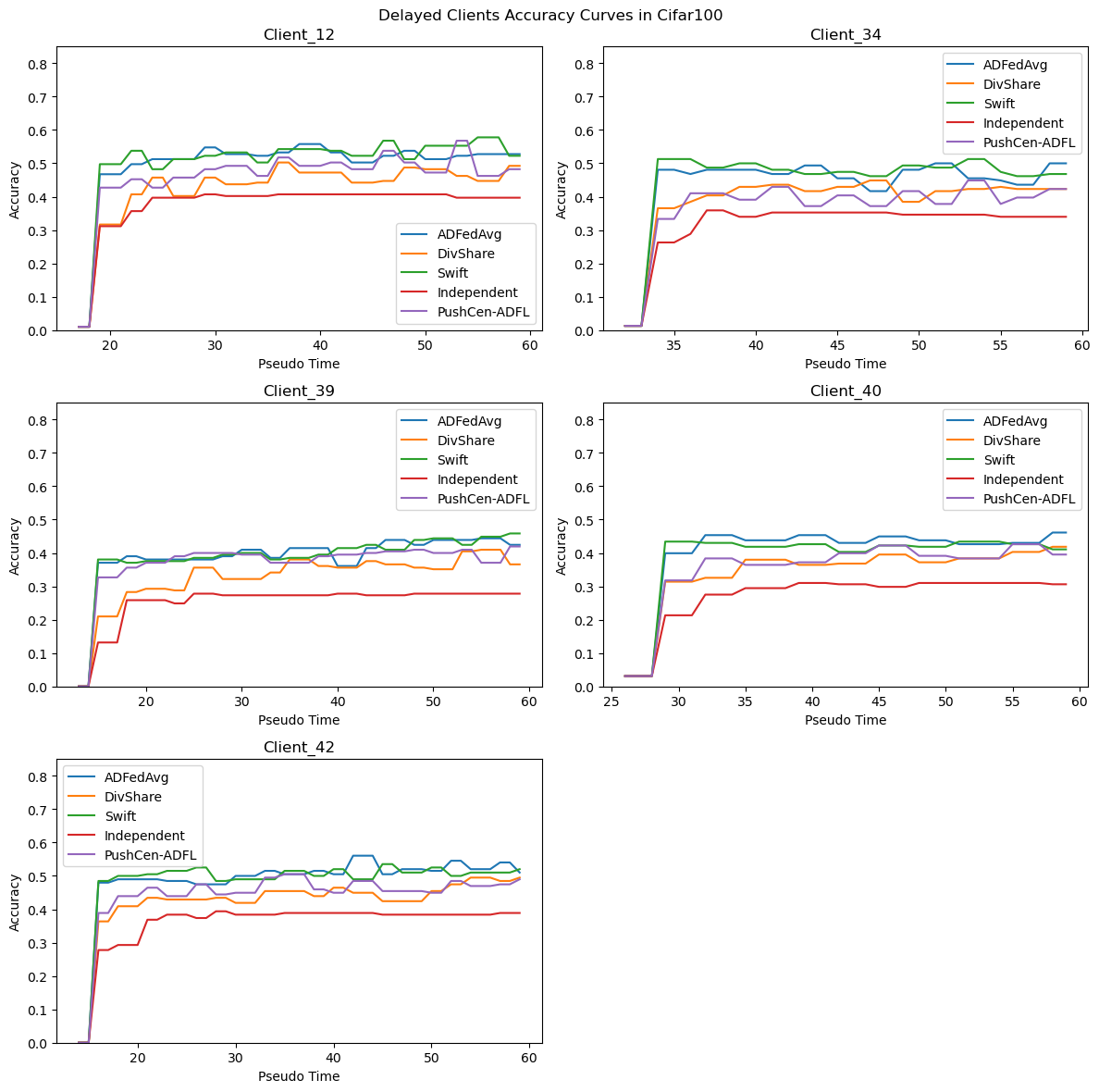}
    \Description{Accuracy curves of representative delayed clients on CIFAR-100.}
    \caption{Accuracy curves of representative delayed clients on CIFAR-100.}
    \label{fig:app:cifar100_delayed_clients}
\end{figure*}

\begin{figure*}
    \centering
    \includegraphics[width=0.80\linewidth]{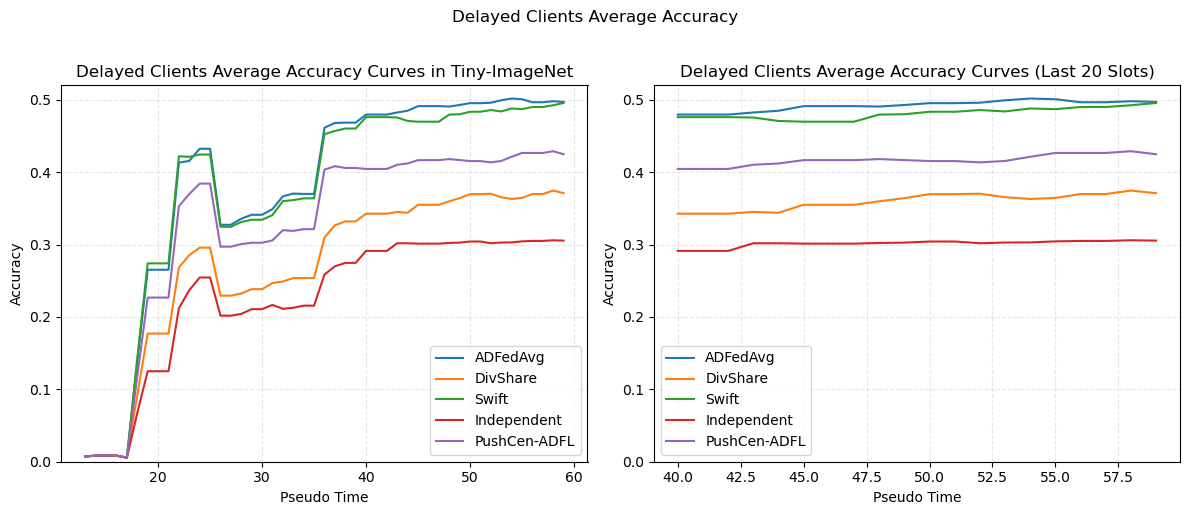}
    \Description{Average test accuracy of delayed clients on Tiny-ImageNet.}
    \caption{Average test accuracy of delayed clients on Tiny-ImageNet.}
    \label{fig:app:tiny_delayed_avg_acc}
\end{figure*}

\begin{figure*}
    \centering
    \includegraphics[width=0.80\textwidth]{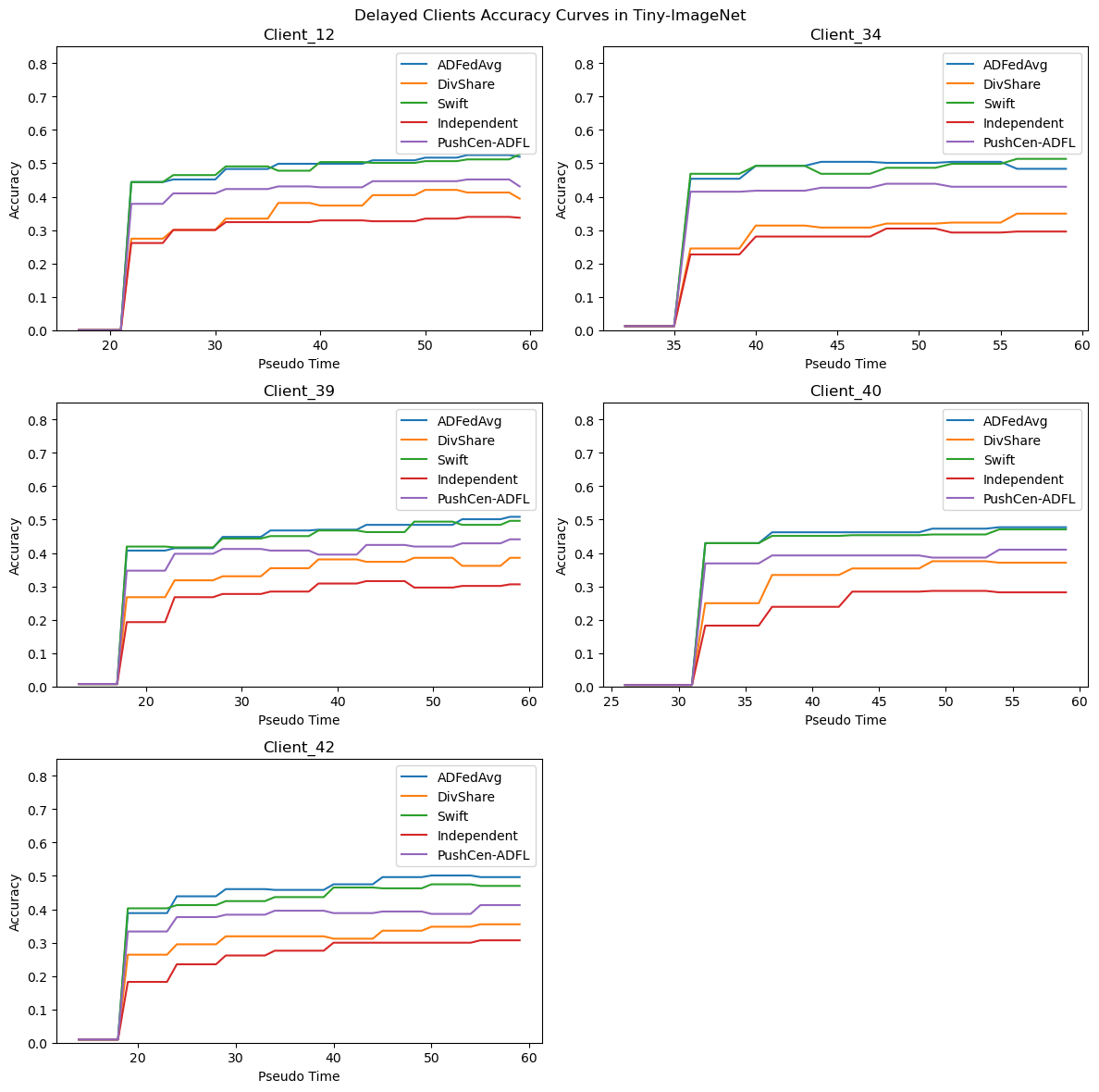}
    \Description{Accuracy curves of representative delayed clients on Tiny-ImageNet.}
    \caption{Accuracy curves of representative delayed clients on Tiny-ImageNet.}
    \label{fig:app:tiny_delayed_clients}
\end{figure*}

\end{document}